  \providecommand\BibTeX{{%
    \normalfont B\kern-0.5em{\scshape i\kern-0.25em b}\kern-0.8em\TeX}}}
\renewcommand\footnotetextcopyrightpermission[1]{}
\newtheorem{assumption}{Assumption}
\newtheorem{theorem}{Theorem}
\newtheorem{lemma}{Lemma}
\newtheorem{corollary}{Corollary}
\theoremstyle{definition}
\newtheorem{example}{Example}
\newcommand{\D}{\mathcal{D}}
\newcommand{\R}{\mathbb{R}}
\newcommand{\E}{\mathbb{E}}
\newcommand{\B}{\mathcal{B}}
\newcommand{\Bh}{\hat{\mathcal{B}}}
\newcommand{\CC}{\mathcal{C}}
\newcommand{\Ch}{\hat{\mathcal{C}}}
\newcommand{\x}{\mathbf{x}}
\newcommand{\e}{\mathbf{e}}
\newcommand{\uu}{\mathbf{u}}
\newcommand{\Vv}{\mathbf{v}}
\newcommand{\N}{\mathbb{N}}
\newcommand{\g}{\mathbf{g}}
\newcommand{\f}{\hat{f}}
\newcommand{\ze}{\mathbf{0}}
\newcommand{\Tht}{\hat{T}^{t}}
\newcommand{\Tt}{T^{t}}
\newcommand{\Tts}{T^{t - 1}}
\newcommand{\Tit}{\Tt_i}
\newcommand{\Tits}{\Tts_i}
\newcommand{\Tpt}{\Tt_p}
\newcommand{\Tqt}{\Tt_q}
\newcommand{\Tonet}{\Tt_1}
\newcommand{\Ttwot}{\Tt_2}
\newcommand{\Tjtau}{T_j^{\tau}}
\newcommand{\xrC}{\x^{rC}}
\newcommand{\xTonet}{\x^{\Tonet}}
\newcommand{\xTtwot}{\x^{\Ttwot}}
\newcommand{\xt}{\x^{t}}
\newcommand{\xit}{\xt_i}
\newcommand{\xts}{\x^{t-1}}
\newcommand{\xits}{\xts_i}
\newcommand{\uir}{\uu_i^{r}}
\newcommand{\uirt}{\uu_i^{r^t}}
\newcommand{\ur}{\uu^{r}}
\newcommand{\Rir}{R_i^{r}}
\newcommand{\Rirt}{R_i^{r^t}}
\newcommand{\Rirts}{R_i^{r^t - 1}}
\newcommand{\Rirs}{R_i^{r - 1}}
\newcommand{\uiRirs}{\uu_i^{\Rirs}}
\newcommand{\git}{\g^{t  }_i}
\newcommand{\gt}{\g^{ t  }}
\newcommand{\giTit}{\g^{  \Tit  }_i}
\newcommand{\giTits}{\g^{  \Tits  }_i}
\newcommand{\gqTqt}{\g^{  \Tqt  }_q}
\newcommand{\gpTpt}{\g^{  \Tpt  }_p}
\newcommand{\gqTpt}{\g^{  \Tpt  }_q}
\newcommand{\gjTjtau}{\g^{  \Tjtau  }_j}
\newcommand{\intermediateForUpdate}{\g_j^{R_j^{r_{\tau}} C - r_{\tau} C + \tau }}
\newcommand{\giTittau}{\g^{  \Rirt C  - r^t C + t  }_i}
\newcommand{\gittau}{\g^{ t  }_i}
\newcommand{\fxrCs}{f ( \x^{ rC - 1} )}
\newcommand{\fxrtCs}{f ( \x^{  r^tC - 1 } )}
\newcommand{\fixTitS}{f_i ( \x^{ \Tit-1  } ) }
\newcommand{\fpxTptS}{f_p ( \x^{  \Tpt-1  } ) }
\newcommand{\fqxTqtS}{f_q ( \x^{  \Tqt-1  } ) }
\newcommand{\fqxTptS}{f_q ( \x^{  \Tpt-1  } ) }
\newcommand{\fxts}{f ( \x^{  t-1  } ) }
\newcommand{\Fxts}{\f ( \x^{  t - 1  } ) }
\newcommand{\fxt}{f ( \x^{  t  } ) }
\newcommand{\fxTtS}{f ( \x^{  \Tt - 1  } ) }
\newcommand{\fxThtS}{f ( \x^{  \Tht - 1  } ) }
\newcommand{\fxThtauS}{f ( \x^{  \hat{T}^{\tau } - 1  } ) }
\newcommand{\fixTtS}{f_i ( \x^{  \Tt - 1  } ) }
\newcommand{\fixts}{f_i ( \x^{  t-1  } ) }
\newcommand{\fixttaus}{f_i ( \x_i^{  t - 1 } ) }
\newcommand{\fixTittaus}{f_i ( \x_i^{  \Rirt C - r^t C + t - 1  } )}
\newcommand{\XiTitS}{\xi^{\left[\Tit-1 \right]}}
\newcommand{\XiTtS}{\xi^{\left[\Tt - 1\right]}}
\newcommand{\equuvv}{$\langle\uu,\vv\rangle=\frac{1}{2}(\parallel \uu\parallel ^2+\parallel \vv\parallel ^2-\parallel \uu-\vv\parallel ^2 )$}
\newcommand{\neqCSAMGM}{Cauchy–Schwarz inequality and AM-GM inequality}
\newcommand{\convex}{the convexity of $\parallel \cdot\parallel ^2$}
\newcommand{\LTE}{Law of Total Expectation $\E[\E[\mathbf{X}|\mathbf{Y}]]=\E[\mathbf{X}]$}
\newcommand{\NK}{\left\lceil \frac{N}{K} \right\rceil}
\newcommand{\newl}{\nonumber\\&&}
\newcommand{\new}{\nonumber\\}
\newcommand{\myappendix}[1]{Appendix~\ref{#1}}
\begin{document}

\title{Distributed Non-Convex Optimization with Sublinear Speedup under Intermittent Client Availability }

\author{Yikai Yan$^1$, Chaoyue Niu$^1$, Yucheng Ding$^1$, Zhenzhe Zheng$^1$, Fan Wu$^1$, Guihai Chen$^1$, \\
Shaojie Tang$^2$, and Zhihua Wu$^3$}
\affiliation{
  \institution{
    $^1$Shanghai Jiao Tong University, China\quad
    $^2$University of Texas at Dallas, USA\quad
    $^3$Alibaba Group, China}
}

\renewcommand{\shortauthors}{Yan, et al.}

\begin{abstract}
  Federated learning is a new distributed machine learning framework, 
  where a bunch of heterogeneous clients collaboratively train a model 
  without sharing training data. 
  In this work, we consider a practical and ubiquitous issue 
  when deploying federated learning in mobile environments: 
  \emph{intermittent client availability}, 
  where the set of eligible clients may change during the training process. 
  Such intermittent client availability would seriously deteriorate 
  the performance of the classical Federated Averaging algorithm 
  (FedAvg for short). 
  Thus, we propose a simple distributed non-convex optimization algorithm, 
  called Federated Latest Averaging (FedLaAvg for short), 
  which leverages the latest gradients of all clients, 
  even when the clients are not available, 
  to jointly update the global model in each iteration. 
  Our theoretical analysis shows that FedLaAvg attains the 
  convergence rate of $O(E^{1/2}/(N^{1/4} T^{1/2}))$, 
  achieving a sublinear speedup with respect to 
  the total number of clients. 
  We implement FedLaAvg along with several baselines and evaluate them over 
  the benchmarking MNIST and Sentiment140 datasets. 
  The evaluation results demonstrate that FedLaAvg achieves more stable 
  training than FedAvg in both convex and non-convex settings
  and indeed reaches a sublinear speedup.
\end{abstract}

\maketitle

\section{Introduction}
\label{sec:introduction}
Federated Learning (FL) is a new paradigm of distributed 
machine learning~\cite{
 survey1,survey2,McMahan2016CommunicationEfficientLO}. 
It allows multiple clients to collaboratively train a global model without
needing to upload local data to a centralized cloud server. In the FL setting,
data are massively distributed over clients, with non-IID distribution~\cite{KevinCorr2019Noniid, FedProx} 
and unbalance in quantity~\cite{MehryarIcml2019Agnos};
in these ways, FL is distinguished from traditional 
distributed optimization~\cite{
  MuOsdi2014Distributed}. 
Furthermore, the agents participating in FL are typically 
unreliable heterogeneous clients, e.g., mobile devices,
with limited computation resources and 
unstable communication links~\cite{
  net-fog,
  AAAIChuanHete,
  var-reduct-robust,
  robust3,
  FedNova}, 
resulting in a varying set of eligible clients 
during the training process.  
These new features pose challenges in designing 
and analyzing learning algorithms for FL.

One of the leading challenges in deploying FL systems is
client availability, where the clients may not be available throughout
the entire training process. 
Consider the typical FL scenario where Google's mobile keyboard Gboard 
polishes its language models among numerous mobile-device 
users~\cite{Bonawitz2019TowardsFL, Yang2018APPLIEDFL, CoRR:Gboard}.
To minimize the negative impact on user experience, only devices 
that meet certain requirements (e.g., charging, idle, and free Wi-Fi) 
are eligible for model training. 
These requirements are usually met at night local time 
but are not satisfied in the daytime when the devices are busy.
Such intermittent client availability would introduce bias into 
training data.
In particular, the clients with longer time available 
are more likely to be selected to participate, 
and thus their training data would be over-represented. 
In contrast, the training data of the clients, who have 
shorter time available and lower chance to be chosen, 
may be under-represented. 
Further, if the free resources on local devices 
(e.g., CPU and RAM) are also incorporated, 
the availability patterns of different clients would be more diverse, 
implying that the data representations
are more differentiated in the collaborative training process.
Nevertheless, the test data distribution, 
which is irrelevant with client availability in the training phase, 
would be inconsistent with the training data distribution.
This inconsistency is also known as 
dataset shift~\cite{DSinML, DatasetshiftClassification}, 
a notorious obstacle to the convergence of 
machine learning algorithms~\cite{AdarshAistats2019Ds, JasperNips2019Ds}, 
which also exists in FL,
and can degrade the generalization ability of FL algorithms.


Existing work in the literature has not touched the issue of 
intermittent client availability\footnote{
	A concurrent work~\cite{flexible}, released roughly three months after our preprint~\cite{FedLaAvg},
	considered a different availability setting,
	where some clients submit partially completed work or drop out occasionally.
	They proposed to kick out frequently dropped clients, 
	which, however, cannot eliminate the training data bias under intermittent client availability.
	We reserve the divergence analysis of their method in 
	\myappendix{sec:comp flexible}.
}, and the convergence analysis 
of FL algorithms always requires all the clients to be 
available throughout the training process. 
In this case, there is no bias in the training data, which is an 
essential condition to obtain the 
positive convergence results.
Much effort~\cite{Wang2018CooperativeSA,Yu2018ParallelRS,Khaled2019FirstAO,
Stich2019LocalSC,SebastianEFSGD,Li2019OnTC,SCAFFOLD,
power-of-choice}
has been expended in proving the convergence of the classical 
FedAvg algorithm~\cite{McMahan2016CommunicationEfficientLO}.
One line of work~\cite{Wang2018CooperativeSA,
Yu2018ParallelRS,Khaled2019FirstAO,Stich2019LocalSC,SebastianEFSGD} 
assumed that \textbf{all the clients are available and participate
in each iteration of the training}, 
to establish the $O(1/\sqrt{NT})$\footnote{Notation $N$ is the total number of clients, 
and $T$ is the total number of iterations in the training.} 
convergence of FedAvg.
However, the requirement of full client participation would 
significantly increase the synchronization latency of 
the collaborative training process, and is hard to be satisfied 
in practical cross-device FL scenarios.
Another line of work~\cite{Li2019OnTC, SCAFFOLD,power-of-choice}
\textbf{allowed partial client participation but required 
all the clients to be available},
to proved an $O(1/T)$ convergence of FedAvg.
In their analysis, clients are selected either uniformly at random~\cite{Li2019OnTC, SCAFFOLD} or 
according to a certain strategy~\cite{power-of-choice},
which are, however, possible only if all clients are available.

In this work, we integrate the consideration of 
intermittent client availability into the design and
analysis of the FL algorithm. 
We first formulate a practical model for 
intermittent client availability in FL; 
this model allows the set of available clients to 
follow any time-varying distribution, 
with the assumption that each client needs to be available 
at least once during any period with length $E$.
Under such a client availability model, 
FedAvg would diverge even in a simple learning scenario with a quadratic objective
(shown in Section~\ref{sec:divergence}), 
because the training data are biased towards 
those highly available clients.
For general distributed non-convex optimization, 
we propose a simple Federated Latest Averaging algorithm, 
namely FedLaAvg, to approximately balance the influence 
of each client's data on the global model training.
Specifically, instead of averaging only the gradients 
collected from participating clients, FedLaAvg averages 
the latest gradients\footnote{The latest gradient of a given client is the 
gradient calculated in its latest participating iteration. 
Please refer to Section~\ref{sec:FedLaAvg} for detailed definition.} 
of all clients. 
By setting appropriate parameters, 
we can prove an $O(E^{1/2}/(N^{1/4}T^{1/2}))$ convergence for FedLaAvg,
implying that FedLaAvg can achieve a 
sublinear speedup with respect to the total number of clients.
We summarize our contributions as follows.
\begin{itemize}
	\item 
	To the best of our knowledge, 
	we are the first to study the problem of 
	intermittent client availability in FL, 
	an ubiquitous phenomenon in practical mobile environments,
	and present a formal formulation thereof.
	\item
	We demonstrate that even with exact (not stochastic) gradient descent,
	two clients in the system,
	one local iteration on either client,
	and a simple quadratic (convex) optimization objective,
	FedAvg can diverge due to intermittent client availability.
	\item
	We identify the reasons behind the divergence of FedAvg and further 
	propose a convergent algorithm FedLaAvg, which aggregates the latest gradients of all clients in each training iteration. 
	Our theoretical analysis shows the  $O(E^{1/2}/(N^{1/4}T^{1/2}))$ convergence of FedLaAvg for general distributed non-convex optimization.
	\item 
	Using the public MNIST and Sentiment140 datasets, 
	we evaluate FedLaAvg and compare 
	its performance with FedSGD, FedAvg, and FedProx~\cite{FedProx}. 
	The evaluation results validate the 
	superiority of our FedLaAvg in terms of  
	more smooth training process, 
	sublinear speedup,
	and lower training loss.
\end{itemize}

\section{Problem Formulation}
\label{sec:preliminary}
We consider a general distributed non-convex optimization scenario in which $N$ clients collaboratively solve the following consensus
optimization problem:
\[
\min_{\x\in \R^m} f(\x)
\triangleq 
\sum_{i=1}^{N}w_i \E_{\xi_i\sim \D_i}[F(\x;\xi_i)]
=\sum_{i=1}^{N}w_i \tilde{f_i}(\x).
\]
Each client $i$ holds training data $\xi_i \sim \D_i$, and $w_i$ is the weight of this client (typically the proportion of client $i$'s local data volume in the total data volume of the FL system~\cite{McMahan2016CommunicationEfficientLO}). 
Function $F(\x;\xi_i)$ is the training error of model parameters $\x$ over local data $\xi_i$, and $\tilde{f_i}(\x)$ is the local generalization error, taking expectation over the randomness of local data. 
In iteration\footnote{Since our major focus is client availability, 
for the sake of conciseness, 
we first consider the training scenario where participating clients perform only one local iteration, 
and extend our results to the multiple-local-iteration scenario in Section~\ref{sec:multiround}.}
$t$, participating client $i$ observes the local stochastic gradient:
$\git=\nabla F(\x^{t-1};\xi_i^{t})$,
where $\x^{t-1}$ is the model parameters from the previous iteration and $\xi_i^{t}$ is the local training data in this iteration. We note  $\E\left[\git\mid\xi^{[t-1]}\right]=\nabla \tilde{f_i}(\x^{t-1})$,
where $\xi^{[t-1]}$ is the historical training data from all clients before iteration $t$:
$
\xi^{[t-1]}
\triangleq \{\xi_i^\tau|i\in\{1,2,\cdots,N\},\tau\in\{1,2,\cdots,t-1\}\}.
$
To simplify the analysis of unbalanced data volume among clients, we use a scaling technique to obtain a revised local objective function:
$
f_i(\x) = w_i N \tilde{f_i}(\x).
$
Then, we can rewrite the global objective function as
$
f(\x) = \frac{1}{N} \sum_{i=1}^{N} f_i(\x).
$

In this study, we make three assumptions regarding the objective funtions as follows.
\begin{assumption}\label{as:1}
	Local objective functions $f_i$ are all $L$-smooth: 
	$
	\lVert \nabla f_i(\uu)-\nabla f_i(\Vv)\rVert \leq L\lVert \uu-\Vv\rVert,\,\forall i,\,\uu,\,\Vv
	$.
	The corollary is 
	$f_i(\mathbf{v})\leq f_i(\mathbf{u})+\left\langle\mathbf{v}-\mathbf{u},\nabla f_i(\mathbf{u})\right\rangle+\frac{L}{2}\parallel \mathbf{v}-\mathbf{u}\parallel^2,\,\forall i,\,\uu,\,\Vv$.
\end{assumption}
\begin{assumption}\label{as:2}
	Bounded variance: with variance $\sigma>0$, $ \forall i,\,\x $,
	$
	\E_{\xi_i\sim \D_i}\left[\left\lVert \nabla F\left(\x;\xi_i\right)-\nabla f_i\left(\x\right)\right\rVert ^2\right]\leq \sigma^2.
	$
\end{assumption}
\begin{assumption}\label{as:3}
	Bounded gradient: with gradient norm $G>0$, $\forall i,\,\x$,
	$\E_{\xi_i\sim \D_i}\left[\parallel \nabla F\left(\x;\xi_i\right)\parallel ^2\right]\leq G^2$.
	\end{assumption}

To model intermittent client availability, we use $\CC^t$ to denote the set of available clients in iteration $t$. 
We formally introduce Assumption~\ref{as:4} regarding the intermittent client availability model in FL.
\begin{assumption}\label{as:4}
	Minimal availability: each client $i$ is available at least once in any period with $E$ successive iterations:
	$
	\forall i,\,\forall t,\,\exists \tau \in \{t, t+1,\cdots,t+E-1\},\text{ such that } i\in \CC^\tau.
	$
\end{assumption}

Assumption \ref{as:1} is standard, and Assumptions \ref{as:2} and \ref{as:3} have also been widely made in the 
literature~\cite{Yu2018ParallelRS,Stich2019LocalSC,Li2019OnTC,Zhang2012CommunicationefficientAF,Stich2018SparsifiedSW,Yu2019OnTL}. 
Specifically, \citet{Yu2018ParallelRS} worked with non-convex functions under Assumptions~\ref{as:1}--\ref{as:3}, and required all clients to be available and to participate in each iteration.
Meanwhile, \citet{Li2019OnTC} focused on convex functions while imposing the same full client availability requirement.
The full client availability model in existing work is equivalent to the special case of our intermittent client availability model by setting $E=1$ in Assumption~\ref{as:4}.
Furthermore, Assumption~\ref{as:4} regarding the intermittent client availability model is reasonable in practical FL.
For example, as discussed earlier, clients are typically available at night, and thus Assumption~\ref{as:4} with $E$ equal to the number of iterations in one day can describe such a client availability scenario. 

\section{Algorithm Design}
\label{sec:design}
If the ideal full client availability is guaranteed, 
FedAvg with only one local iteration is equivalent to the classical 
mini-batch SGD and can converge.
However, under the practical intermittent client availability model, 
the equivalence does not hold, and we show that FedAvg would 
produce arbitrarily poor results, 
even if only one local iteration is performed.
We further investigate the underlying reasons for the divergence of FedAvg, and then propose a new convergent algorithm called FedLaAvg.

\subsection{Divergence of FedAvg}\label{sec:divergence}
\begin{example}\label{exp:mean estimation}
	We consider a distributed optimization problem with only two clients (denoted as $1$ and $2$) and a convex objective function.
	The goal is to learn the mean of one-dimensional data from these two clients.
	Following the problem formulation in Section~\ref{sec:preliminary}, the local data distribution 
	is $\xi_i \sim \D_i$ with mean $\e_i=\E\left[ \xi_i \right]$. For simplicity, 
	we assume the amounts of data from the two clients are balanced. 
	We can formulate this  learning problem as minimizing the mean square error (MSE):
	\begin{align}\label{eq:2worker}
	f(\x) 
	&= 
	\frac{1}{2}\sum_{i=1}^{2}f_i(\x)
	=
	\frac{1}{2}\sum_{i=1}^{2}\E_{\xi_i\sim \D_i} \left[(\x-\xi_i)^2\right] \new
	&=
	\frac{1}{2}\sum_{i=1}^{2}(\x-\e_i)^2 
	+
	\frac{1}{2}\sum_{i=1}^{2}\E_{\xi_i\sim \D_i}\left[(\xi_i-\e_i)^2\right].
	\nonumber
	\end{align}
	For this example, we consider a specific intermittent client availability model: 
	two clients are available periodically and alternately, i.e., 
	in each period, client 1 is available in the first $t_1$ iterations, 
	and client 2 is available in the following $t_2$ iterations. Let $k$ index the period; we then have
	\[1\in \CC^{k(t_1 + t_2) + i}, k\in \N, i\in\{1,2,\cdots,t_1\};\]
	\[2\in \CC^{k(t_1 + t_2) + i}, k\in \N, i\in\{t_1 + 1,t_1+2 \cdots, t_1+t_2\}.\]
	This model describes the client availability with a regular diurnal pattern, which has been widely observed in previous studies~\cite{Bonawitz2019TowardsFL,Yang2018APPLIEDFL,Eichner2019SemiCyclicSG}, and is a typical subcase of our intermittent client availability model. For example, clients around the world participate in FL at night. Clients 1 and 2 may correspond to clients from two different geographic regions, respectively.
\end{example}

\begin{theorem}\label{the:divergence}
	Suppose each client computes the exact (not stochastic) gradient. 
	In Example~\ref{exp:mean estimation}, even with a sufficiently low learning rate, 
	the model parameters returned by FedAvg at the end of each period, 
	i.e., $\x^{k(t_1+t_2)}$, would converge to \mbox{$(t_1\e_1+t_2\e_2)/(t_1+t_2)$}, 
	which can be arbitrarily far away from the optimal solution $\x^* = (\e_1 + \e_2)/2$.
\end{theorem}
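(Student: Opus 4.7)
The plan is to track the FedAvg iterate in closed form over one full period and then study the steady state across periods. With one local iteration under exact gradients, the local objective $f_i(\x) = \E[(\x-\xi_i)^2]$ has gradient $\nabla f_i(\x) = 2(\x - \e_i)$, so on an iteration where only client $i$ is available, FedAvg reduces to the affine map
\[
\x^{t} \;=\; \x^{t-1} - \eta \nabla f_i(\x^{t-1}) \;=\; \alpha\, \x^{t-1} + (1-\alpha)\,\e_i,\qquad \alpha := 1 - 2\eta.
\]
For $\eta\in(0,1/2)$ we have $\alpha\in(0,1)$, and iterating the map on client $i$ for $s$ steps from a point $y$ yields $\alpha^{s} y + (1-\alpha^{s})\e_i$.

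Next, I would apply this closed form over one full period of length $T := t_1 + t_2$. Letting $y_k := \x^{kT}$, the first $t_1$ steps contract toward $\e_1$ and the subsequent $t_2$ steps contract toward $\e_2$, giving
\[
y_{k+1} \;=\; \alpha^{T}\, y_k \;+\; \alpha^{t_2}\bigl(1-\alpha^{t_1}\bigr)\e_1 \;+\; \bigl(1-\alpha^{t_2}\bigr)\e_2.
\]
This is a scalar linear recurrence with contraction factor $\alpha^T\in(0,1)$, so $y_k$ converges to its unique fixed point
\[
y^{\star} \;=\; \frac{\alpha^{t_2}(1-\alpha^{t_1})\,\e_1 + (1-\alpha^{t_2})\,\e_2}{1-\alpha^{T}}.
\]

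Finally, I would take the small-learning-rate limit. Using $\alpha^{s} = 1 - 2\eta s + O(\eta^2)$ for fixed $s$, the numerator is $2\eta(t_1 \e_1 + t_2 \e_2) + O(\eta^2)$ and the denominator is $2\eta(t_1+t_2) + O(\eta^2)$, so
\[
\lim_{\eta\to 0^{+}} y^{\star} \;=\; \frac{t_1 \e_1 + t_2 \e_2}{t_1 + t_2}.
\]
Comparing with the true optimum $\x^{\star} = (\e_1+\e_2)/2$, the bias equals $\tfrac{t_1-t_2}{2(t_1+t_2)}(\e_1-\e_2)$, which can be made arbitrarily large by scaling $|\e_1-\e_2|$ (or by skewing the availability ratio $t_1/t_2$), completing the claim.

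The routine computations are all elementary, so the only subtle point is handling the order of limits cleanly: the theorem asserts that for every small enough $\eta$ the per-period iterates $\x^{kT}$ converge as $k\to\infty$ to a value that, in the $\eta\to 0^+$ limit, equals $(t_1\e_1+t_2\e_2)/(t_1+t_2)$. I expect the main bookkeeping obstacle to be verifying that the $O(\eta^2)$ remainders in the Taylor expansions of $\alpha^{t_1}, \alpha^{t_2}, \alpha^{T}$ vanish uniformly in $k$ when we send $k\to\infty$ before sending $\eta\to 0$; this is immediate once we use the explicit fixed-point formula for $y^\star$ rather than an infinite-sum expansion, which is why I would prefer to solve the recurrence exactly before taking any limits.
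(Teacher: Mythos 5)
Your proposal is correct and follows essentially the same route as the paper's proof: write each availability phase as an affine contraction toward $\e_i$, compose them into a per-period linear recurrence for $\x^{k(t_1+t_2)}$, identify its fixed point (your $y^{\star}$ is algebraically identical to the paper's $X$), and then take $\gamma\to 0^{+}$ (the paper uses L'H\^opital where you use a first-order expansion). No gaps worth noting.
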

\begin{proof}[Proof of Theorem~\ref{the:divergence}]
	In Example~\ref{exp:mean estimation}, the training process of FedAvg is that 
	the two clients train the global model using their own local data alternatively.
	Hence, after a certain number of training iterations, the global model parameters would
	be ``pulled'' in opposite directions when different clients are available, 
	and would finally oscillate periodically around \mbox{$(t_1\e_1+t_2\e_2)/(t_1+t_2)$}.
	The detailed proof is given as follows.
	
	We first show that if $\gamma<1/2$, $\x^{k(t_1+t_2)}$ would converge to
    \[
	X = \frac{\left(1-2\gamma\right)^{t_2}\left(\e_1-\e_2\right) + \e_2 - \e_1 \left(1-2\gamma\right)^{t_1+t_2}}
	{1-\left(1-2\gamma\right)^{t_1+t_2}}.
	\]
	Note that for iterations where client $1$ is available, we have
	\[
	\forall t\in\{k(t_1+t_2)+i\mid k\in\N, i\in\{1,\cdots, t_1\}\},\,
	\x^{t+1}\! = \x^{t} - 2\gamma\left(\x^{t}-\e_1\right),
	\]
	where $\gamma$ is the learning rate. Rearrange the equation, we have
	\[
	\x^{t+1} - \e_1 = (1-2\gamma)\left(\x^{t} - \e_1\right),
	\]
	which implies that $\left(\x^{t} - \e_1\right)$ is a geometric progression. Hence, we have
	\begin{equation}\label{eq:client1}
	\x^{k\left(t_1+t_2\right)+t_1} = \left(1-2\gamma\right)^{t_1}\left(\x^{k\left(t_1+t_2\right)}-\e_1\right) + \e_1.
	\end{equation}
	Applying the same analysis on iterations where client $2$ is available,  we have
	\begin{equation}\label{eq:client2}
	\x^{\left(k+1\right)\left(t_1+t_2\right)} = \left(1-2\gamma\right)^{t_2}\left(\x^{k\left(t_1+t_2\right)+t_1}-\e_2\right) + \e_2.
	\end{equation}
	Substituting (\ref{eq:client1}) into (\ref{eq:client2}), we have
	\begin{align}
	&\x^{\left(k+1\right)\left(t_1+t_2\right)}
	\new
	=&
	\left(1-2\gamma\right)^{t_1+t_2}\left(\x^{k\left(t_1+t_2\right)}-\e_1\right) 
	+ \left(1-2\gamma\right)^{t_2}\left(\e_1-\e_2\right) + \e_2.\nonumber
	\end{align}
	Based on this recursion formula, we have
	\[
	\x^{k\left(t_1+t_2\right)}
	=
	\left(1-2\gamma\right)^{\left(t_1+t_2\right)k}
	\x^{0} 
	+ \left(1-\left(1-2\gamma\right)^{\left(t_1+t_2\right)k}\right)X.
	\]
	Since $\gamma < \frac{1}{2}$, we have
	$
	\lim_{k \to +\infty}{\x^{k\left(t_1+t_2\right)}} = X
	$.
	Based on L'Hopital's rule, we then have
	$
	\lim_{\gamma\to0^+}{X}=(t_1\e_1+t_2\e_2)/(t_1+t_2)
	$.

	The global minimization objective is
	\[
	f(\x) = \frac{1}{2}\sum_{i=1}^{2}(\x-\e_i)^2 + \frac{1}{2}\sum_{i=1}^{2}\E_{\xi_i\sim \D_i}\left[(\xi_i-\e_i)^2\right],
	\]
	and the minimum is reached when $\x=\x^*=(\e_1+\e_2)/2$.
	Note that $(\e_1+\e_2)/2= (t_1\e_1+t_2\e_2)/(t_1+t_2)$ only when $\e_1=\e_2$ (data distributions are IID) or $t_1=t_2$. Hence, FedAvg  will produce arbitrarily poor-quality results without these inpractical assumptions.
\end{proof}

\subsection{Federated Latest Averaging}
\label{sec:FedLaAvg} 

As shown in Section~\ref{sec:divergence}, intermittent client availability seriously degrades the performance 
of FedAvg. In FL, the overall data distribution is an unbiased mixture of all clients' local 
data distributions. FedAvg can be proven to converge in the full client participation 
scenario~\cite{Yu2018ParallelRS}, because it uses the current gradients of all clients to update 
the global model. This makes the training data distribution in each iteration consistent with 
the overall data distribution. However, due to the intermittent client availability, some clients 
are selected to participate in the training process more frequently, introducing the bias into 
 training data. To mitigate the bias problem, we imitate the full client participation scenario, 
 and attempt to leverage the gradient information of all clients for model training in each iteration. 
 The difficulty in employing this idea is that as some clients are absent from the training due to 
 being either unavailable or unselected, we cannot obtain the current gradients of these clients. 
 To resolve the lack of gradient information, we propose a natural and simple idea: 
 \textbf{using the latest gradient of the client when its current gradient is not available.} By doing so, we can eliminate the bias in training data, and establish the convergence result.
 
 \begin{figure*}[t]
	\centering
	
	\includegraphics[width=0.9\textwidth]{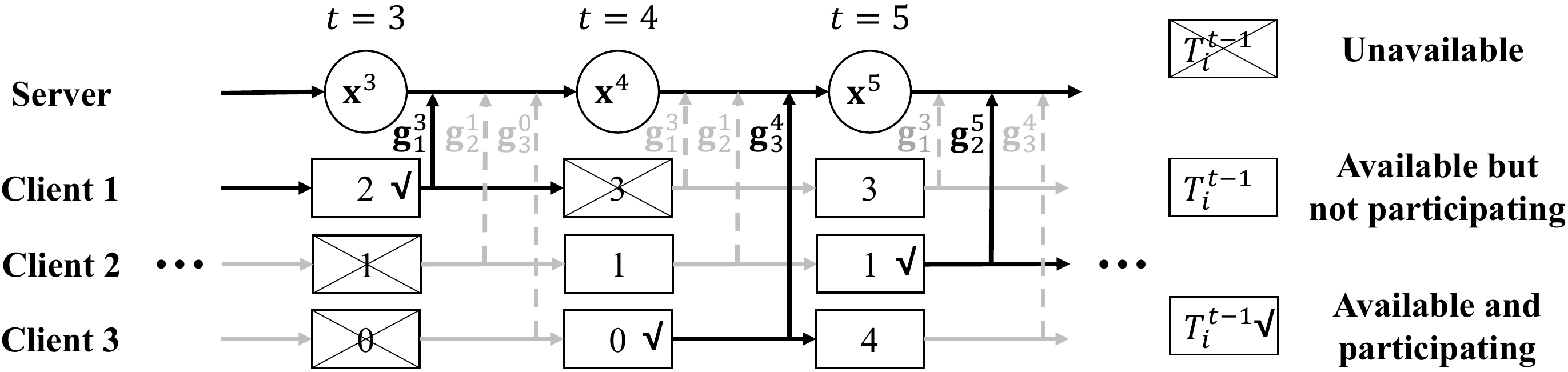}
	
	\caption{
		A simple illustration of FedLaAvg. 
		The cloud server chooses $K = 1$ client with the most outdated gradient from the available clients to participate in each iteration.
		The black lines between iterations mean the chosen client participates in this iteration and uploads its current gradient, while the grey dotted lines mean the cloud server uses the absent client's latest gradient in this iteration. 
		Number $\Tits$ in the rectangles denotes the latest iteration in which the client $i$ participates. 
	}
	\label{fig:overview}
\end{figure*}

We present in Algorithm~\ref{alg:LAS} the detailed procedures of FedLaAvg, 
and give Figure~\ref{fig:overview} for easy illustration.
In each iteration $t$, each selected client $i$ locally calculates the gradient $\git$, and the cloud server maintains the average latest gradient $\gt$ of all clients.
The client selection principle in FedLaAvg is to \textbf{choose the $K$ clients that are absent from the training process for the longest time from the available clients} (Lines~5--7).
Together with Assumption~\ref{as:4}, we can guarantee that each client is selected at least once during any period with $I$ successive iterations, where $I$ is a function of parameters $K$, $N$, and $E$ (please refer to Lemma~\ref{le:I} in Section~\ref{sec:performance} for the details). 
Based on this condition, we can establish an upper bound for the difference between each client's latest gradient and its current gradient, which would be critical for the convergence analysis of FedLaAvg in Section~\ref{sec:performance}. 
To implement this principle, we use $\Tit$ to record the latest iteration before or at $t$ in which client $i$ participates in the training process.
During the aggregation procedure (Lines 8--9), to reduce the aggregation overhead, each selected client uploads the gradient difference: the difference between the gradients computed in the current participating iteration and the previous participating iteration, i.e., $\git - \giTits$, rather than the current gradient $\git$ as in the traditional FedAvg algorithm.
Once the gradient difference from each client $i$ is received, the cloud server would update the global gradient (Line~9). 
Following this aggregation method, the cloud server only needs to store the average latest gradient $\gt$ and run $K$ update operations.
It can be proved by induction that at the end of each iteration $t$, the resulting gradient is indeed the average latest gradient:
\begin{equation}\label{eq:gt}
\g^{t}=\frac{1}{N}\sum_{i=1}^{N}\giTit.
\end{equation}
Once the average latest gradient $\gt$ is obtained, the cloud server uses it to update the global model parameters in (\ref{eq:update}).

\begin{algorithm}[tb]
	\caption{Federated Latest Averaging Algorithm}
	\label{alg:LAS}
	\begin{algorithmic}[1]
		
		\STATE {\bfseries Input:} initial model parameters $\x^{0}$; number of clients $N$; number of total iterations $T$; learning rate $\gamma$; proportion of selected clients $\beta$ (i.e., the number of participating clients in each iteration is $K = \beta N$.)
		
		\STATE Initialization:
		$\g^{0}\leftarrow \ze;\,\forall i\in\{1,2,\cdots,N\},\,\g_i^{0}\leftarrow \ze,\,T_i^{ 0 } \leftarrow 0$.
		
		\FOR{$t=1$ {\bfseries to} $T$}
		\STATE $\gt \leftarrow \g^{ t - 1 }$
		
		\STATE $\CC^t\leftarrow$ the set of available clients
		
		\STATE $\B^t\leftarrow$ $K$ clients from $\CC^t$ with the lowest $\Tits$ values
		
		\STATE Update $\Tit$ values: 
		$
		\Tit \leftarrow t,\, \forall i\in \B^t;\; \Tit \leftarrow \Tits,\, \forall i\notin \B^t.
		$
		
		\STATE Each client $i\in \B^t$ calculates local gradient $\git$ and uploads gradient difference $\git - \giTits$.
		
		\STATE Once receiving the gradient information from client $i$, the cloud server calculates the global gradient:
		\[
		\gt\leftarrow \gt + \frac{1}{N}\left(\git - \giTits\right).
		\]
		
		\STATE The cloud server updates the global model parameters:
		\begin{equation}\label{eq:update}
		\x^{t}\leftarrow \x^{t - 1} - \gamma\gt.
		\end{equation}
		\ENDFOR
	\end{algorithmic}
\end{algorithm}

\section{Convergence Analysis}
\label{sec:performance}
In this section, under intermittent client availability, 
we show that FedLaAvg achieves $O(E^{1/2}/(N^{1/4}T^{1/2}))$ 
convergence rate on general non-convex functions with 
a sublinear speedup in terms of the total number of clients.

\subsection{Convergence on Example~\ref{exp:mean estimation}}\label{sec:case}
We first demonstrate that FedLaAvg converges in Example~\ref{exp:mean estimation}, where FedAvg produces an arbitrarily poor-quality result.
The convergence analysis of FedLaAvg for this simple example 
sheds light on the analysis for the case of general 
non-convex optimization in the next subsection. 
\begin{theorem}\label{the:convergence}
	Suppose each client computes the exact (not stochastic) gradient.
	In Example~\ref{exp:mean estimation}, 
	after $T$ iterations, FedLaAvg with the learning rate $\gamma=1/(2\sqrt{T})$ produces a solution $\hat{\x}$ that is within $O(1/\sqrt{T})$ range of the optimal solution $\x^*$:
	$
	\left(\hat{\x} - \x^*\right)^2 =  O(1/\sqrt{T}),
	$
	where we choose $\hat{\x} = \arg \min_{\xt}f(\xt)$ as the output.
\end{theorem}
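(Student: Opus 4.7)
\textbf{Plan for Theorem~\ref{the:convergence}.} The strategy is to view FedLaAvg on Example~\ref{exp:mean estimation} as gradient descent on the global quadratic $f$ with a small staleness perturbation, and then apply a standard smoothness/AM--GM descent-lemma argument together with the step $\gamma = 1/(2\sqrt{T})$ to obtain the $O(1/\sqrt{T})$ rate. This line of reasoning will also foreshadow the general non-convex analysis in the next subsection.

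First I would establish a window bound: since the two clients' availability intervals alternate periodically with period $t_1+t_2$ and FedLaAvg greedily selects the client with the most stale recorded gradient, each client is guaranteed to participate at least once within any window of length $I = O(t_1+t_2)$. This is the two-client special case of Lemma~\ref{le:I} in the next subsection. Consequently $t - \Tit \leq I$ for every $t$ and every $i$. Next I would decompose the aggregated gradient: with exact gradients, (\ref{eq:gt}) yields $\gt = \nabla f(\x^{t-1}) + \epsilon^t$, where $\epsilon^t = \frac{1}{2}\sum_{i=1}^{2}\bigl[\nabla f_i(\x^{T_i^t - 1}) - \nabla f_i(\x^{t-1})\bigr]$ captures staleness. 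Using $L$-smoothness of the $f_i$ (here $L = 2$) together with the update rule $\x^s = \x^{s-1} - \gamma \g^s$, a chaining argument bounds $(\epsilon^t)^2$ by $O(\gamma^2 I^2 G^2)$ once a uniform bound $(\gt)^2 \leq G^2$ has been secured.

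The third step is the standard descent/AM--GM argument. $L$-smoothness of $f$ combined with the update and a Cauchy--Schwarz split of the cross term yields
\[
f(\x^t) \leq f(\x^{t-1}) - \frac{\gamma}{2}\bigl(\nabla f(\x^{t-1})\bigr)^2 + \frac{\gamma}{2}(\epsilon^t)^2 + \frac{L\gamma^2}{2}(\gt)^2.
\]
Telescoping over $t = 1, \dots, T$, dividing by $\gamma T$, and substituting the staleness bound gives an averaged squared gradient of order $O(1/(\gamma T)) + O(\gamma^2 I^2 G^2) + O(\gamma G^2)$. Setting $\gamma = 1/(2\sqrt{T})$ balances the first and third terms at $O(1/\sqrt{T})$ and leaves the staleness term of smaller order. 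Because $f$ is the scalar quadratic with $\bigl(\nabla f(\x)\bigr)^2 = 4(\x - \x^*)^2$, choosing $\hat{\x} = \arg\min_{t} f(\x^t)$ converts the averaged-gradient bound into $(\hat{\x} - \x^*)^2 = O(1/\sqrt{T})$, as claimed.

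The main obstacle is closing the loop for the uniform bound on $(\gt)^2$ that the staleness estimate rests on: in the exact, deterministic setting one must verify a priori that the iterates remain in a bounded region, for instance by an inductive argument exploiting contraction of the scalar quadratic, so that the chain ``iterate norm $\to$ gradient bound $\to$ staleness bound $\to$ descent'' closes consistently. Once that is in hand, the remaining manipulations are routine calculus.
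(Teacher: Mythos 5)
Your proposal is correct in substance, but it follows a different (more generic) route than the paper's proof of this theorem. You treat FedLaAvg as gradient descent on $f$ perturbed by a staleness error $\epsilon^t = \gt - \nabla f(\x^{t-1})$, apply the smoothness descent lemma with an AM--GM split, telescope to bound the averaged squared gradient, and only at the end use the quadratic structure via $(\nabla f(\x))^2 = 4(\x-\x^*)^2$; this is essentially the paper's general non-convex template (Theorem~\ref{the:main}) specialized to the example, and it leaves an extra $O(\gamma G^2)$ remainder term $\tfrac{L\gamma^2}{2}(\gt)^2$, which is harmless since it is also $O(1/\sqrt{T})$ under $\gamma = 1/(2\sqrt{T})$. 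The paper instead works directly with the squared distance: it expands $\hat f(\x^t)-\hat f(\x^{t-1})$ exactly (equality, not the descent inequality), uses the polarization identity $2ab = a^2+b^2-(a-b)^2$ on the cross term, and cancels the $(\x^t-\x^{t-1})^2$ term against a negative square using $\gamma\le 1/2$, so its bound is simply $\tfrac{1}{\sqrt{T}}\bigl(f(\x^0)-f(\x^*)\bigr)+\tfrac{I^2G^2}{4T}$ with the staleness measured in iterate space, $\lvert \x^{t-1}-\x^{\Tit-1}\rvert \le \gamma I G$, rather than through $L$-smoothness of the gradients. Both routes rest on the same two ingredients — the window bound $t-\Tit\le I$ (here $I=\max\{t_1,t_2\}$; your $O(t_1+t_2)$ is fine since the staleness term is lower order) and a uniform bound $G$ on gradients/iterates. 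On the latter point you are, if anything, more careful than the paper: the paper asserts $\lvert \x^t-\x^*\rvert\le G/2$ with a brief qualitative argument, while you correctly flag that with stale gradients this invariant needs an explicit inductive verification (possibly with a slightly inflated constant) to close the loop; carrying out that induction is the only nontrivial piece missing from your outline, and it is needed by the paper's argument as well.
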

\begin{proof}[Proof of Theorem~\ref{the:convergence}]	
	We recall that
	\[
	f(\x) = 
	\left(\x - \frac{\e_1+\e_2}{2}\right)^2 + \frac{\left(\e_1-\e_2\right)^2}{4} 
	+ \frac{1}{2}\sum_{i=1}^{2}\E_{\xi_i\sim \D_i}\left[(\xi_i-\e_i)^2\right],
	\]
	where the latter two terms are not associated with the variable $\x$. Hence, we only need to focus on the following part of the loss function:
	$
	\f(\x) = \left(\x - \x^* \right)^2,
	$
	where $\x^* = \frac{\e_1+\e_2}{2} $ is the optimal solution.
	Note that
	\begin{equation}\label{eq:case-main}
	\f ( \x^{ t  } ) - \Fxts 
	=
	\left(\x^{t} - \xts\right)^2
	+2\left(\xts - \x^*\right) \left(\x^{t} - \xts\right).
	\end{equation}
	We calculate the difference of $\x$ between two successive iterations:
	\begin{align}\label{eq:diff}
	\x^{t} - \xts
	&=
	-\frac{\gamma}{2} \left(\g_1^{\Tonet} +\g_2^{\Ttwot}\right)
	=
	-\gamma \left(\xTonet - \e_1 + \xTtwot - \e_2 \right)\new
	&=
	-\gamma \left(\xTonet + \xTtwot - 2\x^* \right),
	\end{align}
	where $\Tit$ is defined in Section~\ref{sec:FedLaAvg}.
	Hence, we have
	\begin{align}\label{eq:prod}
	&
	2\left(\xts - \x^*\right) \left(\x^{t} - \xts \right)
	\new
	=&
	-\gamma\left(2\xts-2\x^*\right) \left(\xTonet+\xTtwot-2 \x^* \right)
	\new
	=&
	-\frac{\gamma}{2}  \left(2\xts  - 2\x^*\right)^2 
	-\frac{\gamma}{2}  \left(\xTonet + \xTtwot - 2 \x^* \right)^2 
	\new
	&+\frac{\gamma}{2}  \left(2\xts - \xTonet - \xTtwot \right)^2.
	\end{align}
	Substituting (\ref{eq:diff}) and (\ref{eq:prod}) into (\ref{eq:case-main}), we have
	\begin{equation}\label{eq:case-main1}
	\f ( \x^{  t  } ) - \Fxts 
	\leq
	-2\gamma \left(\xts - \x^*\right)^2
	+\frac{\gamma}{2} \left(2\xts - \xTonet - \xTtwot \right)^2,
	\end{equation}
	which follows from $0 < \gamma \leq 1/2$.
	
	The algorithm starts from model parameters $\x^{0}$. When client $1$ is available, $\x$ moves towards $\e_1$, and when client $2$ is available, $\x$ moves towards $\e_2$. Hence, $\x$ is always within $G/2$ range of $\x^*$:
	\begin{equation}\label{eq:t*}
	-\frac{G}{2} \leq \x^{t} - \x^* \leq \frac{G}{2},\,\forall t \geq 0,
	\end{equation}
	where $G=\max\left\{2\left( \x^{0} - \x^* \right), \left\lvert\e_1-\e_2\right\rvert \right\}$ is the the largest gradient norm during the training process.
	Substituting (\ref{eq:t*}) into (\ref{eq:diff}), we have
	\begin{align}\label{eq:bounded_diff}
	-\gamma G \leq \x^{t} - \xts \leq \gamma G.
	\end{align}
	Referring to the specific client availability model in this example, we have
	$t - \Tit \leq I = \max\left\{t_1, t_2\right\}$, 
	$i = 1,\,2$.
	Therefore, when $t \geq \Tit + 2$, summing (\ref{eq:bounded_diff}) over iterations from $\Tit + 1$ to $t-1$, we have 
	\begin{align}\label{eq:total_diff}
	-\gamma I G \leq \x^{t-1} - \x^{\Tit} \leq \gamma I G, \, i=1,\,2.
	\end{align}
	Note that when $t = \Tit$ or $t=\Tit+1$, the above formula also holds.
	
	Substituting (\ref{eq:total_diff}) into (\ref{eq:case-main1}), we have
	\[
	\f ( \x^{  t  } ) - \Fxts 
	\leq
	-2\gamma \left(\xts - \x^*\right)^2
	+2 \gamma^3 I^2 G^2.
	\]
	Rearranging the formula, we have
	\[
	\left(\xts - \x^*\right)^2
	\leq
	\frac{1}{2\gamma}\left(\Fxts - \f ( \x^{  t  } )\right)
	+\gamma^2 I^2 G^2.
	\]
	Summing this inequility over iterations from 1 to $T$, we have
	\begin{align}\label{eq:case-main4}
	\frac{1}{T}\sum_{t=1}^{T}\left(\xts - \x^*\right)^2
	&\leq
	\frac{1}{2\gamma T}\left(\f(\x^{0}) - \f ( \x^{  T  } )\right)
	+\gamma^2 I^2 G^2
	\new
	&
	\leq
	\frac{1}{2\gamma T}\left(\f(\x^{0}) - \f ( \x^* )\right)
	+\gamma^2 I^2 G^2.
	\end{align}
	Substituting $\gamma=1/(2\sqrt{T})$ into (\ref{eq:case-main4}), we have
	\begin{align}
	\frac{1}{T}\sum_{t=1}^{T}\left(\xts - \x^*\right)^2
	&\leq
	\frac{1}{\sqrt{T}}\left(\f(\x^{0}) - \f ( \x^* )\right)
	+\frac{I^2 G^2}{4T}
	\new
	&=
	\frac{1}{\sqrt{T}}\left(f(\x^{0}) - f ( \x^* )\right)
	+\frac{I^2 G^2}{4T}.
	\nonumber
	\end{align}
	
	Finally, we have
	\begin{align}
	&\left(\hat{\x} - \x^*\right)^2 
	\leq
	\frac{1}{T}\sum_{t=1}^{T}\left(\xts - \x^*\right)^2
	\new
	\leq&
	\frac{1}{\sqrt{T}}\left(f(\x^{0}) - f ( \x^* )\right)
	+\frac{I^2 G^2}{4T}
	=O\left(\frac{1}{\sqrt{T}}\right).
	\nonumber
	\end{align}
\end{proof}

\subsection{Convergence on General Non-Convex Functions}
\label{sec:general}
In this subsection, we show the $O(E^{1/2}/(N^{1/4} T^{1/2}))$
convergence of FedLaAvg on general non-convex functions.
First, we introduce Lemma~\ref{le:I} about client participation.
\begin{lemma}\label{le:I}
	Under Assumption~\ref{as:4}, the client selection policy in FedLaAvg guarantees that for each client, the latest participating iteration is at most $I$ iterations earlier than the current iteration:
	$
	t-\Tit \leq I,\,\forall t,\,\forall i,\, 
	\text{where}\,I  = \NK E - 1
	$
\end{lemma}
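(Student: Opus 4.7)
The plan is to first reduce the lemma to the following combinatorial claim: in any window of $I+1 = \lceil N/K \rceil E$ consecutive iterations, every client participates at least once. If this claim holds, then applying it to the window $[t-I, t]$ forces $T_i^{t} \geq t - I$, which is exactly the statement of the lemma. So the work is really in proving the window claim.

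I would prove the window claim by contradiction. Let $m = \lceil N/K \rceil$, fix a window $W = [s, s+mE-1]$ of length $mE$, and suppose some client $i$ is never selected inside $W$. Then $T_i^{\tau-1}$ stays equal to some constant $T_0 \leq s-1$ for every $\tau \in W$. Partition $W$ into $m$ disjoint blocks of length $E$; by Assumption~\ref{as:4}, client $i$ is available at least once in each block, producing iterations $\tau_1 < \tau_2 < \cdots < \tau_m$ in $W$ at which $i$ is available but not selected. At each such $\tau_l$, the FedLaAvg selection rule (Lines~5--7) picks the $K$ available clients with the smallest $T^{\tau_l-1}$ values; since the available client $i$ is not among them, every one of the $K$ chosen clients $j$ must satisfy $T_j^{\tau_l - 1} \leq T_i^{\tau_l - 1} = T_0$.

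The key monotonicity observation is that as soon as a client $j$ is picked at $\tau_l$, its index $T_j$ jumps to $\tau_l > T_0$ and never decreases thereafter, so $j$ cannot be re-counted among the ``$T \leq T_0$'' selections at any $\tau_{l'} > \tau_l$. Hence the $Km$ slots across $\tau_1,\dots,\tau_m$ are filled by pairwise distinct clients, all different from $i$, yielding at least $Km$ distinct clients in $\{1,\dots,N\}\setminus\{i\}$. But $Km = K\lceil N/K\rceil \geq N > N-1$, a contradiction. So $i$ must participate at least once inside $W$, proving the window claim and hence the lemma.

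The main obstacle I expect is pinning down the tie-breaking in the ``smallest $T$'' rule precisely enough that the inequality $T_j^{\tau_l-1} \leq T_0$ really does give $K$ fresh distinct clients at each $\tau_l$; the resolution is the monotonicity argument above, which upgrades any equality $T_j^{\tau_l-1}=T_0$ to a strict inequality $T_j^{\tau_l}>T_0$ immediately after the selection, so the bookkeeping over the $m$ available iterations is clean regardless of how ties are broken.
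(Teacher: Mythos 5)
Your proposal is correct and follows essentially the same route as the paper's proof: assume client $i$ is not selected over a stretch of $\lceil N/K\rceil E$ iterations, use Assumption~\ref{as:4} to extract $\lceil N/K\rceil$ iterations where $i$ is available but passed over, and use the monotone jump of the $T_j$ values past $\Tit$ to conclude the $K\lceil N/K\rceil \geq N$ selected clients are distinct and different from $i$, a contradiction. The only cosmetic difference is that you run the counting over a window ending at $t$ while the paper bounds the next participation after a participating iteration, which is the same argument in the other direction (with the trivial $t\leq I$ case covered by $\Tit\geq 0$).
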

\begin{proof}[Proof of Lemma~\ref{le:I}]
	Please refer to \myappendix{sec:I}.
\end{proof}

With such a client participation condition, we can derive a key result for analyzing the convergence of FedLaAvg.
\begin{theorem}\label{the:main}
	By setting $\gamma \leq 1/(2L)$ in FedLaAvg, we can derive the following bound on the average expected squared gradient norm under Assumptions~\ref{as:1}--\ref{as:4}:
	\begin{align}
		&
		\frac{1}{T}\sum_{t=1}^{T}\E \left[ \left\lVert \nabla \fxts \right\rVert^2 \right]
		\new
		\leq&
		\underbrace{\frac{2 \gamma I L \left(G^2 + \sigma^2\right)}{\sqrt{N}} 
		+
		\frac{2 \gamma^2 I^2 L^2 G^2}{1 - 2\gamma L} 
		+ 
		4 \gamma^2  I^2 L^2 G^2}_{\text{gradient staleness}}
		\new
		&+
		\underbrace{\frac{4 \gamma \sigma^2 L}{N}}_{\text{gradient variance}}
		+  
		\underbrace{\frac{4}{\gamma T} \left(
		\E \left[ f(\x^{0}) \right]
		- 
		\E \left[ f(\x^*)  \right]
		\right)}_{\text{objective distance}},
		\nonumber
	\end{align}
	where $\x^*$ is the the optimal solution for the general non-convex optimization problem.
\end{theorem}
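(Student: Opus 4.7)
The plan is to follow the standard descent-lemma template for non-convex SGD, but modified to handle the three new sources of error introduced by FedLaAvg: (i) staleness of each client's stored gradient, (ii) sampling noise from the $N$ independently evaluated stochastic gradients, and (iii) the drift between $\x^{t-1}$ and the points $\x^{T_i^t-1}$ at which those latest gradients were computed. First I would invoke $L$-smoothness from Assumption~\ref{as:1} on $f$ and the update rule (\ref{eq:update}) to obtain
\[
f(\x^t) \le f(\x^{t-1}) - \gamma\,\langle \nabla f(\x^{t-1}), \gt\rangle + \tfrac{\gamma^2 L}{2}\|\gt\|^2,
\]
and take expectation conditional on the full history so far. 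Using (\ref{eq:gt}), the conditional mean is $\bar{\g}^t := \tfrac{1}{N}\sum_i \nabla f_i(\x^{T_i^t-1})$.

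Next I would apply the polarization identity \equuvv{} to $-2\langle \nabla f(\x^{t-1}),\bar{\g}^t\rangle$. This produces the desired $-\|\nabla f(\x^{t-1})\|^2$ (which after summation gives the quantity on the left), a $-\|\bar{\g}^t\|^2$ (useful for cancellation against the $\|\gt\|^2$ Taylor-remainder), and a staleness penalty $\|\nabla f(\x^{t-1}) - \bar{\g}^t\|^2$. By Jensen and $L$-smoothness of each $f_i$, the latter is bounded by $\tfrac{L^2}{N}\sum_i \|\x^{t-1} - \x^{T_i^t-1}\|^2$. From Lemma~\ref{le:I} we have $t - T_i^t \le I$, so summing the per-iteration increments $\x^{s} - \x^{s-1} = -\gamma\g^{s}$ over at most $I$ consecutive indices and using the bounded-gradient Assumption~\ref{as:3} (plus Jensen on the definition of $\g^s$) yields $\E\|\x^{t-1}-\x^{T_i^t-1}\|^2 \le \gamma^2 I^2 G^2$. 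This produces the $\gamma^2 I^2 L^2 G^2$ staleness contributions.

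For the Taylor remainder $\E\|\gt\|^2$, I would split it into $\|\bar{\g}^t\|^2 + \E\|\gt - \bar{\g}^t\|^2$ via \eqa{}. The second term is the aggregated sampling variance: because each $\g_i^{T_i^t}$ involves the fresh sample $\xi_i^{T_i^t}$, which is independent across clients conditional on the earlier filtration, one applies the law of total expectation to reduce the cross terms to zero and obtains a bound of order $\sigma^2/N$. A portion of $\|\bar{\g}^t\|^2$ is absorbed by the $-\|\bar{\g}^t\|^2$ term coming from polarization (this is where the condition $\gamma \le 1/(2L)$ is used, producing the $\tfrac{1}{1-2\gamma L}$ factor).

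The main obstacle will be the $O(IL(G^2+\sigma^2)/\sqrt{N})$ term: it has an unusual $1/\sqrt{N}$ rate, which does not fall out of a naive Cauchy--Schwarz step. I expect it to arise from a finer bound on a cross term of the form $\E\langle \nabla f(\x^{t-1}) - \bar{\g}^t,\,\gt - \bar{\g}^t\rangle$, handled via \covuuvv{} together with $\E\|\gt - \bar{\g}^t\| \le \sigma/\sqrt{N}$ (Jensen plus independence of the per-client noises across $i$), after which AM-GM with a weight balancing staleness ($\gamma I L$) against noise ($G^2+\sigma^2$) yields exactly $\gamma I L (G^2+\sigma^2)/\sqrt{N}$. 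Once every per-iteration term is in hand, I would telescope $\sum_{t=1}^T (\E f(\x^{t-1}) - \E f(\x^t)) \le \E f(\x^0) - \E f(\x^*)$, divide by $\gamma T$, collect coefficients, and identify the three labelled groups to arrive at the bound claimed in Theorem~\ref{the:main}.
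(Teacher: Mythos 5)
Your overall template (descent lemma, drift bound over at most $I$ iterations, cross-client variance of order $\sigma^2/N$, telescoping) matches the paper's, but the pivotal probabilistic step in your plan is false. You claim that, conditioning on the full history, the conditional mean of $\g^{t}=\frac{1}{N}\sum_{i}\g_i^{T_i^t}$ is $\bar{\g}^{t}:=\frac{1}{N}\sum_{i}\nabla f_i(\x^{T_i^t-1})$. It is not: each cached gradient $\g_i^{T_i^t}=\nabla F(\x^{T_i^t-1};\xi_i^{T_i^t})$ was computed from a sample drawn at iteration $T_i^t\le t-1$, so it is already measurable with respect to $\xi^{[t-1]}$ and $\E[\g^{t}\mid\xi^{[t-1]}]=\g^{t}$ itself. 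Hence you may not replace $\E\langle\nabla f(\x^{t-1}),\g^{t}\rangle$ by $\E\langle\nabla f(\x^{t-1}),\bar{\g}^{t}\rangle$ and then polarize; and the residual cross term you later invoke, $\E\langle\nabla f(\x^{t-1})-\bar{\g}^{t},\,\g^{t}-\bar{\g}^{t}\rangle$, rests on an equally false orthogonality (either $\E\langle\nabla f(\x^{t-1}),\g^{t}-\bar{\g}^{t}\rangle=0$ or $\E\langle\bar{\g}^{t},\g^{t}-\bar{\g}^{t}\rangle=0$): both $\x^{t-1}$ and the points $\x^{T_i^t-1}$ were produced by updates that already used the very noise contained in $\g^{t}-\bar{\g}^{t}$, so these inner products need not vanish. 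Your proposal is also internally inconsistent: if the conditional unbiasedness you assert were true, the offending cross term would vanish exactly, the $1/\sqrt{N}$ penalty you try to explain would not appear, and one would recover the full linear speedup --- which the paper explicitly explains is blocked by precisely this correlation (see (\ref{eq:gap-1}) and the discussion following Corollary~\ref{co:main}).

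The missing device is the anchor $T^{t}=\min_i T_i^t$. The paper splits $-\gamma\E\langle\nabla f(\x^{t-1}),\g^{t}\rangle$ by inserting $\nabla f(\x^{T^{t}-1})$: since $\x^{T^{t}-1}$ is measurable with respect to $\xi^{[T_i^t-1]}$ for every $i$, the tower property legitimately gives $\E\langle\nabla f(\x^{T^{t}-1}),\g_i^{T_i^t}\rangle=\E\langle\nabla f(\x^{T^{t}-1}),\nabla f_i(\x^{T_i^t-1})\rangle$ as in (\ref{eq:main5}), and the polarization identity is applied to that term. The price of the insertion, $\nabla f(\x^{t-1})-\nabla f(\x^{T^{t}-1})$, is controlled by Lemma~\ref{le:I} together with the drift bound of Lemma~\ref{le:ft-ftit} and Corollary~\ref{co:ft-ftit-1}; its inner product with the noise $\g^{t}-\bar{\g}^{t}$ is then bounded by Cauchy--Schwarz/AM--GM in (\ref{eq:main3}), and this --- not a covariance identity --- is the actual source of the $\gamma I L(G^2+\sigma^2)/\sqrt{N}$ term. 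Finally, because the resulting per-iteration inequality controls $\E\lVert\nabla f(\x^{T^{t}-1})\rVert^2$ rather than $\E\lVert\nabla f(\x^{t-1})\rVert^2$, one extra comparison step (\ref{eq:main-fin-3}) is needed, which is where the factors of $4$ and the additional $4\gamma^2 I^2 L^2 G^2$ in the statement come from. Without this anchoring (or an equivalent decoupling of the stale noise from the iterate), your proof outline does not go through.
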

\begin{proof}[Proof Sketch of Theorem~\ref{the:main}]
	Based on Assumption~\ref{as:1} about smoothness, 
	we decompose the difference of the loss function values in two successive iterations, 
	i.e., $\E[f(\x^t)]-\E[f(\x^{t-1})]$, into several terms.
	With Lemma~\ref{le:I}, we show that the gradient staleness, 
	i.e., difference between the latest gradient and the corresponding current gradient, 
	is bounded. 
	With Assumption~\ref{as:3}, we show that 
	the error related to stochastic gradient variance is also bounded.
	With the gradient staleness bound and the gradient variance bound,
	we further prove an upper bound for each decomposed term mentioned above.
	Finally, we prove the theorem by summing up the bounds over iterations $t$ from $1$ to $T$ and rearranging the resulted inequation.
	For the detailed proof, please refer to \myappendix{sec:sup main}.
\end{proof}

Before presenting our main result, we consider the full client participation setting discussed in \citet{Yu2018ParallelRS}, in which our FedLaAvg reduces to FedAvg.
Since $K=N$, $E=1$, and $I=\lceil N / K\rceil E - 1=0$ in this setting, the gradient staleness term vanishes
and the result in Theorem~\ref{the:main} becomes
\[
	\frac{1}{T} \sum_{t=1}^{T} \E  \left[  \left\lVert  \nabla \fxts \right\rVert ^2  \right] 
    \leq
	\frac{4 \gamma \sigma^2 L}{N} 
	+
	\frac{4}{\gamma T} 
	\left(
	\E \left[ f(\x^{0}) \right]
    -
	\E \left[ f(\x^*)  \right]
	\right).
\]
Choosing $\gamma = \sqrt{N} / (2L\sqrt{T})$, when $T\geq N$, we can obtain the $O(1/\sqrt{NT})$ convergence, which is consistent with the linear speedup in terms of  $N$ as proven in \citet{Yu2018ParallelRS}.

For the intermittent client availability setting considered in this work, FedLaAvg achieves a sublinear speedup by choosing appropriate hyperparameters. For easy illustration, we define the loss difference between the initial solution $\x^0$ and the optimal solution $\x^*$ as $B=f(\x^{0})-f(\x^*)$. 
In addition, we recall that $\beta = K/N$ is the proportion of the selected clients in each iteration.
\begin{corollary}\label{co:main}
	By choosing the learning rate 
	$\gamma= \frac{\beta^{\frac{1}{2}} N^{\frac{1}{4}}}
	{2L E^{\frac{1}{2}} T^{\frac{1}{2}}} $ 
	and requiring $\gamma \leq 1/(4L)$ in FedLaAvg, we have the following convergence result:
    \[
		\frac{1}{T} \sum_{t=1}^{T} \E  \left[  \left\lVert  \nabla \fxts \right\rVert ^2  \right] 
		=
		O\left(
		\frac{ E^{\frac{1}{2}} \left(G^2 + \sigma^2 + LB\right) }{ N^{\frac{1}{4}}  T^{\frac{1}{2}} \beta^{\frac{1}{2}} } 
		+
		\frac
		{ E G^2 N^{\frac{1}{2}} }
		{T \beta } 
		\right).
	\]
	When $T \geq  E N^{3/2}/\beta $,  we further obtain the sublinear speedup with respect to the total number of clients:
	\[
		\frac{1}{T} \sum_{t=1}^{T} \E  \left[  \left\lVert  \nabla \fxts \right\rVert ^2  \right] 
		=
		O\left(
		\frac{ E^{\frac{1}{2}} \left(G^2 + \sigma^2 + LB\right) }{ N^{\frac{1}{4}}  T^{\frac{1}{2}} \beta^{\frac{1}{2}} } 
		\right)
		=
		O\left(
		\frac{E^{\frac{1}{2}}}{ N^{\frac{1}{4}} T^{\frac{1}{2}} } 
		\right).
	\]
\end{corollary}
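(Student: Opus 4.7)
The plan is to treat the corollary as a mechanical consequence of Theorem~\ref{the:main} combined with Lemma~\ref{le:I}, requiring no further dynamical analysis -- only careful substitution and term-by-term asymptotic bookkeeping. First I would apply Lemma~\ref{le:I} to bound the staleness parameter $I = \NK E - 1 = O(E/\beta)$ (using $K = \beta N$), and verify that the prescribed learning rate $\gamma = \beta^{1/2} N^{1/4}/(2L E^{1/2} T^{1/2})$ satisfies $\gamma \leq 1/(4L)$ (which is imposed as a standing hypothesis), so in particular $1 - 2\gamma L \geq 1/2$ and the factor $1/(1-2\gamma L)$ appearing in Theorem~\ref{the:main} contributes only a universal constant.

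Next, I would evaluate each of the five summands of Theorem~\ref{the:main} under these substitutions. The linear-in-$I$ staleness term $2\gamma I L (G^2+\sigma^2)/\sqrt{N}$ becomes $O\bigl(E^{1/2}(G^2+\sigma^2)/(N^{1/4} T^{1/2} \beta^{1/2})\bigr)$, which forms part of the first summand of the claim. The two quadratic-in-$\gamma I$ staleness terms collapse to $O\bigl(E G^2 N^{1/2}/(\beta T)\bigr)$, yielding the second summand. The gradient variance term $4\gamma \sigma^2 L/N$ reduces to $O\bigl(\beta^{1/2}\sigma^2/(E^{1/2} N^{3/4} T^{1/2})\bigr)$; I would then observe that the ratio of this to the first summand is $O(\beta/(EN^{1/2})) \leq 1$ (since $\beta \leq 1$ and $E,N \geq 1$), so the variance term is absorbed. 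Finally, the objective-distance term $4B/(\gamma T)$ evaluates to $O\bigl(L B E^{1/2}/(N^{1/4} T^{1/2} \beta^{1/2})\bigr)$, which joins the first summand alongside the $G^2 + \sigma^2$ contribution, completing the combined bound.

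For the sublinear-speedup statement in the second display, the threshold $T \geq E N^{3/2}/\beta$ is engineered precisely so that the second (slower-decaying) term is dominated by the first. Comparing $E G^2 N^{1/2}/(\beta T)$ with $E^{1/2}(G^2+\sigma^2+LB)/(N^{1/4} T^{1/2} \beta^{1/2})$ and rearranging gives $T \gtrsim E N^{3/2}/\beta$ up to constants depending on $G$, $\sigma$, $L$, and $B$, whereupon only the $O\bigl(E^{1/2}/(N^{1/4} T^{1/2} \beta^{1/2})\bigr) = O\bigl(E^{1/2}/(N^{1/4} T^{1/2})\bigr)$ rate remains once problem-dependent constants are absorbed. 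The main obstacle is simply the careful tracking of the exponents of $N$, $T$, $\beta$, and $E$ across the five summands and the verification that the fourth summand is dominated by the first in the stated regime; there is no new probabilistic or analytic content beyond what was already deployed in Theorem~\ref{the:main} and Lemma~\ref{le:I}.
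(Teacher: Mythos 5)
Your proposal is correct and follows essentially the same route as the paper's proof: bound $I=O(E/\beta)$ via Lemma~\ref{le:I}, use $\gamma\leq 1/(4L)$ so that $1-2\gamma L\geq 1/2$, substitute the prescribed $\gamma$ into the bound of Theorem~\ref{the:main}, and observe that $T\geq EN^{3/2}/\beta$ makes the $EG^2N^{1/2}/(\beta T)$ term dominated by the $E^{1/2}/(N^{1/4}T^{1/2}\beta^{1/2})$ term. The only cosmetic difference is that you explicitly verify the absorption of the gradient-variance term $4\gamma\sigma^2 L/N$ into the first summand, which the paper handles implicitly when writing the $O(\cdot)$ form of Theorem~\ref{the:main}.
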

\begin{proof}[Proof of Corollary~\ref{co:main}]
	Please refer to \myappendix{sec:supco}.
\end{proof}

\section{Extension to Communication Round-Based Setting}
\label{sec:multiround}
In practical FL deployment, 
for communication efficiency,
each participating client is allowed to perform 
multiple local training iterations 
before uploading the accumulated local model update 
in each communication round~\cite{
  McMahan2016CommunicationEfficientLO}. 
Although we focused in earlier sections on the 
case where participating clients communicate every iteration, 
FedLaAvg can be trivially extended to support 
multiple local iterations per communication round.

\begin{algorithm}[t]
	\caption{The Communication Round-Based Federated Latest Averaging Algorithm}
	\label{alg:multiround}
	\begin{algorithmic}[1]
		\STATE {\bfseries Input:} 
		initial model parameters $\x^{0}$; 
		number of clients $N$; 
		number of total rounds $R$; 
		learning rate $\gamma$; 
		number of local iterations $C$;
		proportion of selected clients $\beta$ 
		(i.e., the number of participating clients in each round 
		is $K = \beta N$.)
		
		\STATE Do initialization:
		\[\uu^{0}\leftarrow \ze,\,\forall i\in\{1,2,\cdots,N\},\,\uu_i^{0}\leftarrow \ze,\,R_i^{  0  } \leftarrow 0.\]
		
		\FOR{$r=1$ {\bfseries to} $R$}
		\STATE $\uu^{r}\leftarrow \uu^{r-1}$
		
		\STATE $\Ch^r\leftarrow$ the set of available clients
		
		\STATE $\Bh^r\leftarrow$ $K$ clients from $\Ch^r$ with the lowest $\Rirs$ values 
		
		\STATE Update $\Rir$ values:
		\[
		\Rir \leftarrow r,\,\forall i\in \Bh^r;\, \Rir \leftarrow \Rirs,\,\forall i\notin \Bh^r.
		\]
		
		\STATE Each client $i\in\Bh^r$ calculates the accumulated local model update $\uir$ and uploads the update difference $\uir - \uiRirs$ in parallel.
		
		\STATE Once receiving the update information from client $i$, the cloud server calculates the global update:
		\[
		\uu^{r}\leftarrow \uu^{r} + \frac{1}{N}\left(\uir - \uiRirs\right).
		\]
		
		\STATE The cloud server updates the global model parameters:
		\[
		\x^{rC}\leftarrow \x^{rC - C} + \ur.
		\]
		\ENDFOR
	\end{algorithmic}
\end{algorithm}

We introduce some notations to represent the training 
process of the communication round-based FedLaAvg. 
$C$ denotes the local iteration number. 
Let $\Ch^r\triangleq \bigcap_{t=(r-1)C+1}^{rC}\CC^t$
be the set of available clients in round $r$. 
Each client $i$ observes the stochastic gradient 
$\git$ on the local model parameters in each 
local iteration, and accumulates these 
stochastic gradients to obtain the local model update 
$\uir \triangleq -\gamma \sum_{t=(r-1)C+1}^{rC} \git$ 
in round $r$.
After collecting the local model updates 
at the end of round $r$, 
(i.e., in iteration $t=rC$), the cloud server calculates the 
global model parameters $\xrC$.
Similar to the notation $\Tit$, we use 
$\Rir$ to denote the latest round where client $i$ 
is available before or in round $r$.
We define $r^t\triangleq \lfloor (t-1)/C \rfloor + 1$
as the round that iteration $t$ belongs to. 
The one-iteration-per-round scenario 
corresponds to the special case using the specific 
notations: 
$C=1$, $r^t = t$, $\Ch^{r^t} = \CC^t$, 
$\uirt = -\gamma \git$, and $\Rirt = \Tit$.

To apply FedLaAvg to the multiple-iteration-per-round scenario, 
we need only replace the cached gradient $\g^{t}$ in
Algorithm~\ref{alg:LAS} with cached update $\ur$. 
The communicated gradient difference $\git - \giTits$ is 
replaced with the update difference $\uir - \uiRirs$. 
The algorithm is formally presented in Algorithm~\ref{alg:multiround}.

As we consider multiple local iterations per communication round, 
client availability should be measured in rounds. 
Hence we replace Assumption~\ref{as:4} with Assumption~\ref{as:5}:
\begin{assumption}\label{as:5}
	Minimal availability: each client $i$ is available at least once in any period with $E$ successive rounds: 
	\[
	\forall i,\,\forall r,\,\exists \hat{r} \in \{r, r+1,\cdots,r+E-1\},\text{ such that } i\in \Ch^{\hat{r}}.
	\]
\end{assumption}
Under Assumption~\ref{as:5}, we can establish the 
following convergence rate of the extended algorithm:
\begin{theorem}\label{co:multiround}
	We recall that $B=f(\x^{0})-f(\x^*)$ and $\beta = K/N$. Let the communication round-based FedLaAvg execute $R$ rounds.
	By choosing the learning rate $\gamma=( \beta^{1/2} N^{1/4} )/(2L C E^{1/2} R^{1/2} )  $ and requiring $\gamma \leq 1/(4L)$, we have the following convergence result:
	\[
	\frac{1}{R} \sum_{r=1}^{R} \E  \left[  \left\lVert  \nabla \fxrCs \right\rVert ^2  \right] 
	=
	O\left(
	\frac{ E^{\frac{1}{2}}  \left(G^2+\sigma^2 + LB\right)}
	{ N^{\frac{1}{4}} R^{\frac{1}{2}} \beta^{\frac{1}{2}} }   
	+  
	\frac{ E  G^2  N^{\frac{1}{2}} }{  R \beta  }   
	\right).
	\]
	When $R\geq E N^{3/2}/\beta $,  we further have
	\[
	\frac{1}{R}\sum_{r=1}^{R}\E \left[ \left\lVert \nabla \fxrCs \right\rVert^2 \right]
	=
	O\left(
	\frac{ E^{\frac{1}{2}}  \left(  G^2  + \sigma^2  \right)  +  BL}
	{ \beta^{\frac{1}{2}} N^{\frac{1}{4}} R^{\frac{1}{2}} }   
	\right) 
	=
	O\left(
	\frac{ E^{\frac{1}{2}}}
	{ N^{\frac{1}{4}} R^{\frac{1}{2}} }   
	\right).
	\]
\end{theorem}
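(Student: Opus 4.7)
The plan is to extend the proof of Theorem~\ref{the:main} to the communication round-based setting by treating one communication round as the atomic unit of analysis. First, I would establish a round-level analog of Lemma~\ref{le:I}: under Assumption~\ref{as:5}, the client selection rule on Lines~5--7 of Algorithm~\ref{alg:multiround} ensures $r - \Rir \leq \NK E - 1$ for every round $r$ and client $i$. The pigeon-hole argument is identical to that of Lemma~\ref{le:I}, only reinterpreted at the round level: over any window of $\NK E$ rounds, Assumption~\ref{as:5} guarantees that every client appears in $\Ch^r$ at least $\NK$ times, so the ``lowest-$\Rirs$'' policy forces each one to be selected.

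Second, I would mimic the decomposition behind Theorem~\ref{the:main}, but applied to the one-round loss difference $\E[\fxrCs] \to \E[f(\x^{rC})]$. Using $L$-smoothness (Assumption~\ref{as:1}) on the cumulative update $\xrC - \x^{rC-C} = \ur = \frac{1}{N}\sum_{i=1}^{N} \uiRir$, the difference expands into an inner product with $\nabla \fxrCs$ plus a quadratic term $\|\ur\|^2$. The inner product is split by adding and subtracting $\frac{1}{N}\sum_{i=1}^{N}(-\gamma C)\nabla f_i(\x^{rC-C})$, producing (a) a descent term proportional to $\|\nabla \fxrCs\|^2$, (b) a round-level \emph{staleness} term measuring the drift between $\uiRir$ and $\uir$ over the $r - \Rir \leq \NK E - 1$ stale rounds, and (c) a \emph{client drift} term measuring how far the local gradients $\git$ computed inside round $r$ deviate from $\nabla f_i(\x^{rC-C})$. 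Under Assumption~\ref{as:3}, both $\uir$ and the stale gaps $\uir - \uiRir$ are bounded in expected squared norm by $O(\gamma^2 C^2 I^2 G^2)$ with $I = \NK E - 1$, playing the role that $I$ plays in Theorem~\ref{the:main}.

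The main obstacle is handling the intra-round client drift, which is absent in the single-iteration setting of Theorem~\ref{the:main}. Within round $r$, each selected client takes $C$ sequential stochastic steps on drifted local parameters $\x_i^{rC-C}, \x_i^{rC-C+1}, \ldots, \x_i^{rC-1}$, so each constituent $\git$ is evaluated off the global iterate $\x^{rC-C}$. I would bound this drift recursively: by Assumption~\ref{as:3}, the intra-round displacement is at most $O(\gamma C G)$ in expectation, and smoothness converts this into an extra additive error of order $\gamma^2 C^2 L^2 G^2$ on the inner product term, exactly as in standard FedAvg-style analyses. Combined with the staleness and stochastic variance contributions, the final per-round bound takes the same three-term shape as in Theorem~\ref{the:main}, but with the drift factor $C$ appearing multiplicatively on $\gamma$ wherever $\gamma$ multiplied $I$ before.

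Third, I would sum the per-round inequality over $r = 1, \ldots, R$, telescope using $\E[f(\x^{RC})] \geq \E[f(\x^*)]$, and divide by $R$. This yields a bound of the schematic form
\begin{equation*}
\frac{1}{R}\sum_{r=1}^{R}\E\bigl[\lVert\nabla\fxrCs\rVert^2\bigr]
\;\lesssim\;
\frac{B}{\gamma C R} + \frac{\gamma C L \sigma^2}{N} + \gamma C I L\bigl(G^2 + \sigma^2\bigr)/\sqrt{N} + \gamma^2 C^2 I^2 L^2 G^2.
\end{equation*}
Substituting the prescribed $\gamma = \beta^{1/2} N^{1/4}/(2 L C E^{1/2} R^{1/2})$ --- note that the extra $1/C$ relative to the learning rate in Corollary~\ref{co:main} is exactly what cancels the $C$-factors introduced by the multi-step round --- and using $I = \NK E - 1 = O(E/\beta)$, the four terms collapse into the two-term expression in the first claim of the theorem. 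The second claim follows, just as in Corollary~\ref{co:main}, by observing that under $R \geq E N^{3/2}/\beta$ the residual $O(E G^2 N^{1/2}/(R\beta))$ term is dominated by the $O(E^{1/2}/(N^{1/4} R^{1/2} \beta^{1/2}))$ leading term, giving the stated sublinear speedup in $N$.
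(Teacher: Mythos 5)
Your plan is essentially sound and reaches the right bound, but it is organized differently from the paper. The paper does not perform a per-round block analysis at all: it introduces an equivalent per-iteration reformulation (Algorithm~\ref{alg:eqv}) in which the global iterate moves at every one of the $RC$ virtual iterations, defines the anchor $\Tht=\min_i\left(\Rirt C-r^tC+t\right)$, and then literally reruns the proof of Theorem~\ref{the:main} (with round-level analogs Lemmas~\ref{le:I-2}--\ref{le:eg-ef-2}, and with $I$ replaced by $IC$ in the staleness bounds) to control $\frac{1}{RC}\sum_{t=1}^{RC}\E\bigl[\lVert\nabla \fxThtS\rVert^2\bigr]$, and only at the very end bridges from the anchor points to the round endpoints $\nabla\fxrCs$ via an extra smoothness step. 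You instead apply smoothness once per round to the aggregated update $\ur$ and handle the intra-round client drift explicitly, FedAvg-style; this is a legitimate alternative that buys a more familiar local-SGD structure, at the cost of having to redo the drift recursion rather than inheriting Theorem~\ref{the:main} almost verbatim. One spot in your sketch deserves care: you pair the descent inner product with $\nabla f(\x^{rC-C})$, but $\x^{rC-C}$ is correlated with the stale stochastic noise inside the $\uiRir$ of non-participating clients, so conditional unbiasedness alone cannot kill that cross term. You must either re-anchor at an older, measurable point (this is exactly the role of the paper's $\Tht$) or split off the noise and apply Cauchy--Schwarz together with the cross-client independence argument (the analog of Lemma~\ref{le:eg-ef}) to get the $\sigma^2/N$ variance reduction; this is precisely where the $\gamma C I L(G^2+\sigma^2)/\sqrt{N}$ term in your schematic bound comes from, and your proposal asserts the term without spelling out that mechanism. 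With that step made explicit, your summation, the substitution of $\gamma=\beta^{1/2}N^{1/4}/(2LCE^{1/2}R^{1/2})$, $I=O(E/\beta)$, and the threshold $R\geq EN^{3/2}/\beta$ all match the paper's Corollary-style bookkeeping and yield the stated result.
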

\begin{proof}[Proof of Theorem~\ref{co:multiround}]
	Please refer to \myappendix{sec:multiround proof}.
\end{proof}

\section{Experiments}
\label{sec:experiment}

In this section, we evaluate the performance of FedLaAvg in different
tasks, datasets, models, and availability settings.

\subsection{Experimental Setups}

\subsubsection{Federated learning tasks.}
We choose the following two tasks for evaluation.

\textbf{Image classification.}
We first take an image classification task
over the MNIST dataset~\cite{mnist},
to validate the convergence of FedLaAvg, 
and the divergence of FedAvg even in a simple convex setting.
The dataset consists of 70000 28$\times$28 grey images of 
hand-written digits,
where 60000 images are for training and the other 10000 
ones are for testing.
In this task, we adopt the multinomial logistic regression model, 
which has a convex optimization objective.
To simulate non-IID data distribution, 
we set each client to hold only images of one certain digit,
and the number of clients holding the same digit to be $N/10$.
To simulate data unbalance, 
we let the number of samples on each client roughly 
follow a normal distribution with mean $6\times 10^4/N$ 
and variance $(1\times 10^4/N)^2$.

\textbf{Sentiment analysis.}
In the general non-convex case,
we take a binary text sentiment classification task
on the Sentiment140 dataset~\cite{sentiment140}.
The dataset consists of 1600000 tweets 
collected from 659775 twitter users.
In this task,
we use a two-layer LSTM binary classifier 
with 16 hidden units and pretrained 25D GloVe embeddings~\cite{glove}
as the classification model.
We naturally partition this dataset by 
letting each Twitter account correspond to a client.
We keep only the clients who hold more than 40 samples 
and get 1473 clients,
and randomly select one-tenth of the data as the test dataset.

\begin{figure}[t]
	\includegraphics[width=0.48\textwidth]{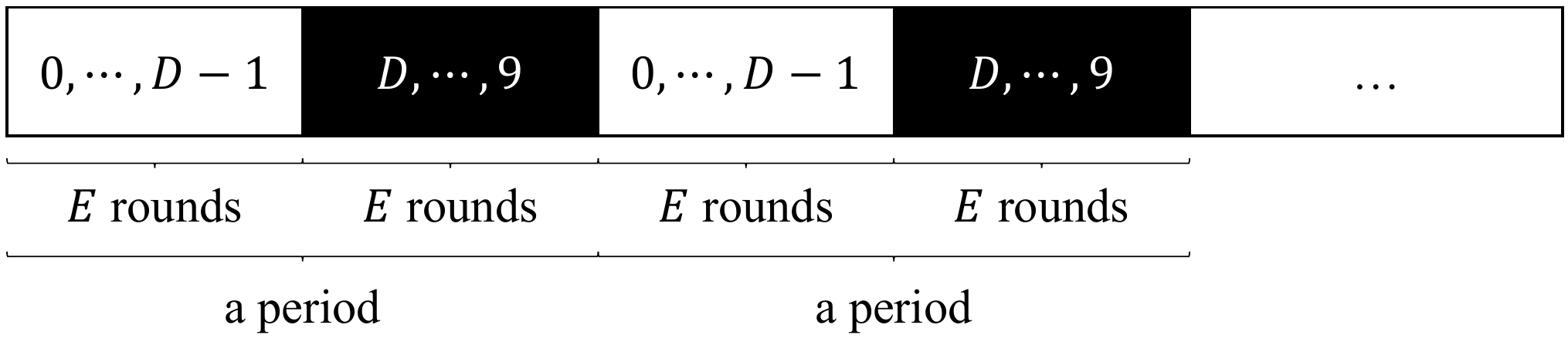}
	\caption{The availability setting with diurnal pattern adopted in 
	the MNIST experiments. 
	Clients holding digits $0,\cdots,D-1$ are available 
	in white grids, while the remaining clients are available 
	in black grids.}
	\label{fig:gantt}
\end{figure}
\begin{figure}[t]
	\includegraphics[width=0.48\textwidth]{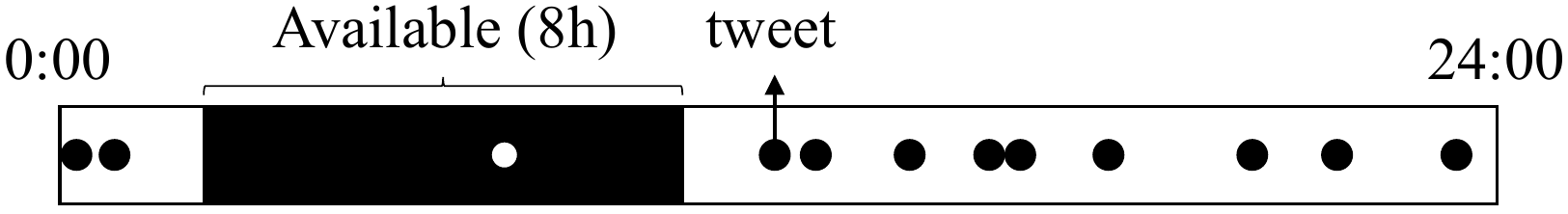}
	\caption{The availability setting adopted in the Sentiment140 experiments.
	The black or white dots denote that the client
	sent a tweet at this moment one day.}
	\label{fig:sentiment140_availability}
\end{figure}

\begin{figure*}[t]
	\centering	
	\subfigure[Legends]{\label{mnist_legends_tl}\includegraphics[width=0.25\textwidth]{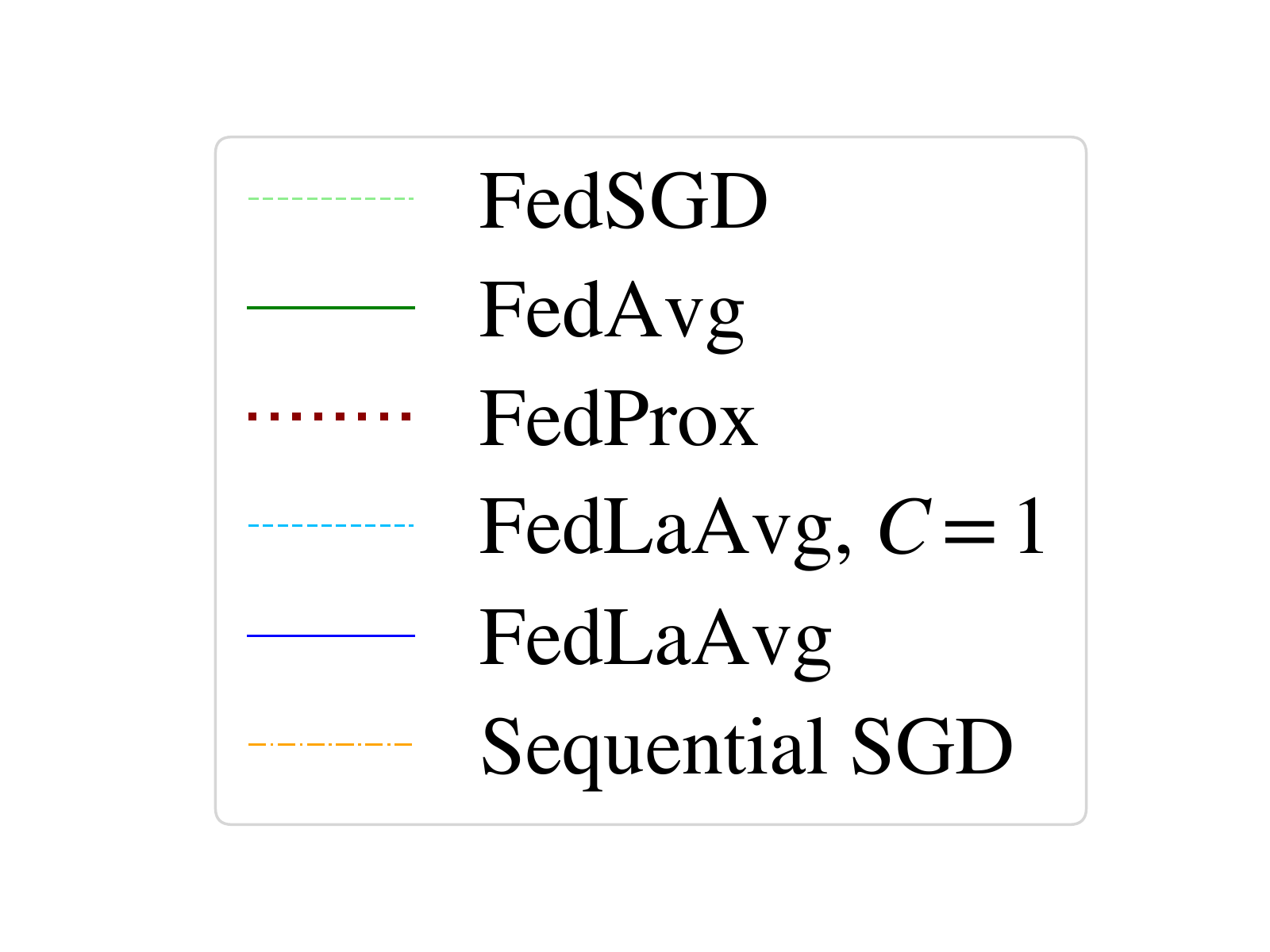}}
	\subfigure[$E=100$, $D=3$]{\label{fig:mnist_E100_D3_tl}\includegraphics[width=0.3\textwidth]{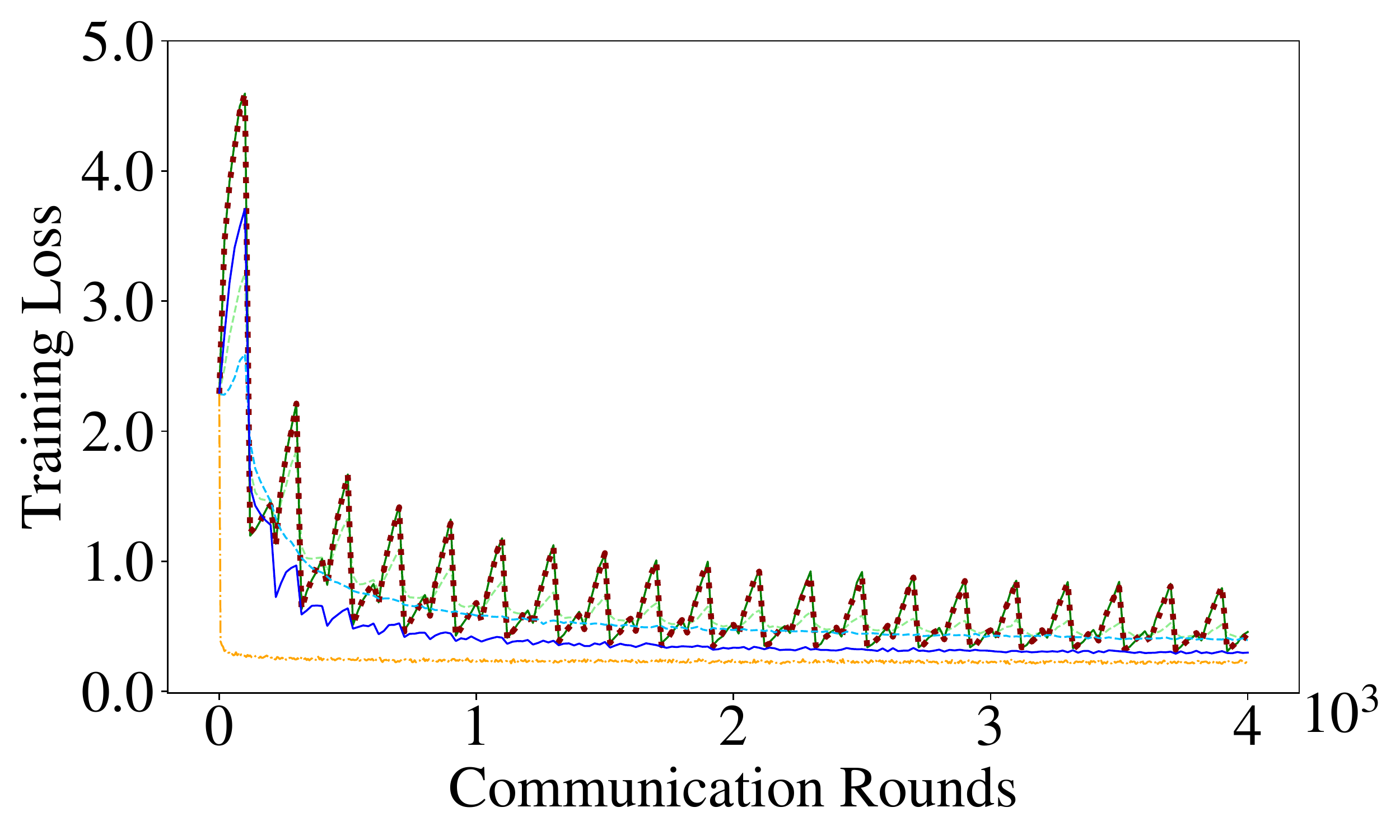}}
	\subfigure[$E=100$, $D=5$]{\label{fig:mnist_E100_D5_tl}\includegraphics[width=0.3\textwidth]{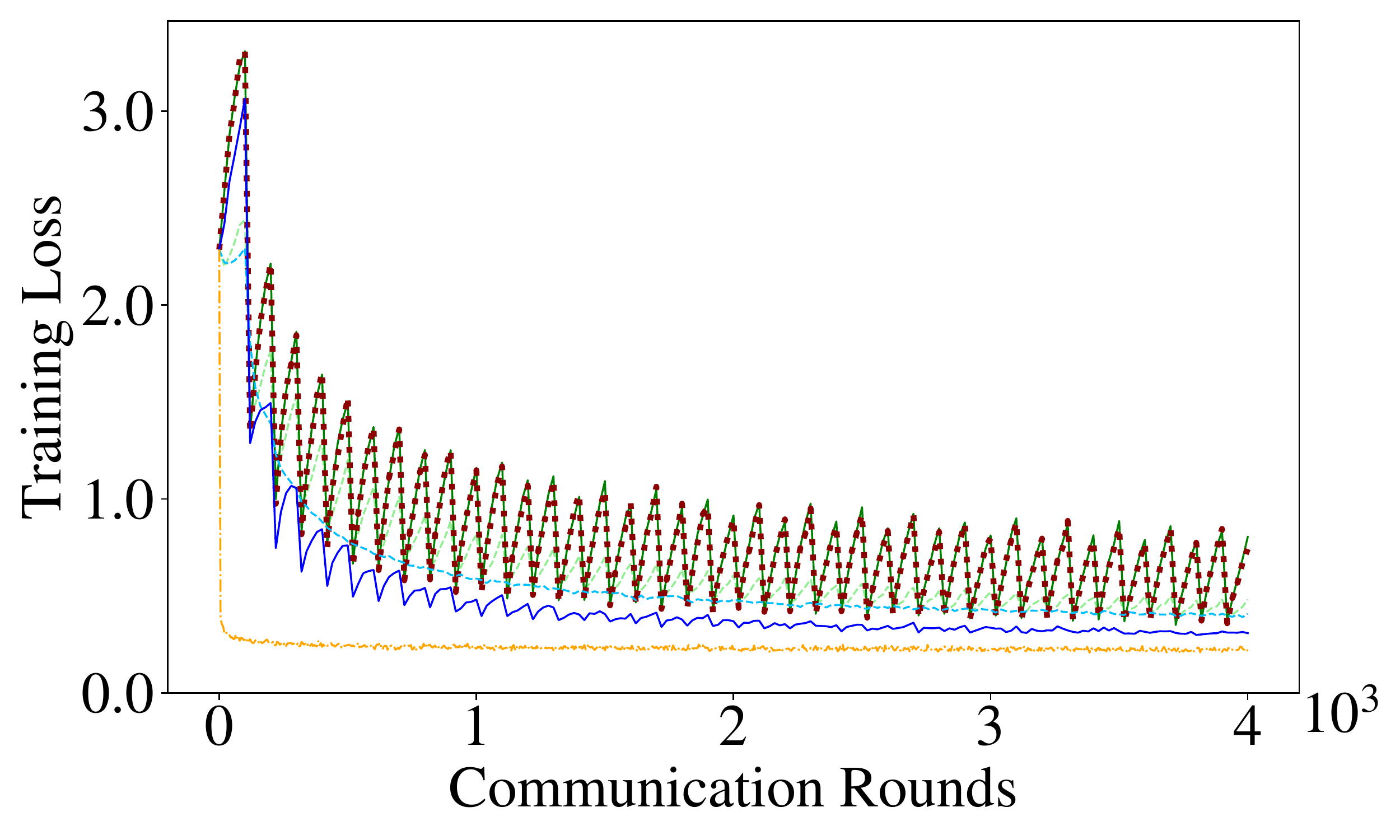}}
	\subfigure[$E=50$, $D=1$]{\label{fig:mnist_E50_D1_tl}\includegraphics[width=0.3\textwidth]{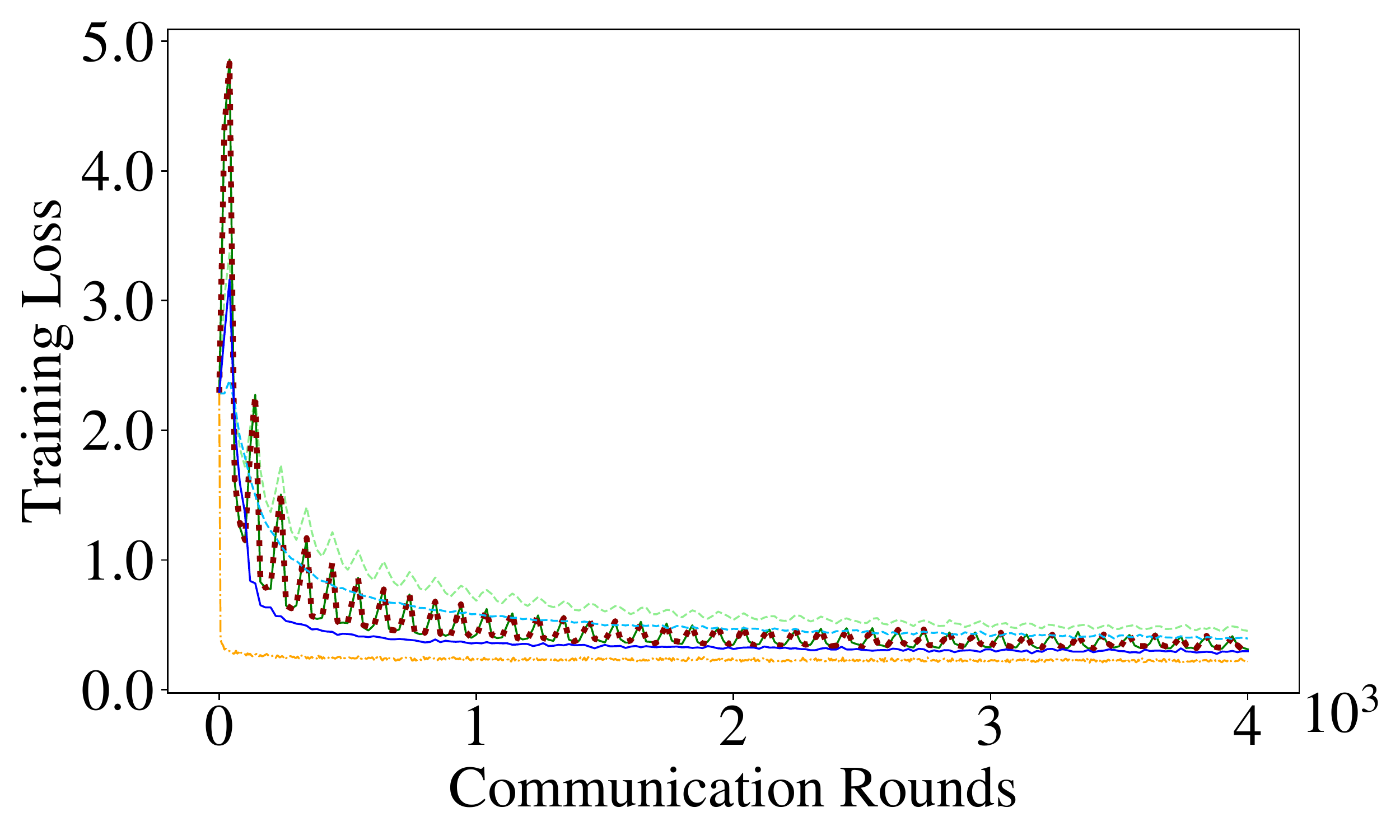}}
	\subfigure[$E=100$, $D=1$]{\label{fig:mnist_E100_D1_tl}\includegraphics[width=0.3\textwidth]{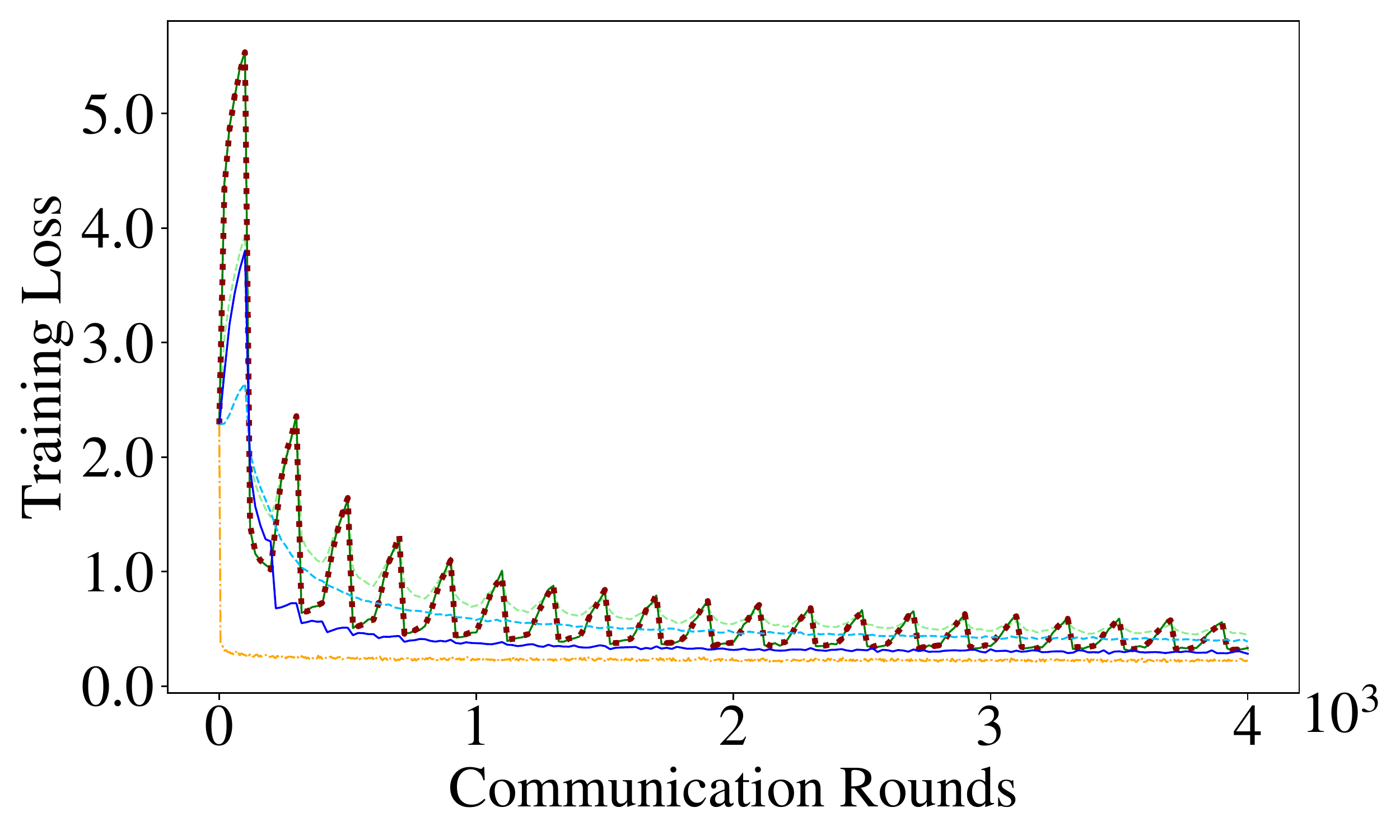}}
	\subfigure[$E=200$, $D=1$]{\label{fig:mnist_E200_D1_tl}\includegraphics[width=0.3\textwidth]{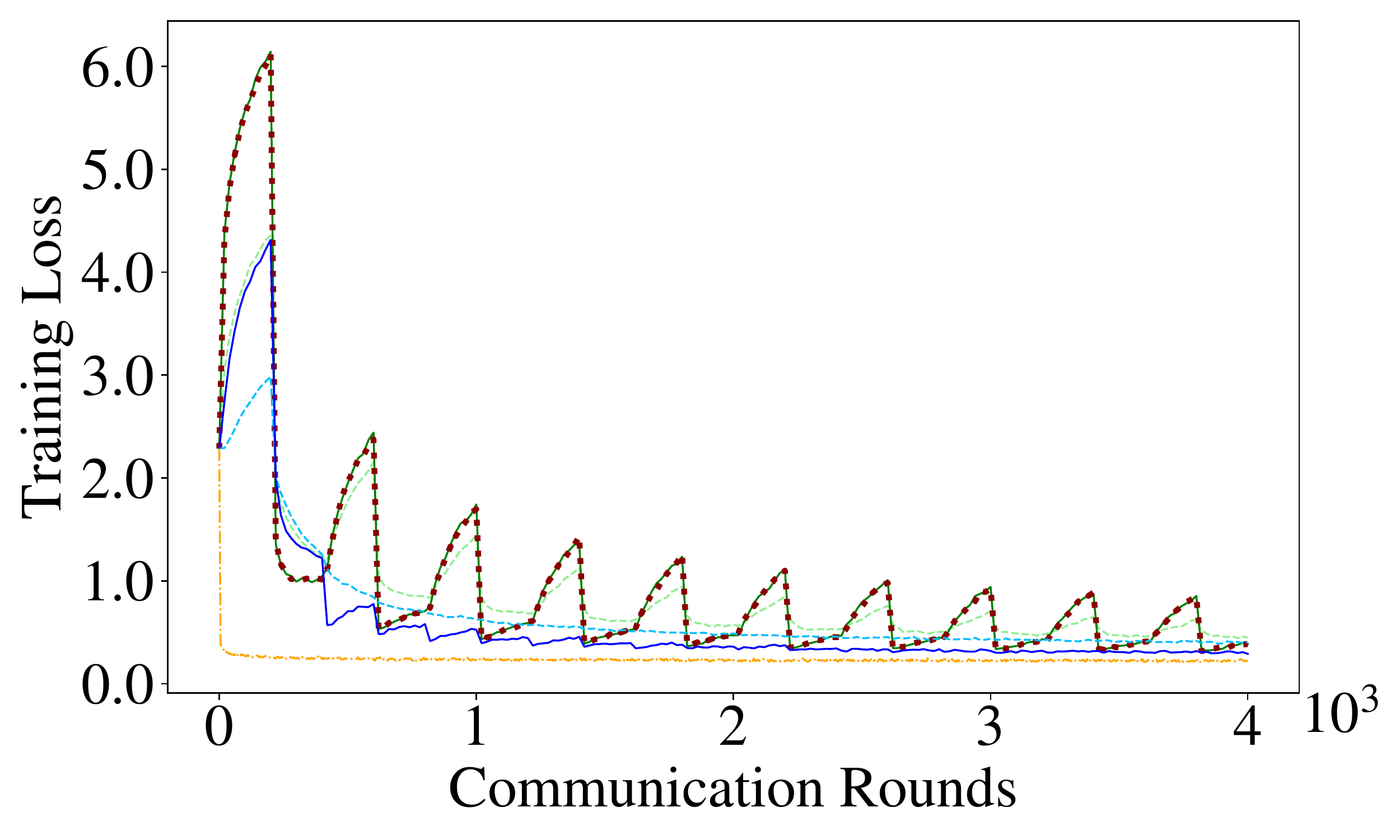}}
	\caption{Training losses of
	FedSGD, FedAvg, FedProx, FedLaAvg, and sequential SGD
	in the MNIST image classification task 
	with different client availability settings.}
	\label{fig:mnist_tl}
\end{figure*}

\subsubsection{Baselines}
We choose the following baselines for comparison.

\textbf{FedAvg and FedSGD.}
FedAvg was proposed by the seminal work of FL~\cite{McMahan2016CommunicationEfficientLO},
which naively uses the average of the collected local model updates 
to update the global model.
FedSGD is a special case of FedAvg with only one local iteration.

\textbf{FedProx.}
FedProx~\cite{FedProx} is the first variant of FedAvg with guaranteed convergence
while allowing partial client participation.
The main modification to FedAvg is that
FedProx adds a quadratic proximal term to 
explicitly limit the local model updates.
We follow the practice of the original work~\cite{FedProx}
to set the proxy term coefficient as 1.0 for MNIST and 0.01 for Sentiment140.

\textbf{Sequential SGD.}
We run the standard SGD algorithm to train the 
global model using the whole dataset, 
and set the total number of iterations in each round as $\beta N C$.
This ensures the same amount of computation per round
with the distributed algorithms. 
The result of sequential SGD can be regarded as 
the ideal solution for the optimization problem.

\subsubsection{Availability Settings and Other Settings}
We simulate the client availability as follows.

In the MNIST image classification experiments, 
we adopt the typical subcase of our 
intermittent client availability model with the 
diurnal pattern~\cite{Bonawitz2019TowardsFL,Yang2018APPLIEDFL,Eichner2019SemiCyclicSG},
depicted in Figure~\ref{fig:gantt}.
In the white grids, the clients holding digits $0, 1,\cdots, D-1$ 
are available for $E$ rounds, 
and in the black grids, the remaining clients are available 
for the next $E$ rounds. 
This setting captures that  the
client availability correlates with the local data distribution 
in practice, and $D$ controls the degree of
such availability heterogeneity.

\begin{figure*}[t]
	\centering
	\subfigure[Training loss with different $N$]{\label{fig:mnist_N_tl}\includegraphics[width=0.4\textwidth]{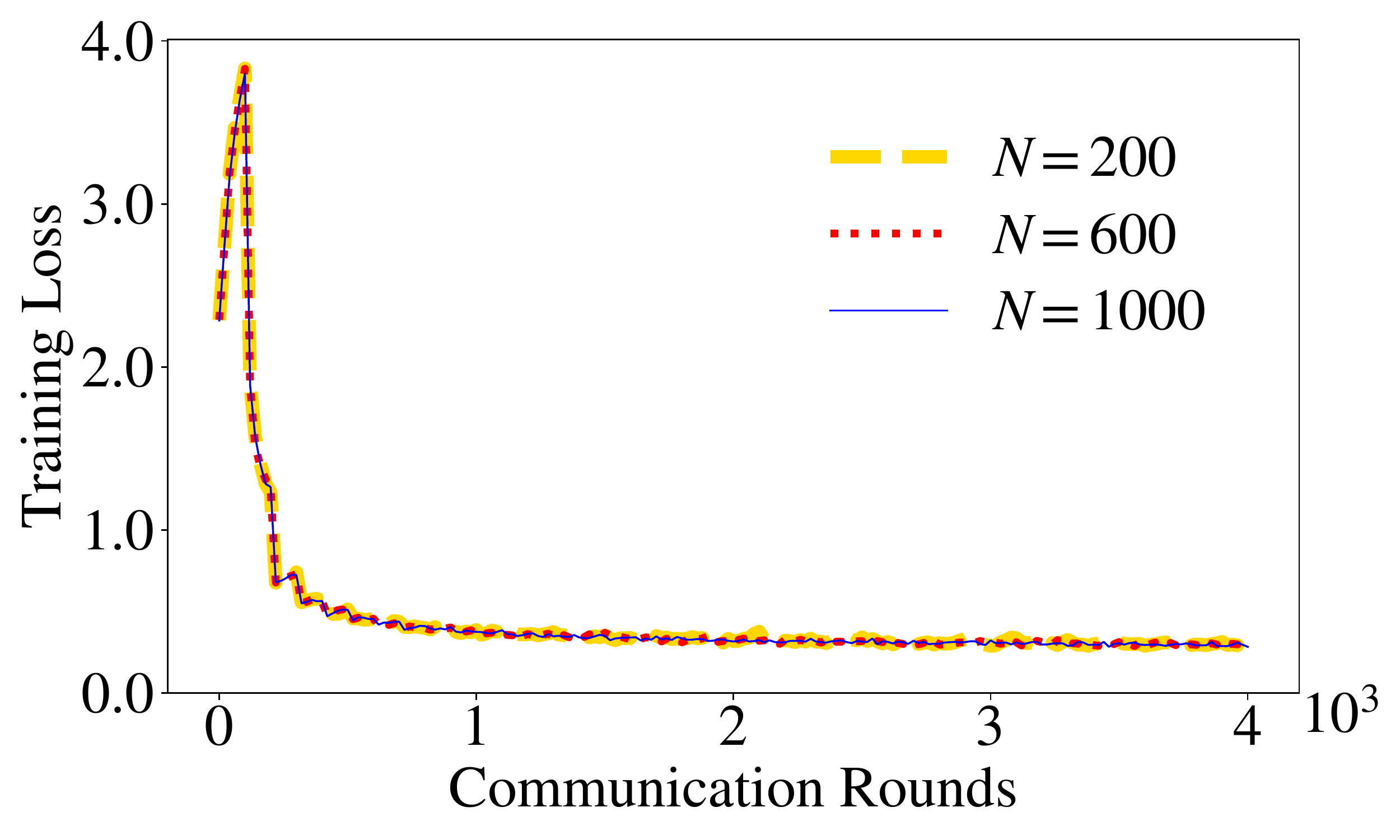}}
	\subfigure[Training loss with different $\beta$]{\label{fig:mnist_beta_tl}\includegraphics[width=0.4\textwidth]{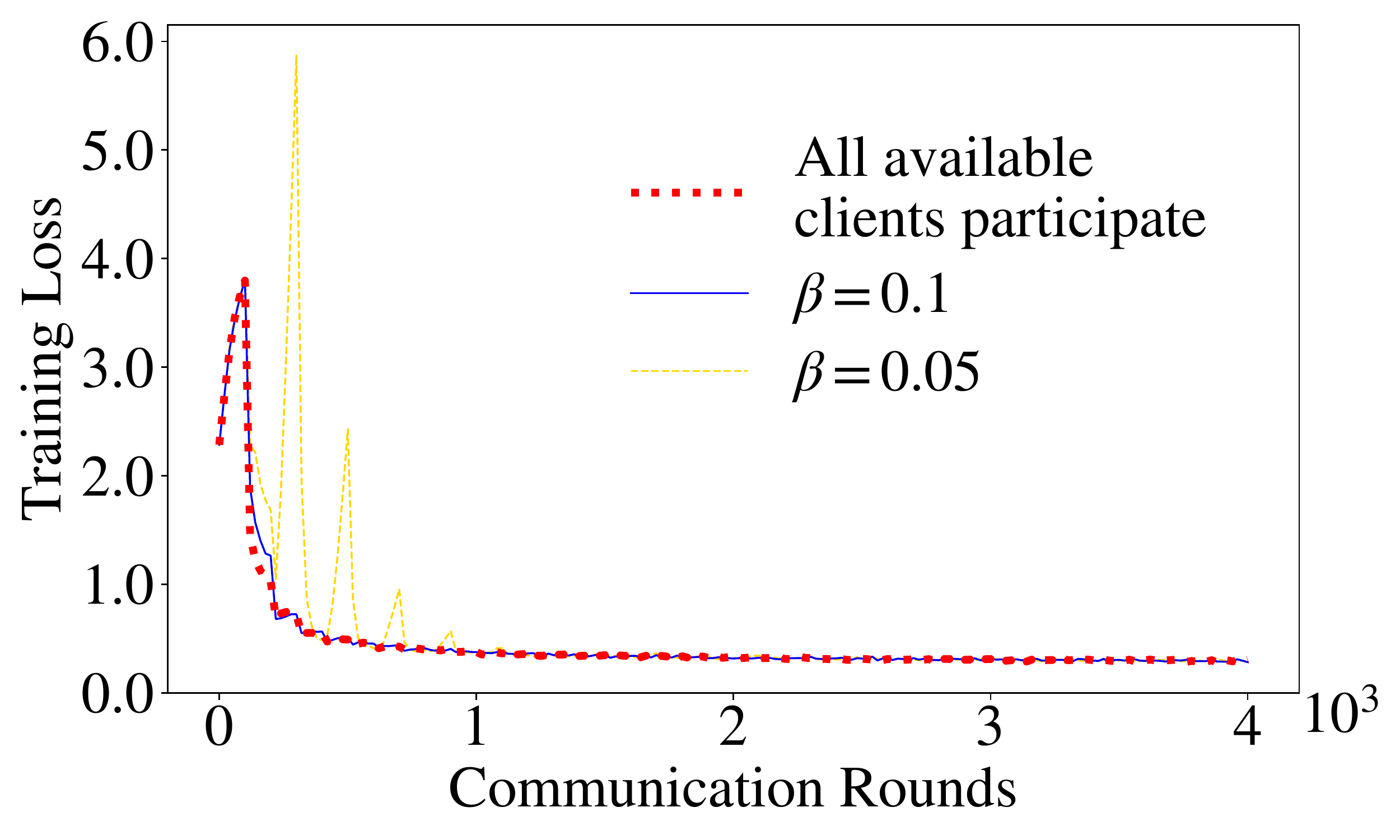}}
	\caption{Training losses of FedLaAvg on MNIST dataset by varying 
	the total number of clients $N$ and the proportion of selected clients $\beta$.}
	\label{fig:mnist_self_test-acc}
\end{figure*}

\begin{figure*}[tb]
	\centering	
	\subfigure[Legends]{\label{sentiment_legends_tl}\includegraphics[width=0.25\textwidth]{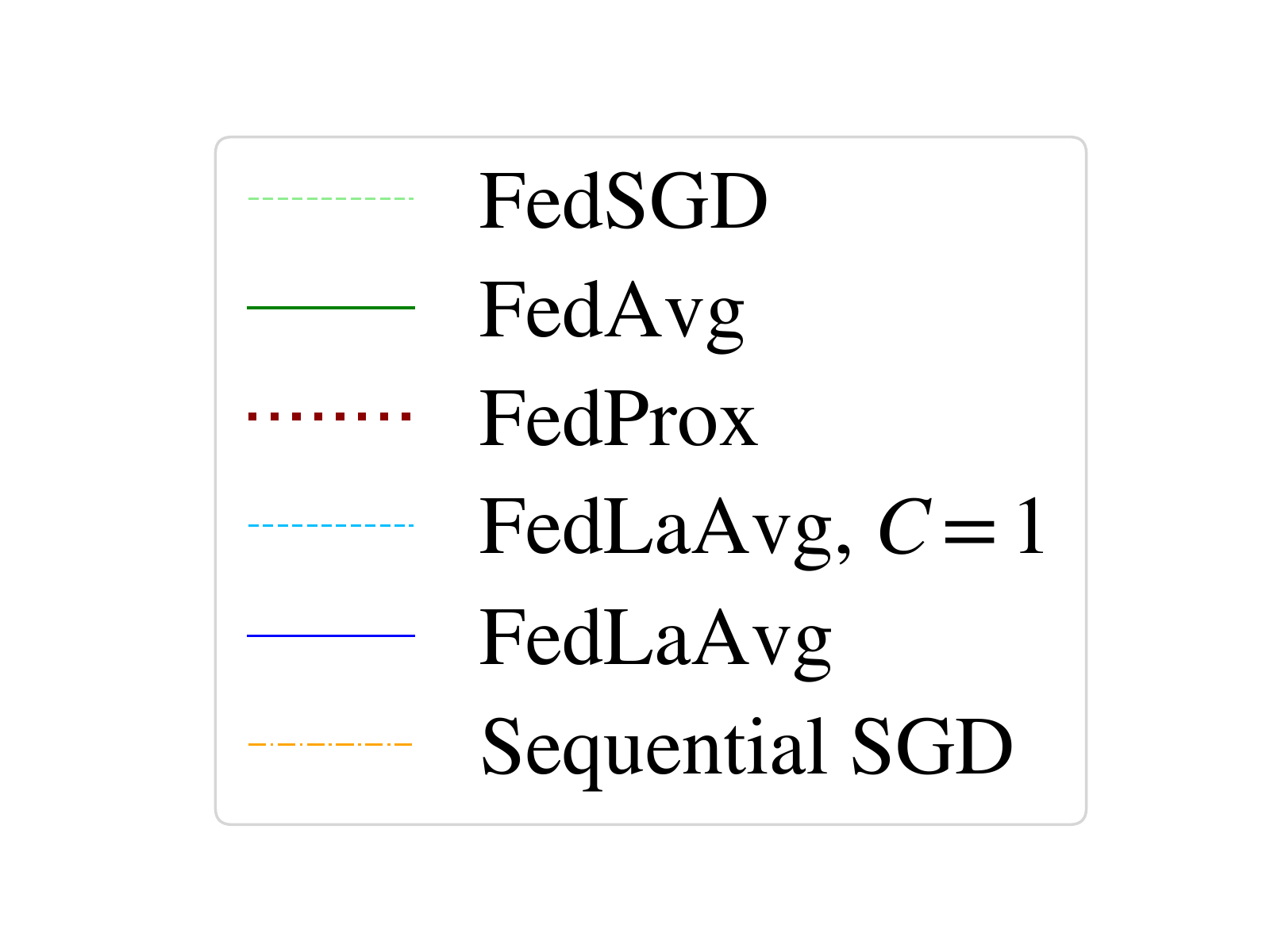}}
	\subfigure[$E=120$, $\alpha=0.25$]{\label{fig:sentiment_E120_a0.25_tl}\includegraphics[width=0.3\textwidth]{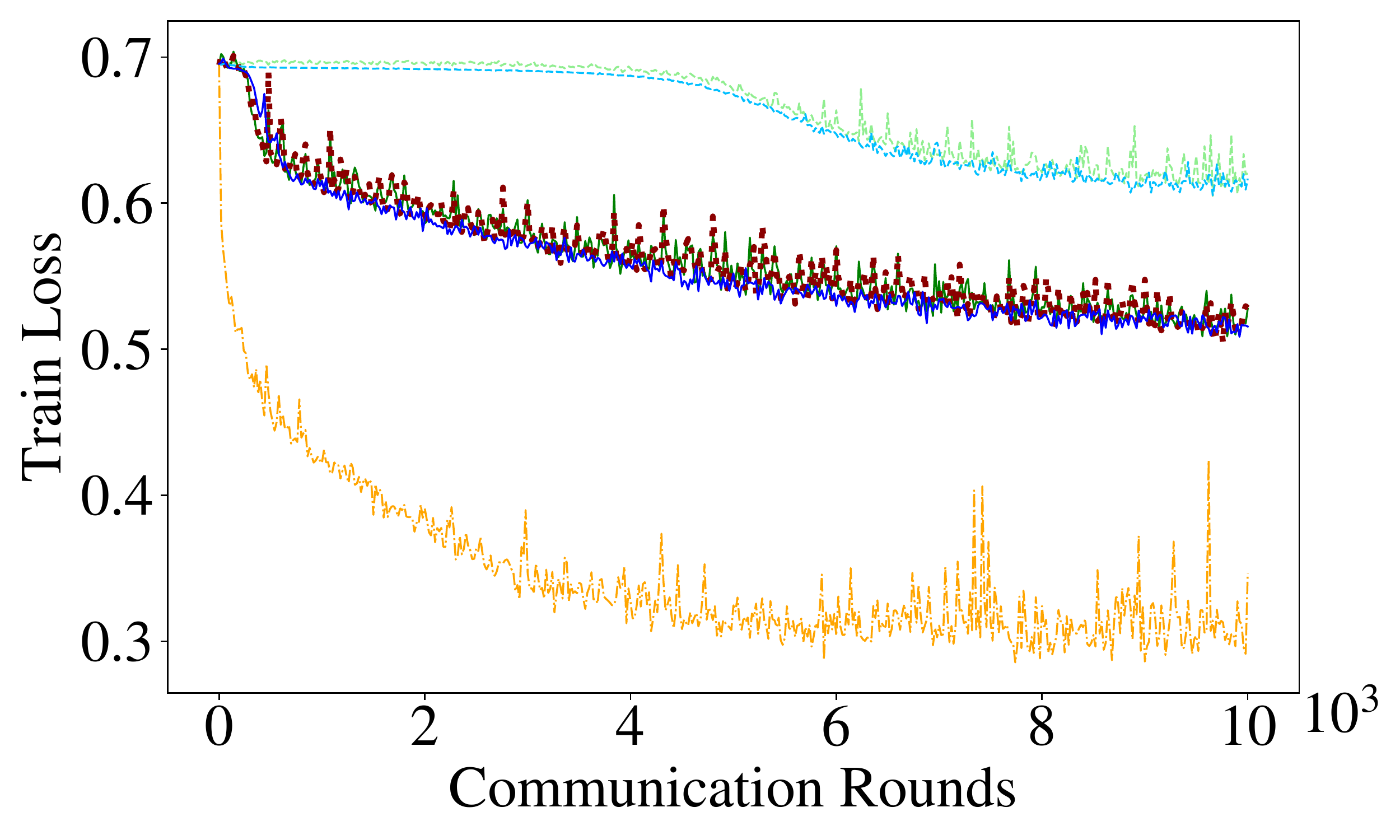}}
	\subfigure[$E=120$, $\alpha=0.5$]{\label{fig:sentiment_E120_a0.5_tl}\includegraphics[width=0.3\textwidth]{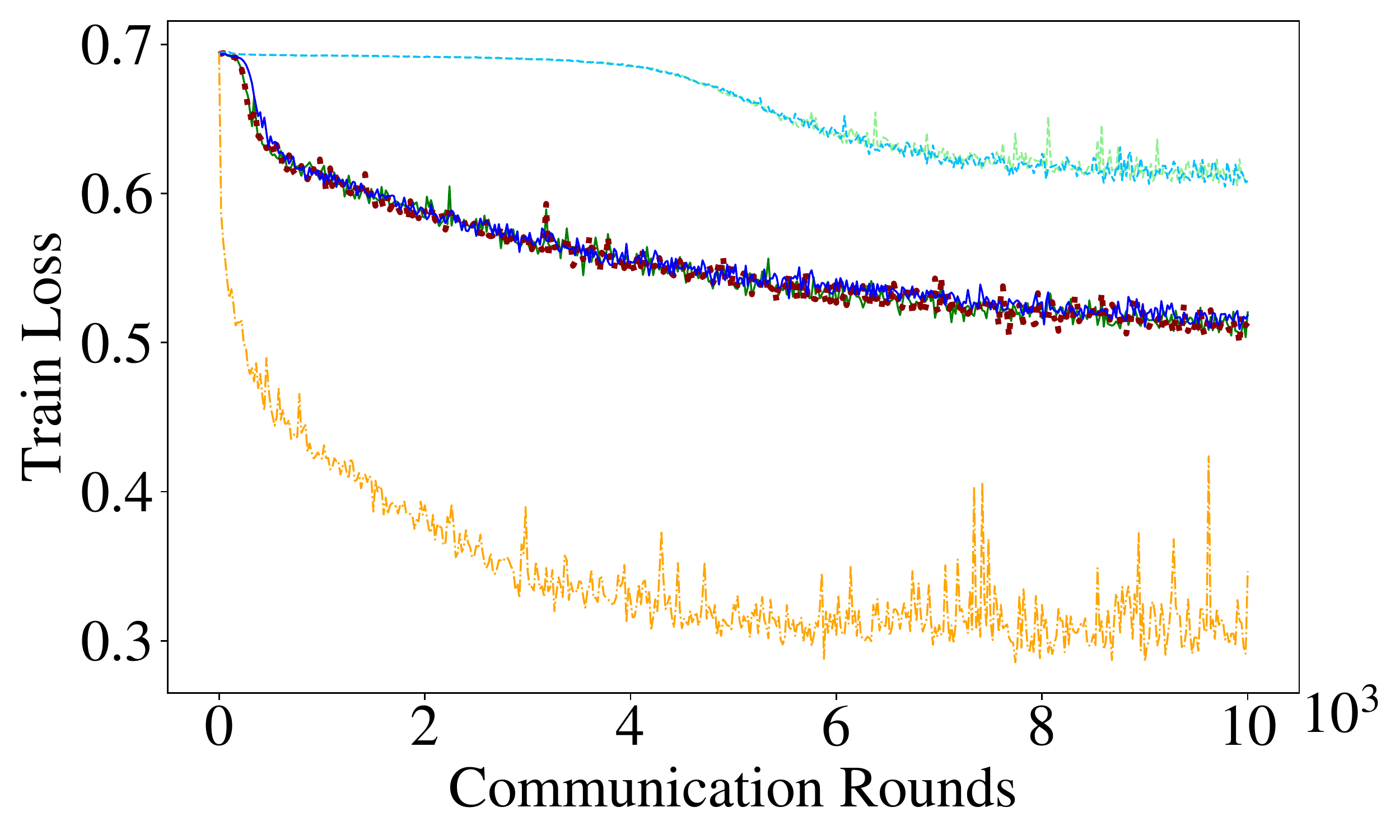}}
	\subfigure[$E=24$, $\alpha=0$]{\label{fig:sentiment_E24_a0_tl}\includegraphics[width=0.3\textwidth]{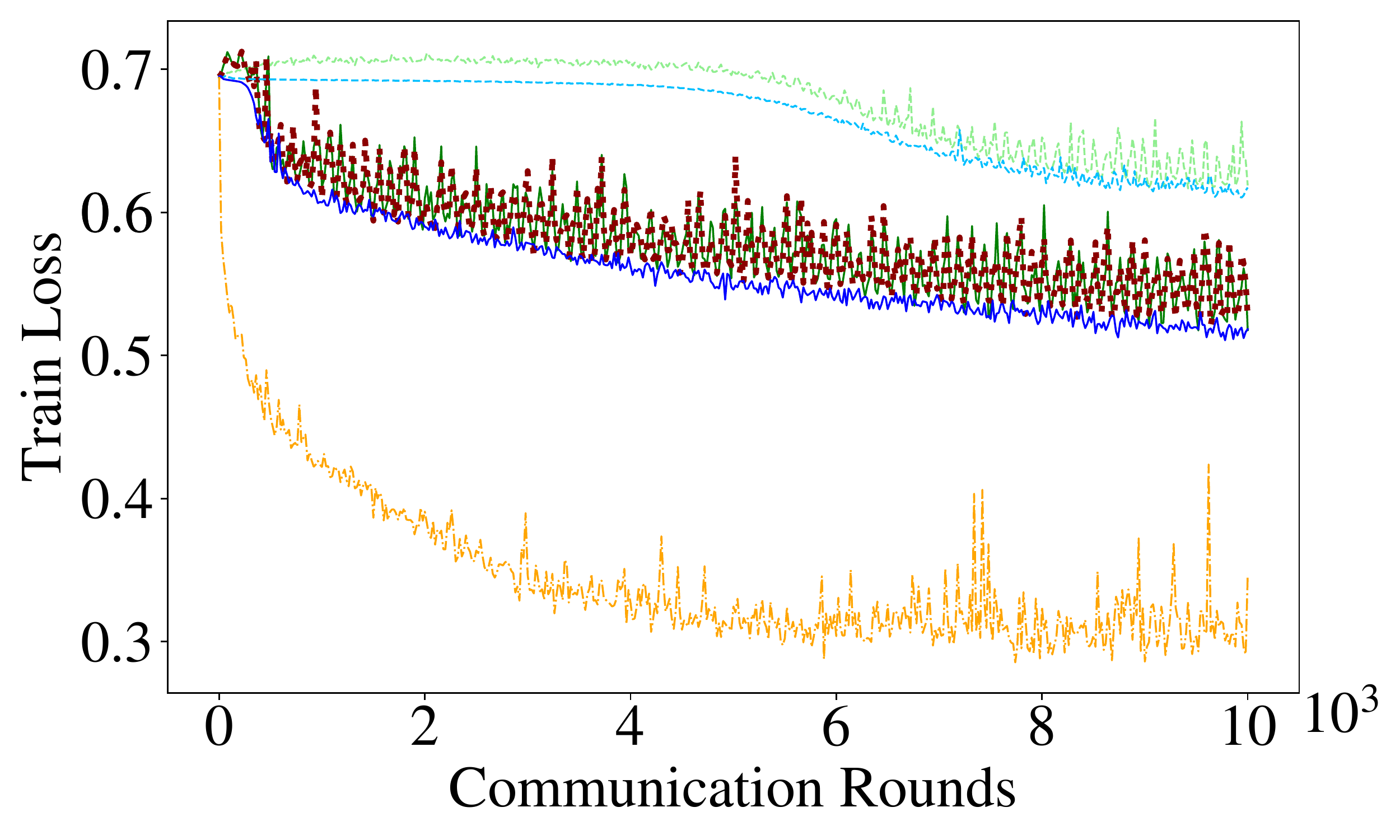}}
	\subfigure[$E=120$, $\alpha=0$]{\label{fig:sentiment_E120_a0_tl}\includegraphics[width=0.3\textwidth]{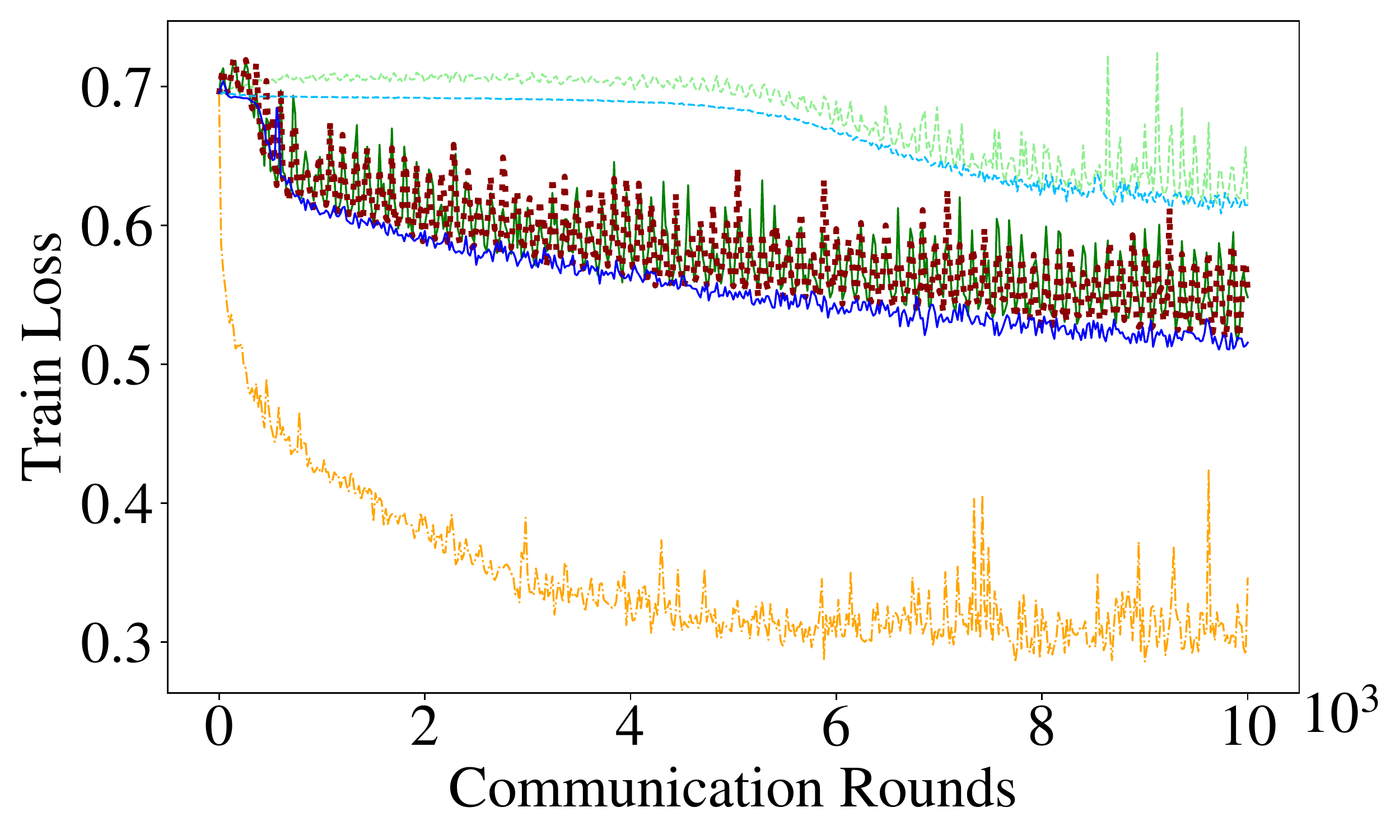}}
	\subfigure[$E=240$, $\alpha=0$]{\label{fig:sentiment_E240_a0_tl}\includegraphics[width=0.3\textwidth]{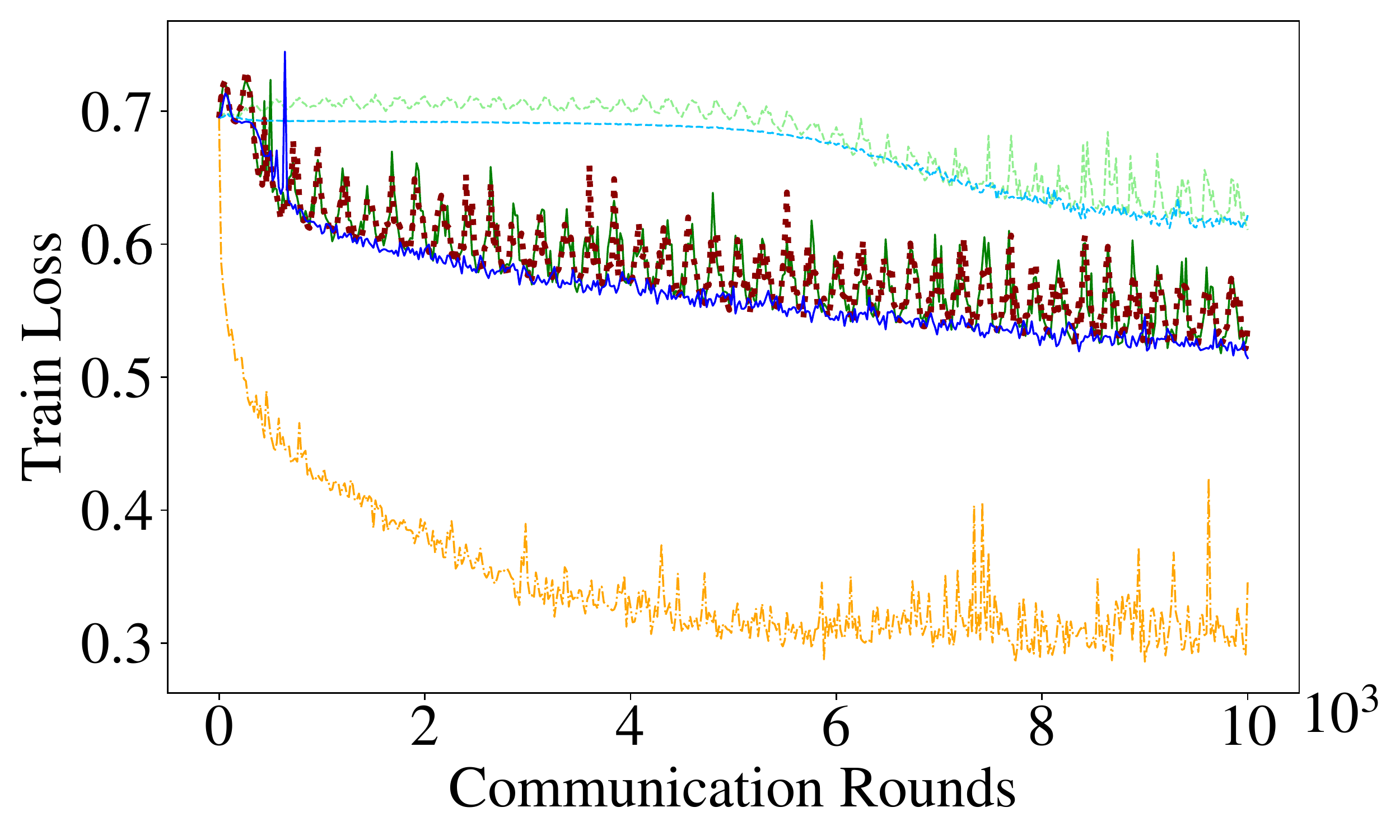}}
	\caption{Training losses of
	FedSGD, FedAvg, FedProx, FedLaAvg, and sequential SGD
	in the Sentiment140 sentiment analysis task 
	with different client availability settings.}
	\label{fig:sentiment_tl}
\end{figure*}

\begin{figure*}[tb]
	\centering
	\subfigure[Training loss with different $N$]{\label{fig:sentiment_N_tl}\includegraphics[width=0.4\textwidth]{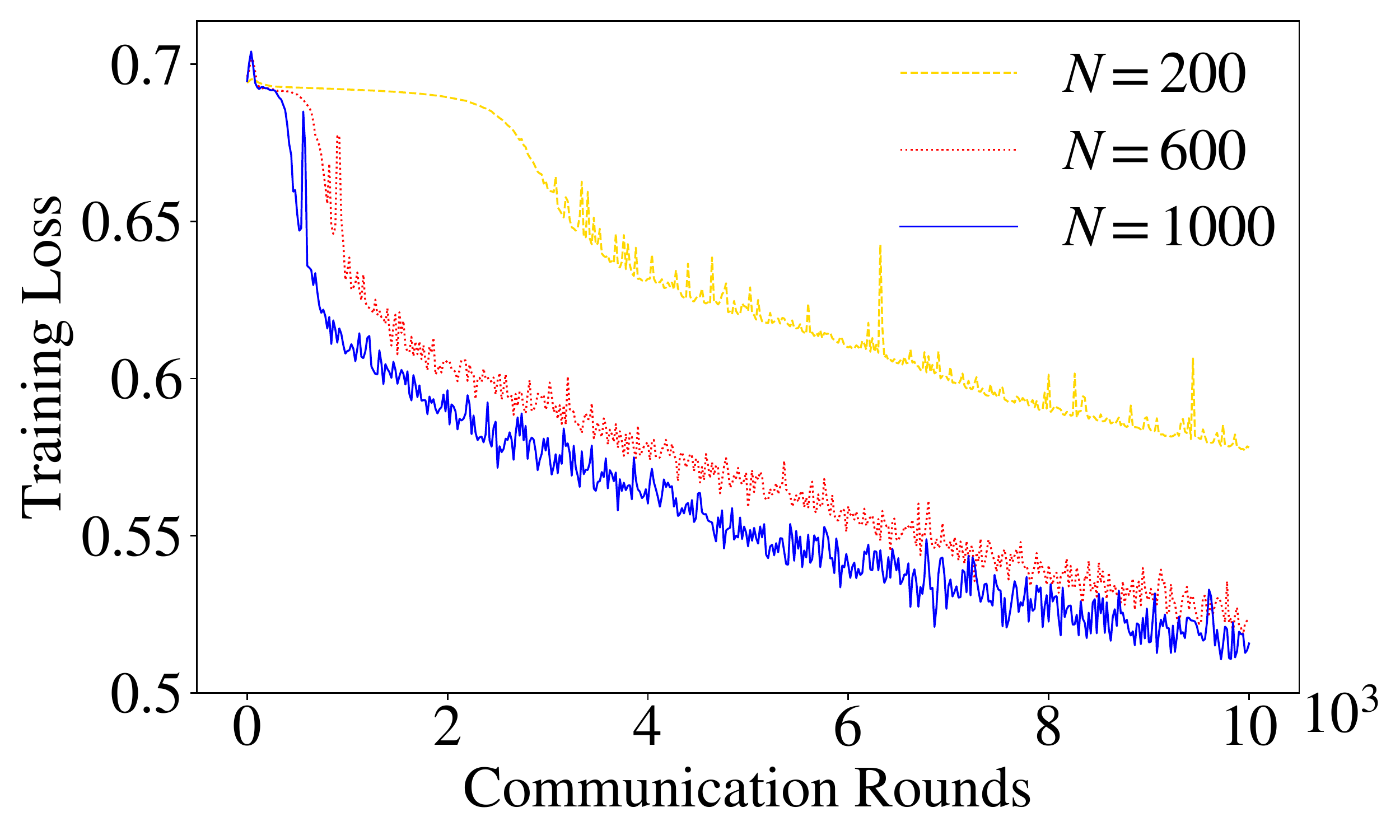}}
	\subfigure[Training loss with different $\beta$]{\label{fig:sentiment_beta_tl}\includegraphics[width=0.4\textwidth]{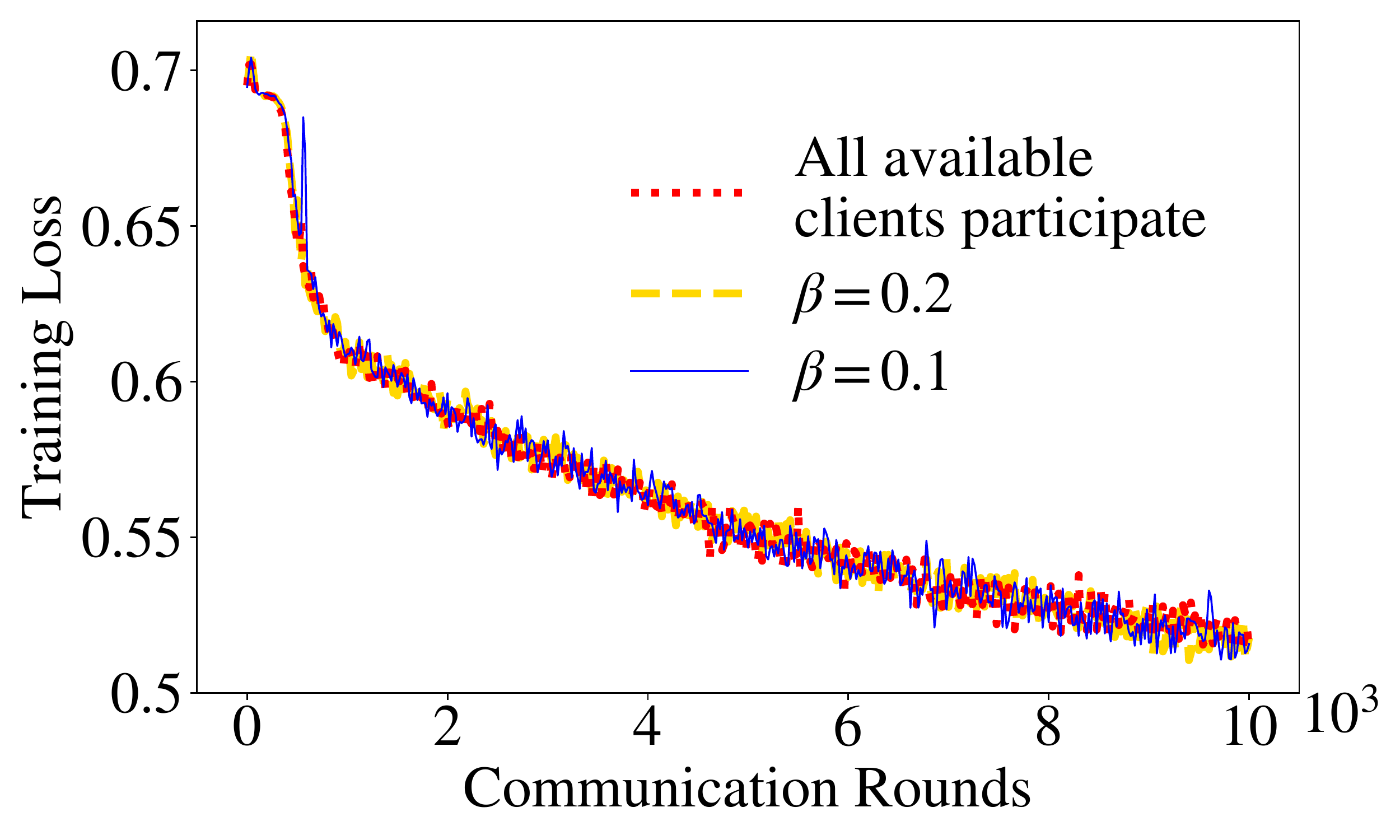}}
	\caption{Training losses of FedLaAvg on Sentiment140 dataset
	by varying the total number of clients $N$ and the proportion of
	selected clients $\beta$.}
	\label{fig:sentiment_self_test-acc}
\end{figure*}

In the sentiment analysis experiments 
over Sentiment140, we assume that 
clients are available only when the Twitter users are sleeping,
such that the devices are idle and eligible for the model training.
Hence, we set each client to be available for eight hours each day,
during which the client sends the least number of tweets,
as illustrated in Figure~\ref{fig:sentiment140_availability}.
To capture the correlation of client availability 
and local data distribution in practical FL,
we vary the label balance as a function of the time of day.
In particular, 
we randomly drop tweets sent in 0:00--1:00
such that $\alpha$ proportion of tweets are positive sentiment;
we randomly drop tweets sent in 12:00-13:00
such that $1-\alpha$ proportion of tweets are positive sentiment; 
and we linearly interpolate for other hours of day.

As the default experiment settings, we set the total number of clients $N=1000$, 
the period length $E=100$ for MNIST and $E=120$ for Sentiment140,
the availability heterogeneity controlling parameter $D=1$ for MNIST
and $\alpha=0.0$ for Sentiment140,
the proportion of selected clients in each round $\beta=0.1$, 
and the number of local iterations $C=10$.
For hyperparameters, we set the learning rate to $0.01$,
and the local batch size to $5$ for MNIST and $2$ for Sentiment140, respectively. 

\subsection{Results in the Convex Case}
We first compare FedLaAvg with the aforementioned baselines
in the MNIST image classification task
under various availability settings.
In particular, we vary $E$ from $50$ to $200$, which controls the 
maximum number of unavailable rounds,
and vary $D$ from 1 to 5,
which controls the differentiation of client availability.
We show the results in Figure~\ref{fig:mnist_tl}.
In general, we observe that FedLaAvg converges 
in approximately 2000 rounds under all the five availability settings.
In addition, FedLaAvg approaches sequential SGD in terms of training loss.
These coincide with Theorem~\ref{the:main}.
In contrast, FedSGD, FedAvg, and FedProx suffer 
from periodic osillation in any availability setting,
especially when either $E$ or $D$ is large,
which validates Theorem~\ref{the:divergence}.
By comparing FedProx and FedAvg,
we can see that the training losses
of these two algorithms almost overlap,
which indicates that FedProx also cannot solve the
issue of intermittent client availablity.
By comparing FedLaAvg with 
one and multiple local iterations, 
we find that introducing multiple local iterations 
speeds up the convergence with respect to communication rounds
in practice.

We then evaluate the effect of the total number of clients $N$ 
and the proportion of selected clients $\beta$ on FedLaAvg 
in Figure~\ref{fig:mnist_self_test-acc}.
From Figure~\ref{fig:mnist_N_tl},
we can see that FedLaAvg with different $N$ behaves almost uniformly.
This is because a larger $N$ improves the convergence by reducing 
the variance of the global model update,
while for such a simple convex training scenario,
the variance is already small enough with $N=200$.
From Figure~\ref{fig:mnist_beta_tl},
we observe that selecting more clients in each round
leads to more stable training,
but after reaching a certain threshold, 
selecting further more clients would not help much.

\subsection{Results in the Non-Convex Case}
We next evaluate FedLaAvg and the baselines
over the Sentiment140 dataset.
First, we compare different algorithms 
under various availability settings 
in Figure~\ref{fig:sentiment_tl}.
In particular, we vary $E$ from 24 to 240,
and vary $\alpha$ from 0 to 0.5, which controls the differentiation of 
client availability.
Similar to the convex case,
FedLaAvg converges in all the five availability settings,
while FedSGD, FedAvg, and FedProx suffer from severe oscillation,
especially when either $E$ is large or $\alpha$ is small; 
FedProx still cannot solve the issue of 
intermittent client availability.
We note that $\alpha=0.5$ indicates that
the ratio of positive tweets is fixed at $0.5$ 
throughout the training process.
There is nearly no correlation between client availability
and local data distribution in this setting, 
and thus the performance of FedAvg is similar to that of FedLaAvg.
In addition, we observe that 
introducing multiple local iterations to FedLaAvg
significantly speeds up the convergence,
which further validates the empirical 
communication efficiency improvement of local iterations.

We finally evaluate the impacts of the total number of clients $N$ 
and the proportion of selected clients $\beta$ on FedLaAvg,
and show the results in Figure~\ref{fig:sentiment_self_test-acc}.
From Figure~\ref{fig:sentiment_N_tl},
we observe that FedLaAvg with a larger $N$ converges much faster,
which validates the sublinear speedup with respect $N$.
From Figure~\ref{fig:sentiment_beta_tl},
we can see that $\beta$ has little effect 
on the convergence of FedLaAvg in this task.

For all the experiments above,
we also show the test accuracies with training rounds in 
\myappendix{sec:sup exp},
which behave consistently with the training losses.

\section{Conclusion}
\label{sec:conclusion}
In this work, we investigate intermittent client availability in FL
and its impact on the convergence of the classical FedAvg algorithm.
We use a collection of time-varying sets to represent the available clients in each training iteration, which can accurately model the intermittent client availability. 
Furthermore, we design a simple FedLaAvg algorithm with an $O(E^{1/2}/(N^{1/4} T^{1/2 }))$ convergence guarantee for general distributed non-convex optimization problems.
Empirical studies with the standard MNIST and Sentiment140 datasets 
demonstrate the effectiveness and efficiency of 
FedLaAvg with a remarkable performance improvement 
and a sublinear speedup. 

\bibliographystyle{ACM-Reference-Format}
\bibliography{example_paper}

\clearpage
\onecolumn
\appendix
\section{Divergence of the concurrent work under intermittent client availability}
\label{sec:comp flexible}
The concurrent work~\cite{flexible} considered a different availability setting, 
where the number of local iterations performed by client $i$ in round $r$ 
can take an arbitrary value $s_r^i$ from $\{0, 1,\cdots, C\}$, following some time-varying distribution.
For the clients submitting incompleted work, i.e., $1 \leq s_r^i < C$,
they proposed to scale the model update by $w_i^r = \frac{C}{s_i^r} w_i$,
to force equal contribution to the global model among clients.
Under this framework, they proved the convergence, which, however,
is tailed with a bias term $\frac{M_R}{R}$;
$R$ is the total number of communication rounds, 
and $M_R$ represents the accumulated training data bias throughout the training process.
Specifically, $M_R = \sum_{r=1}^{R} \E\left[z_r\right]$, 
where $z_r=0$ if all clients contribute equally to the global model update in round $r$,
i.e., $\frac{\E\left[w_i^r s_i^r\right]}{w_i}$ take the same value for all clients $i$,
and $z_r=1$ else.
When there exists a single client $i$ whose $s_r^i = 0$ with probability $p_i>0$,
i.e., client $i$ is unavailable in round $r$,
the proposed scaling technique cannot work and $z_r=p_i$.
Further, if there exists multiple clients whose $s_r^i = 0$ with probability $p_i > 0$,
$z_r$ is even larger.
The work~\cite{flexible} claims that their method converges if $M_R$ increases sublinearly with $R$.

The proposed method works mainly in the scenario that 
in most rounds of training, there are no unavailable clients.
However, under intermittent client availability considered in this work, there exist unavailable clients in almost every round. 
E.g., in our Example~\ref{exp:mean estimation}, two clients are available alternately,
and there is always one client unavailable throughout the training process,
i.e., $s_i^r=1$ with probability $1$,
indicating that $M_R=R$ and the method diverges.
Further, the work~\cite{flexible} proposed to kick out frequently unavailable clients 
if the evaluated training data bias introduced by keeping the clients, i.e., $\frac{M_R}{R}$,
is larger than that introduced by kicking the clients out.
This, however, still cannot solve the bias problem,
since the bias always exists no matter whether frequently unavailable clients are kicked out or not.

\section{Detailed Proof of the Convergence of FedLaAvg}

\subsection{Proof of Lemma~\ref{le:I}}\label{sec:I}
\begin{proof}[Proof of Lemma~\ref{le:I}]
	$\forall t, \forall i$, we focus on the training process from $t$ (not included).
	In iteration $t+I+1$, under Assumption~\ref{as:4}, client $i$ has been available for at least $\lceil N/K\rceil$ times. Note the $\lceil N/K\rceil$ iterations as $\tau_1, \tau_2, \cdots, \tau_{\lceil N/K\rceil}$. We prove the lemma by contradiction. Suppose $i$ is not selected in any of these iterations. 
	Then we have $T_i^{\tau_{\lceil N/K\rceil}}=\Tit$.
	In the $\lceil N/K\rceil$ iterations where client $i$ is available, $\lceil N/K\rceil K$ clients have been selected. 
	All these clients (noted as $j$) are with 
	$T_j^\tau\leq \Tit$
	for all iterations $\tau$ before it participates in the training process and 
	$T_j^\tau> \Tit$
	for all iterations $\tau$ after participation. 
	Hence, the $\lceil N/K\rceil K$ clients are distinct. Including client $i$, the system has at least $\lceil N/K\rceil K + 1$ clients. However, the system has only $N\leq \lceil N/K\rceil K < \lceil N/K\rceil K  + 1$ clients. This forms a contradiction. Therefore, for all $t$, the next iteration $t_{next}$ where $i$ participates in the training process after iteration $t$ satisfies 
	\begin{equation}\label{eq:tnext}
	t_{next} \leq t + I + 1.
	\end{equation}
	For all client $i$, by setting $t$ to iterations where client $i$ is selected in (\ref{eq:tnext}), we can derive 
	\[\forall i,\:\forall t,\:t - \Tit \leq I.\] 
\end{proof}

\subsection{Proof of Theorem~\ref{the:main}}\label{sec:sup main}
Note that local gradient is not calculated in each iteration. In this subsection of the appendix, for mathematical analysis, we extend the definition $\git \triangleq \nabla F(\xts, \xi_i^{t})$. For iterations where client $i$ does not participate, $\xi_i^{t}$ is a random variable which follows $\xi_i^{t} \sim \D_i$.

\begin{lemma}\label{le:2}
	Under Assumption~\ref{as:2} and \ref{as:3}, we have 
	\[
	\E \left[ \left\lVert \git\right\rVert ^2 \right] \leq G^2,\forall i,\forall t
	\]
	and
	\[
	\E \left[ \left\lVert \git - \nabla \fixts \right\rVert ^2 \right] \leq \sigma^2,\forall i,\forall t.
	\]
\end{lemma}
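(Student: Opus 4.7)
The plan is to prove both inequalities by a conditioning argument that reduces each claim to the pointwise bounds in Assumptions~\ref{as:2} and~\ref{as:3}. The key observation is that with the extended definition $\git = \nabla F(\x^{t-1}, \xi_i^t)$ and the convention $\xi_i^t \sim \D_i$ for inactive clients, the sample $\xi_i^t$ is drawn independently of the history $\xi^{[t-1]}$, while $\x^{t-1}$ is fully determined by $\xi^{[t-1]}$ (given the deterministic update rule and initial parameters). This factorization is exactly what is needed to pull the expectation inside and invoke the assumed uniform-in-$\x$ bounds.

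First, for the second inequality, I would condition on $\xi^{[t-1]}$. Since $\x^{t-1}$ is measurable with respect to this $\sigma$-algebra and $\xi_i^t$ is independent of it, we have
\[
\E \!\left[ \left\lVert \git - \nabla \fixts \right\rVert^2 \,\middle|\, \xi^{[t-1]} \right]
= \E_{\xi_i^t \sim \D_i} \!\left[ \left\lVert \nabla F(\x^{t-1}; \xi_i^t) - \nabla f_i(\x^{t-1}) \right\rVert^2 \right]
\leq \sigma^2,
\]
where the last step applies Assumption~\ref{as:2} at the specific point $\x = \x^{t-1}$. Taking outer expectation and invoking the law of total expectation yields the claimed bound. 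The first inequality follows the same pattern, but with Assumption~\ref{as:3} substituted at the conditioning step to obtain $\E[\|\git\|^2 \mid \xi^{[t-1]}] \leq G^2$, and then unconditioning.

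I do not expect any significant technical obstacle: this is a bookkeeping lemma whose role is to translate the hypotheses, which are stated pointwise over $\x$, into expectations along the algorithm's trajectory. The only subtlety worth spelling out is that the extended definition of $\git$ for rounds when client $i$ does not participate must indeed draw $\xi_i^t$ fresh from $\D_i$ independently of $\xi^{[t-1]}$; once this is made explicit, the conditioning argument is immediate and no further machinery (no smoothness, no properties of the client selection rule, no use of Lemma~\ref{le:I}) is needed. I would therefore keep the write-up short, structured as two parallel one-line applications of conditional expectation followed by the tower property.
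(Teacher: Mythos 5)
Your proposal is correct and matches the paper's own argument exactly: the paper also conditions on $\xi^{[t-1]}$ (equivalently on $\x^{t-1}$), applies Assumption~\ref{as:3} or~\ref{as:2} pointwise at $\x^{t-1}$, and then removes the conditioning via the law of total expectation. No differences worth noting.
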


\begin{proof}[Proof of Lemma~\ref{le:2}]
	Our Assumptions~\ref{as:2} and \ref{as:3} take the expectation over the randomness of one training iteration. 
	But we care about the expectation taken over the randomness of the whole training process. This trivial lemma builds the gap. 
	
	For the gradient, we have
	\begin{eqnarray}
	\E \left[ \left\lVert \git \right\rVert ^2 \right] 
	\overset{(a)}{=}
	\E\left[ \E \left[ \left\lVert \git\right\rVert ^2 \mid \xi^{[t-1]} \right] \right] 
	=
	\E \left[  \E \left[  \left\lVert  \nabla F \left( \xts,\xi_i^{  t  } \right)  \right\rVert ^2 \mid \xts  \right]   \right] 
	\overset{(b)}{\leq}
	\E \left[ G^2 \right] 
	=
	G^2,
	\end{eqnarray}
	where (a) follows from \LTE; (b) follows from Assumption~\ref{as:3}. 
	
	For the variance, we have 
	\begin{eqnarray}
	&&
	\E \left[ \left\lVert \git - \nabla \fixts \right\rVert ^2 \right] 
	\overset{(a)}{=}
	\E \left[ \E\left[ \left\lVert \git - \nabla \fixts \right\rVert ^2 \mid \xi^{[t-1]} \right] \right] 
	\new
	&=&
	\E \left[ 
	\E \left[
	\left\lVert 
	\nabla F\left(\xts;\xi_i^{t}\right) 
	-
	\nabla f_i\left( \xts \right) 
	\right\rVert^2 \mid \xts 
	\right]
	\right] 
	\overset{(b)}{\leq}
	\E \left[\sigma^2\right]
	=
	\sigma^2,
	\end{eqnarray}
	where (a) follows from \LTE; (b) follows from Assumption~\ref{as:2}. 
\end{proof}

\begin{lemma}\label{le:eg-ef}
	$\forall i, \forall t$, we have
	\begin{eqnarray}
	\E \left[ \left\lVert\sum_{i=1}^{N} \left( \giTit-\nabla \fixTitS \right) \right\rVert^2 \right] 
	=
	\sum_{i=1}^{N}\E \left[ \left\lVert\giTit-\nabla \fixTitS\right\rVert^2 \right] \nonumber.
	\end{eqnarray}
\end{lemma}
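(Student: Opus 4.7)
The plan is to expand the squared norm of the sum into diagonal and off-diagonal contributions and then show that every cross term vanishes in expectation, by conditioning on the past and exploiting the unbiasedness and across-client independence of the stochastic gradients.

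First I would set $v_i \define \giTit - \nabla \fixTitS$ and write
$$\left\lVert \sum_{i=1}^{N} v_i \right\rVert^2 = \sum_{i=1}^{N} \lVert v_i \rVert^2 + \sum_{i \ne j} \langle v_i,\, v_j\rangle.$$
Taking expectation and using linearity, the diagonal piece already matches the right-hand side of the claim, so the entire argument reduces to proving $\E[\langle v_i, v_j\rangle] = 0$ for every ordered pair $i \ne j$.

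For such a pair, I would assume without loss of generality that $\Tit \geq T_j^t$ (otherwise swap the roles of $i$ and $j$), and then condition on $\xi^{[\Tit - 1]}$ jointly with the client-availability process. Under this conditioning, the quantities $\x^{\Tit - 1}$, $\x^{T_j^t - 1}$, $\g_j^{T_j^t}$, and the two indices $\Tit, T_j^t$ are all determined, so $v_j$ becomes a measurable constant. Since $\giTit = \nabla F(\x^{\Tit - 1}; \xi_i^{\Tit})$ and the fresh sample $\xi_i^{\Tit}$ is independent of everything in the conditioning (including $\xi_j^{T_j^t}$ for $j \ne i$), the unbiasedness implicit in Assumption~\ref{as:2} gives
$$\E\!\left[\, \giTit - \nabla \fixTitS \,\middle|\, \xi^{[\Tit - 1]} \,\right] = \nabla f_i(\x^{\Tit - 1}) - \nabla f_i(\x^{\Tit - 1}) = \ze.$$
Pulling the measurable factor $v_j$ outside the conditional expectation then yields $\E[\langle v_i, v_j\rangle \mid \xi^{[\Tit - 1]}] = 0$, and the tower property finishes the proof.

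The main obstacle is merely bookkeeping: the indices $\Tit$ and $T_j^t$ are themselves random, so one cannot naively write $\E[\giTit] = \nabla f_i(\x^{\Tit - 1})$ without being careful about the sigma-algebra to which $\Tit$ belongs. The remedy is precisely the asymmetric choice above, namely to condition on all randomness strictly before iteration $\Tit$ (the larger of the two indices). This freezes the $j$-factor while leaving only the fresh independent noise $\xi_i^{\Tit}$ in the $i$-factor, and unbiasedness then applies cleanly. No additional structural assumption beyond the tower property and the across-client independence of sampling is required.
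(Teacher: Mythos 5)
Your overall strategy---expand the squared norm into diagonal and cross terms, then kill each cross term by conditioning on the history up to the later participation index and invoking unbiasedness---is the same route the paper takes, and it works for the pairs with $T_j^t < \Tit$. The gap is the tie case $T_j^t = \Tit$, which is not an edge case in this algorithm: $K=\beta N$ clients are selected in every iteration, so any two clients belonging to the same batch $\B^\tau$ share the same latest participation index. In that case your key measurability claim fails: conditioning on $\xi^{[\Tit-1]}$ determines $\x^{\Tit-1}$, but it does not determine $\g_j^{T_j^t}=\nabla F(\x^{\Tit-1};\xi_j^{\Tit})$, because $\xi_j^{\Tit}$ is fresh randomness lying outside the conditioning. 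Hence $v_j$ is not ``a measurable constant'' and cannot be pulled out of the conditional expectation; the step as written is invalid for exactly those pairs.

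The repair is the extra case the paper treats explicitly: when $T_j^t=\Tit$, condition on $\xi^{[\Tit-1]}$ and use that $\xi_i^{\Tit}$ and $\xi_j^{\Tit}$ are drawn independently across clients (conditionally on the past), so the conditional expectation of $\langle v_i, v_j\rangle$ factors into the inner product of the two conditional means, each of which is zero by unbiasedness---equivalently, the conditional covariance of $\giTit$ and $\g_j^{\Tit}$ vanishes. You do invoke across-client independence informally in your parenthetical, so the missing ingredient is at hand, but the argument needs this explicit case split to be correct. (A minor remark: the indices $\Tit$, $T_j^t$ are determined by the availability sets and the selection rule rather than by the training data, so conditioning ``jointly with the client-availability process'' is fine and your concern about random indices is easily dispatched.)
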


\begin{proof}[Proof of Lemma~\ref{le:eg-ef}.]
	This lemma follows because training data are independent across clients.
	Specifically, note that
	\begin{eqnarray}
	&&
	\E \left[ \left\lVert\sum_{i=1}^{N} \left( \giTit-\nabla \fixTitS \right) \right\rVert^2 \right] 
	\new
	&=&
	\sum_{p=1}^{N}\sum_{q=1}^{N}\E \left[ 
	\left\langle  
	\gpTpt-\nabla \fpxTptS,
	\gqTqt-\nabla \fqxTqtS
	\right\rangle  
	\right]
	\new
	&\overset{(a)}{=}&
	\sum_{p=1}^{N}\sum_{q=1}^{N}\E \left[ 
	\E \left[ 
	\left\langle  
	\gpTpt-\nabla \fpxTptS,
	\gqTqt-\nabla \fqxTqtS
	\right\rangle  
	\mid\xi^{ \left[ \min\{\Tpt,\Tqt\} \right] }
	\right]  
	\right]
	\new
	&\overset{(b)}{=}&
	\sum_{i=1}^{N}\E \left[ \left\lVert\giTit-\nabla \fixTitS\right\rVert^2 \right],
	\end{eqnarray}
	where (a) follows from \LTE. Then we illustrate (b) case by case. Note that
	\[
	\E \left[ 
	\E \left[ 
	\left\langle  
	\gpTpt-\nabla \fpxTptS,
	\gqTqt-\nabla \fqxTqtS
	\right\rangle  
	\mid\xi^{ \left[ \min\{\Tpt,\Tqt\} \right] }
	\right]  
	\right]
	\]
	is equal to $\E\left[\left\lVert \giTit -\nabla \fixTitS\right\rVert^2\right]$ when $p=q=i$. When $p\neq q$, without loss of generality, suppose $\Tpt\leq \Tqt$. Then it is equal to
	\begin{eqnarray}
	&&
	\E \left[ \E \left[ \left\langle  \gpTpt-\nabla \fpxTptS,
	\gqTqt-\nabla \fqxTqtS
	\right\rangle \mid\xi^{ \left[ \Tpt \right] } \right]  \right] 
	\new
	&=&
	\E \left[ \left\langle  \gpTpt-\nabla \fpxTptS,
	\E \left[ \gqTqt-\nabla \fqxTqtS\mid\xi^{ \left[ \Tpt \right] } \right]  \right\rangle  \right] 
	\end{eqnarray}
	because $\gpTpt$ and $\fqxTptS$ are determined by $\xi^{ \left[ \Tpt \right]}$.
	When $\Tpt< \Tqt$, we have $\E[\gqTqt-\nabla \fqxTqtS \mid\xi^{[\Tpt]}
	]=0$. When $\Tpt= \Tqt$, we have 
	\begin{eqnarray}
	&&\E  \left[  \left\langle  \gpTpt-\nabla \fpxTptS,
	\E \left[ \gqTqt-\nabla \fqxTqtS\mid\xi^{ \left[ \Tpt \right] } \right]  \right\rangle   \right] \nonumber
	\\
	&=&
	\E \left[ \left \langle  \gpTpt-\nabla \fpxTptS,
	\gqTpt -\nabla \fqxTptS \right \rangle  \right] 
	\new 
	&\overset{(a)}{=}&
	\E \left[ \E  \left[  \left\langle  \gpTpt-\nabla \fpxTptS,
	\gqTpt-\nabla \fqxTptS \right\rangle \mid\xi^{ \left[ \Tpt-1 \right] } \right]     \right] 
	\new
	&\overset{(b)}{=}&
	0,
	\end{eqnarray}
	where (a) follows from \LTE;
	(b) follows because $\xi_{p}^{\Tpt}$ and $\xi_q^{\Tpt}$ are independent, and thus the covariance of $\gpTpt$ and $\gqTpt$ is 0.
\end{proof}

\begin{lemma}\label{le:ft-ftit}
	Under Assumptions~\ref{as:1} and \ref{as:3}, $\forall t,\forall t_0\leq t, \forall i$, we have
	\[
	\E \left[  \left\lVert \nabla \fixts-\nabla f_i ( \x^{  t_0 -1 } ) \right\rVert ^2  \right] 
	\leq  \left( t-t_0 \right) ^2 L^2\gamma^2 G^2.
	\]
\end{lemma}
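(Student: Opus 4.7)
The plan is to reduce the gradient difference to an iterate difference via Assumption~\ref{as:1} (smoothness), then telescope the iterate difference using the update rule (\ref{eq:update}) and bound the resulting averaged-latest-gradient norms using Lemma~\ref{le:2}.

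First, I would apply $L$-smoothness of $f_i$ to pull the gradient norm inside, giving
\[
\E \left[ \left\lVert \nabla \fixts - \nabla f_i(\x^{t_0-1}) \right\rVert^2 \right]
\leq L^2 \, \E \left[ \left\lVert \x^{t-1} - \x^{t_0-1} \right\rVert^2 \right].
\]
So the task reduces to controlling $\E[\lVert \x^{t-1} - \x^{t_0-1}\rVert^2]$. Using the global update rule $\x^{\tau} = \x^{\tau-1} - \gamma \gt$ (written for general $\tau$), I can telescope
\[
\x^{t-1} - \x^{t_0-1} \;=\; -\gamma \sum_{\tau=t_0}^{t-1} \g^{\tau},
\]
a sum of $t-t_0$ terms. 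Applying the Cauchy--Schwarz inequality to this sum gives
\[
\left\lVert \x^{t-1} - \x^{t_0-1} \right\rVert^2
\;\leq\; \gamma^2 (t-t_0) \sum_{\tau=t_0}^{t-1} \left\lVert \g^{\tau} \right\rVert^2.
\]

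Next, I would bound each $\E[\lVert \g^{\tau}\rVert^2]$ using the identity (\ref{eq:gt}), which gives $\g^{\tau} = \frac{1}{N}\sum_{i=1}^{N} \g_i^{T_i^{\tau}}$. Invoking the convexity of $\lVert \cdot\rVert^2$ yields
\[
\E \left[ \left\lVert \g^{\tau} \right\rVert^2 \right]
\;\leq\; \frac{1}{N} \sum_{i=1}^{N} \E \left[ \left\lVert \g_i^{T_i^{\tau}} \right\rVert^2 \right]
\;\leq\; G^2,
\]
where the last step is Lemma~\ref{le:2} applied at the (random) iteration index $T_i^{\tau}$. Combining the three displays delivers
\[
\E \left[ \left\lVert \nabla \fixts - \nabla f_i(\x^{t_0-1}) \right\rVert^2 \right]
\;\leq\; L^2 \gamma^2 (t-t_0) \sum_{\tau=t_0}^{t-1} G^2
\;=\; (t-t_0)^2 L^2 \gamma^2 G^2,
\]
which is the stated bound.

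The only subtlety worth flagging is the step $\E[\lVert \g_i^{T_i^{\tau}}\rVert^2] \leq G^2$: because $T_i^{\tau}$ is itself a data-dependent iteration index determined by the availability sets $\CC^{\tau}$, one cannot directly apply an ``$\forall t$'' per-iteration bound; instead one either conditions on the value of $T_i^{\tau}$ and uses the law of total expectation together with Lemma~\ref{le:2}, or relies on the fact that Lemma~\ref{le:2}'s proof (which conditions on $\xi^{[t-1]}$) uniformly handles any valid index. This is the main (mild) obstacle; everything else is a standard Cauchy--Schwarz plus smoothness argument.
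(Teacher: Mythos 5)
Your proof is correct and takes essentially the same route as the paper's: the paper telescopes the gradient difference over consecutive iterations and applies smoothness to each one-step difference, while you apply smoothness once to the full difference and telescope the iterates, but the ingredients (Cauchy--Schwarz over the $t-t_0$ terms, the update relations (\ref{eq:update}) and (\ref{eq:gt}), convexity of $\lVert \cdot \rVert^2$ over the $N$ clients, and Lemma~\ref{le:2}) and the resulting bound are identical. The subtlety you flag about the index $T_i^{\tau}$ is handled in the same (implicit) way by the paper, which also invokes Lemma~\ref{le:2} directly on $\E\left[\lVert \g_j^{T_j^{\tau}} \rVert^2\right]$.
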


\begin{proof}[Proof of Lemma~\ref{le:ft-ftit}.] 
	This lemma follows the intuition that the difference of $\x$ in two iterations is bounded by the number of iterations between them.
	\begin{eqnarray}\label{eq:f-t-Tit}
	&&
	\E \left[  \left\lVert \nabla \fixts-\nabla f_i ( \x^{  t_0 -1  } ) \right\rVert ^2  \right]  
	\new
	&=&
	\E \left[  \left\lVert \sum_{\tau=t_0}^{t-1} \left( \nabla f_i ( \x^{  \tau  } ) -\nabla f_i ( \x^{  \tau -1  } )  \right) \right\rVert ^2  \right] 
	\new
	&\overset{(a)}{\leq}&
	\left( t-t_0 \right)  \sum_{\tau=t_0}^{t-1}
	\E \left[  \left\lVert \nabla f_i ( \x^{  \tau  } ) -\nabla f_i ( \x^{  \tau -1  } ) \right\rVert ^2  \right]  
	\new
	&\overset{(b)}{\leq}&
	\left( t-t_0 \right) L^2\sum_{\tau=t_0}^{t-1}
	\E \left[  \left\lVert \x^{  \tau  }-\x^{  \tau -1  }\right\rVert ^2  \right]  
	\new
	&\overset{(c)}{=}&
	\left( t-t_0 \right) L^2\gamma^2\sum_{\tau=t_0}^{t-1}
	\E \left[  \left\lVert \frac{1}{N}\sum_{j=1}^{N}\gjTjtau\right\rVert ^2  \right] 
	\new
	&\overset{(d)}{\leq}&
	\left( t-t_0 \right) L^2\gamma^2\frac{1}{N}\sum_{\tau=t_0}^{t-1}
	\sum_{j=1}^{N}
	\E \left[  \left\lVert \gjTjtau \right\rVert ^2  \right] 
	\new
	&\overset{(e)}{\leq}&
	\left( t-t_0 \right) ^2 L^2\gamma^2 G^2,
	\end{eqnarray}
	where (a) and (d) follows from the convexity of $\left\lVert \cdot\right\rVert ^2$; (b) follows from Assumption~\ref{as:1}; (c) follows from (\ref{eq:update}) and (\ref{eq:gt}); (e) follows from Lemma~\ref{le:2}.
\end{proof}

\begin{corollary}\label{co:ft-ftit-1}
	Corollary of Lemma~\ref{le:ft-ftit}:
	\[
	\E \left[ \left\lVert \nabla \fxts-\nabla f ( \x^{  t_0 -1  } ) \right\rVert ^2   \right] 
	\leq 
	(t-t_0)^2 L^2\gamma^2 G^2.
	\]
\end{corollary}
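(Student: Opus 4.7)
The plan is to reduce the corollary to a direct application of Lemma~\ref{le:ft-ftit} using the fact that $\nabla f$ is the uniform average of the $\nabla f_i$. First I would use the definition $f(\x) = (1/N)\sum_{i=1}^{N} f_i(\x)$ from Section~\ref{sec:preliminary} to write
\[
\nabla \fxts - \nabla f ( \x^{  t_0 -1  } )
=
\frac{1}{N} \sum_{i=1}^{N} \left( \nabla \fixts - \nabla f_i ( \x^{  t_0 -1  } ) \right),
\]
so the quantity of interest is the squared norm of an average of $N$ vectors, each of which is controlled by Lemma~\ref{le:ft-ftit}.

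Next, I would apply the convexity of $\lVert\cdot\rVert^2$ (Jensen's inequality, exactly the step labelled (a) in the proof of Lemma~\ref{le:ft-ftit}) to pull the norm inside the sum:
\[
\left\lVert \nabla \fxts - \nabla f ( \x^{  t_0 -1  } ) \right\rVert^2
\leq
\frac{1}{N} \sum_{i=1}^{N} \left\lVert \nabla \fixts - \nabla f_i ( \x^{  t_0 -1  } ) \right\rVert^2.
\]
Taking expectations on both sides and invoking Lemma~\ref{le:ft-ftit} termwise bounds every summand by $(t-t_0)^2 L^2 \gamma^2 G^2$; the $1/N$ factor cancels against the sum of $N$ identical bounds and delivers the claimed inequality.

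There is no real obstacle here: the corollary essentially records that any inequality of this form which holds for each $f_i$ with a common, $i$-independent right-hand side must also hold for the average $f$. An alternative, equally short route would be to observe that $f$ inherits $L$-smoothness from the $f_i$ (as a uniform average of $L$-smooth functions) and then to repeat the chain of estimates in the proof of Lemma~\ref{le:ft-ftit} verbatim with $f$ in place of $f_i$, since that chain relies only on $L$-smoothness of the target function together with the gradient-norm bound from Lemma~\ref{le:2}, both of which remain available.
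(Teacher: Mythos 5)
Your proposal is correct and matches the paper's own proof essentially verbatim: the paper also writes $\nabla f$ as the average of the $\nabla f_i$, applies the convexity of $\lVert\cdot\rVert^2$ to pull the squared norm inside the sum, and then invokes Lemma~\ref{le:ft-ftit} termwise with the common bound $(t-t_0)^2 L^2 \gamma^2 G^2$. No gaps.
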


\begin{proof}[Proof of Corollary~\ref{co:ft-ftit-1}. ]
	\begin{eqnarray}
	&&
	\E \left[ \left\lVert \nabla \fxts-\nabla f ( \x^{  t_0 -1  } ) \right\rVert ^2  \right] 
	\nonumber\\
	&\overset{(a)}{\leq}& 
	\frac{1}{N} \sum_{i=1}^{N}\E \left[ \left\lVert \nabla f_i \left( \xts \right)  -\nabla f_i ( \x^{  t_0 -1  } ) \right\rVert ^2  \right] 
	\nonumber\\
	&\overset{(b)}{\leq}& (t-t_0)^2 L^2\gamma^2 G^2,
	\end{eqnarray}
	where (a) follows from \convex; (b) follows from Lemma~\ref{le:ft-ftit}.
\end{proof}

\begin{proof}[Main Proof of Theorem~\ref{the:main}.]
	From Assumption~\ref{as:1}, local objective functions $f_i$ are all $L-smooth$, and thus the global objective function $f$, which is the mean of them, is also $L-smooth$.
	Hence, fixing $t\geq 1$, we have
	\begin{eqnarray}\label{eq:main-0}
	\E \left[ f ( \x^{  t  } )  \right] 
	\leq
	\E \left[ \fxts \right] 
	+\frac{L}{2}\E \left[ \left\lVert \x^{  t  }-\x^{  t-1  }\right\rVert ^2 \right] 
	+\E \left[ \left\langle\nabla \fxts, \x^{  t  }-\x^{  t-1  }\right\rangle \right].
	\end{eqnarray}
	
	We decompose the terms on the right, during which we refer to Lemma~\ref{le:ft-ftit} and Corollary~\ref{co:ft-ftit-1}.
	Specifically, we first focus on the second term:
	\begin{eqnarray}\label{eq:main-1}
	&&\E \left[ \left\lVert \x^{  t  }-\x^{  t-1  }\right\rVert ^2 \right]
	\new
	&\overset{(a)}{=}&
	\gamma^2\E \left[ \left\lVert \frac{1}{N}\sum_{i=1}^{N}\giTit\right\rVert ^2 \right] 
	\new
	&=&
	\gamma^2\E \left[ \left\lVert \frac{1}{N}\sum_{i=1}^{N}\left(\giTit -  \nabla \fixTitS\right) + \frac{1}{N}\sum_{i=1}^{N}  \nabla \fixTitS \right\rVert ^2 \right] 
	\new
	&\overset{(b)}{\leq}&
	2\gamma^2\E \left[ \left\lVert \frac{1}{N}\sum_{i=1}^{N}\left(\giTit -  \nabla \fixTitS\right) \right\rVert ^2 \right] + 2 \gamma^2 \E\left[\left\lVert \frac{1}{N}\sum_{i=1}^{N} \nabla  \fixTitS \right\rVert ^2 \right] 
	\new
	&\overset{(c)}{=}&
	\frac{2\gamma^2}{N^2} \sum_{i=1}^{N}\E \left[ \left\lVert \giTit -  \nabla \fixTitS \right\rVert ^2 \right] 
	+ 2 \gamma^2 \E\left[\left\lVert \frac{1}{N}\sum_{i=1}^{N}  \nabla \fixTitS \right\rVert ^2 \right] 
	\new
	&\overset{(d)}{=}&
	\frac{2\gamma^2 \sigma^2}{N} 
	+ 2 \gamma^2 \E\left[\left\lVert \frac{1}{N}\sum_{i=1}^{N}  \nabla \fixTitS \right\rVert ^2 \right],
	\end{eqnarray}
	where (a) follows from (\ref{eq:update}) and (\ref{eq:gt}); (b) follows from \convex; (c) follows from Lemma~\ref{le:eg-ef}; (d) follows from Lemma~\ref{le:2}.
	
	Define $\Tt \triangleq min_{i} \left(\Tit\right)$.
	Focus on the third term in (\ref{eq:main-0}),
	\begin{eqnarray}\label{eq:main-4}
	&&
	\E \left[ \left\langle\nabla \fxts, \x^{  t  }-\x^{  t-1  }\right\rangle \right]
	\new
	&\overset{(a)}{=}&
	-\gamma \E \left[ \left\langle\nabla \fxts,  
	\frac{1}{N}\sum_{i=1}^{N}\giTit
	\right\rangle \right]
	\new
	&=&
	-\gamma \E \left[  \left\langle \nabla \fxts -  \nabla \fxTtS , \frac{1}{N}\sum_{i=1}^{N}\giTit \right\rangle \right] - \gamma \E \left[  \left\langle \nabla \fxTtS , \frac{1}{N}\sum_{i=1}^{N}\giTit \right\rangle \right]
	\new
	&=&
	-\gamma \E \left[  \left\langle \nabla \fxts -  \nabla \fxTtS , \frac{1}{N}\sum_{i=1}^{N}\left(\giTit -  \nabla \fixTitS\right)\right\rangle \right] 
	\newl
	-\gamma \E \left[  \left\langle \nabla \fxts - \nabla  \fxTtS , \frac{1}{N}\sum_{i=1}^{N}  \nabla \fixTitS \right\rangle \right] 
	\newl
	- \gamma \E \left[  \left\langle \nabla \fxTtS , \frac{1}{N}\sum_{i=1}^{N}\giTit \right\rangle \right],
	\end{eqnarray}
	where (a) follows from (\ref{eq:gt}) and (\ref{eq:update}).	We further focus on the first term in (\ref{eq:main-4}):
	\begin{eqnarray}\label{eq:main3}
	&&
	-\gamma \E \left[  \left\langle \nabla \fxts - \nabla  \fxTtS , \frac{1}{N}\sum_{i=1}^{N}\left(\giTit -  \nabla \fixTitS\right)\right\rangle \right] 
	\new
	&=&
	-\frac{\gamma^2 I L}{\sqrt{N}} \E \left[  \left\langle \frac{1}{\gamma I L}\left( \nabla \fxts - \nabla  \fxTtS \right), \frac{\sqrt{N}}{N}\sum_{i=1}^{N}\left(\giTit -  \nabla \fixTitS\right)\right\rangle \right] 
	\new
	&\overset{(a)}{\leq}&
	\frac{\gamma^2 I L}{2\sqrt{N}} \E \left[  \left\lVert \frac{1}{\gamma I L}\left( \nabla \fxts - \nabla  \fxTtS \right) \right\rVert^2 \right]
	\newl
	+ 
	\frac{\gamma^2 I L}{2\sqrt{N}} \E\left[ \left\lVert \frac{\sqrt{N}}{N}\sum_{i=1}^{N}\left(\giTit -  \nabla \fixTitS\right)\right\rVert^2 \right] 
	\new
	&\overset{(b)}{\leq}&
	\frac{\gamma^2 I L G^2}{2\sqrt{N}} 
	+ 
	\frac{\gamma^2 I L}{2\sqrt{N}} \E\left[ \left\lVert \frac{\sqrt{N}}{N}\sum_{i=1}^{N}\left(\giTit - \nabla  \fixTitS\right)\right\rVert^2 \right] 
	\new
	&\overset{(c)}{=}&
	\frac{\gamma^2 I L G^2}{2\sqrt{N}} 
	+ 
	\frac{\gamma^2 I L}{2N^{\frac{3}{2}}} \sum_{i=1}^{N} \E\left[ \left\lVert \left(\giTit -  \nabla \fixTitS\right)\right\rVert^2 \right] 
	\new
	&\overset{(d)}{\leq}&
	\frac{\gamma^2 I L G^2}{2\sqrt{N}} 
	+ 
	\frac{\gamma^2 I L \sigma^2}{2\sqrt{N}},
	\end{eqnarray}
	where (a) follows from \neqCSAMGM; (b) follows from Corollary~\ref{co:ft-ftit-1} with $t_0$ assigned as $\Tt$ and Lemma~\ref{le:I}; (c) follows from Lemma~\ref{le:eg-ef}; (d) follows from Lemma~\ref{le:2}. 
	Then we focus on the second term in \ref{eq:main-4} (Note that $\gamma < 1/(2L)$ and thus we can extract the root of $1 - 2\gamma L$):
	\begin{eqnarray}\label{eq:main4}
	&&
	-\gamma \E \left[  \left\langle \nabla \fxts -  \nabla \fxTtS , \frac{1}{N}\sum_{i=1}^{N}  \nabla \fixTitS \right\rangle \right]  
	\new
	&=&
	-\gamma \E \left[  \left\langle \frac{1}{\sqrt{1 - 2\gamma L}}  \left(\nabla \fxts -  \nabla \fxTtS \right), \frac{1}{N}\sqrt{1 - 2\gamma L}\sum_{i=1}^{N}  \nabla \fixTitS \right\rangle \right]  
	\new
	&\overset{(a)}{\leq}&
	\frac{\gamma}{2\left(1 - 2\gamma L\right)} \E \left[   \left\lVert \nabla \fxts -  \nabla \fxTtS \right\rVert^2 \right]
	\newl
	+
	\frac{\gamma \left(1 - 2\gamma L\right)}{2} \E\left[ \left\lVert \frac{1}{N}\sum_{i=1}^{N}  \nabla \fixTitS \right\rVert^2 \right]  
	\new
	&\overset{(b)}{\leq}&
	\frac{\gamma^3 I^2 L^2 G^2}{2\left(1 - 2\gamma L\right)} 
	+
	\frac{\gamma \left(1 - 2\gamma L\right)}{2} \E\left[ \left\lVert \frac{1}{N}\sum_{i=1}^{N} \nabla \fixTitS \right\rVert^2 \right],
	\end{eqnarray}
	where (a) follows from \neqCSAMGM; (b) follows from Corollary~\ref{co:ft-ftit-1} and Lemma~\ref{le:I}. We finally focus on the third term in (\ref{eq:main-4}):
	\begin{eqnarray}\label{eq:main5}
	&&
	\E \left[  \left\langle \nabla \fxTtS , \frac{1}{N}\sum_{i=1}^{N}\giTit \right\rangle \right]
	\new
	&\overset{(a)}{=}&
	\E \left[  \E \left[ \left\langle \nabla \fxTtS , \frac{1}{N}\sum_{i=1}^{N}\giTit \right\rangle \mid \XiTtS \right] \right]
	\new
	&=&
	\E \left[  \frac{1}{N}\sum_{i=1}^{N} \E \left[ \left\langle \nabla \fxTtS , \giTit \right\rangle \mid \XiTtS \right] \right]
	\new
	&\overset{(b)}{=}&
	\E \left[  \frac{1}{N}\sum_{i=1}^{N} \E \left[ \E \left[ \left\langle \nabla \fxTtS , \giTit \right\rangle \mid \XiTitS \right] \mid \XiTtS \right] \right]
	\new
	&\overset{(c)}{=}&
	\frac{1}{N}\sum_{i=1}^{N} \E \left[  \E \left[ \left\langle \nabla \fxTtS , \nabla \fixTitS \right\rangle \mid \XiTtS \right] \right]
	\new
	&\overset{(d)}{=}&
	\frac{1}{N}\sum_{i=1}^{N} \E \left[ \left\langle \nabla \fxTtS , \nabla \fixTitS \right\rangle \right]
	\new
	&=&
	\E \left[ \left\langle \nabla \fxTtS , \frac{1}{N}\sum_{i=1}^{N} \nabla \fixTitS \right\rangle \right]
	\new
	&\overset{(e)}{=}&
	\frac{1}{2}\E \left[ \left\lVert \nabla \fxTtS \right\rVert^2 \right] 
	+ \frac{1}{2}\E\left[ \left\lVert \frac{1}{N}\sum_{i=1}^{N} \nabla \fixTitS \right\rVert^2 \right]
	\newl
	-  \frac{1}{2}\E\left[ \left\lVert \nabla \fxTtS - \frac{1}{N}\sum_{i=1}^{N} \nabla \fixTitS \right\rVert^2 \right],
	\end{eqnarray}
	where (a), (b) and (d) follows from \LTE; (c) follows because $\forall i: \Tt \leq \Tit$, and thus $\fxTtS$ is determined by $\XiTitS$; (e) follows from \equuvv.
	In (\ref{eq:main5}), we further deal with the last term,
	\begin{eqnarray}\label{eq:main6}
	&&
	\E\left[ \left\lVert \nabla \fxTtS - \frac{1}{N}\sum_{i=1}^{N} \nabla \fixTitS \right\rVert^2 \right]
	\new
	&=&
	\E\left[ \left\lVert \frac{1}{N}\sum_{i=1}^{N} \left(\nabla \fixTtS -  \nabla \fixTitS \right) \right\rVert^2 \right]
	\new
	&\overset{(a)}{\leq}&
	\frac{1}{N}\sum_{i=1}^{N} \E\left[ \left\lVert  \left(\nabla \fixTitS -  \nabla \fixTtS \right) \right\rVert^2 \right]
	\new
	&\overset{(b)}{\leq}&
	\frac{1}{N}\sum_{i=1}^{N} (\Tit - \Tt)^2 L^2 \gamma^2 G^2
	\new
	&\overset{(c)}{\leq}&
	I^2 L^2 \gamma^2 G^2,
	\end{eqnarray}
	where (a) follows from \convex; (b) follows from Lemma~\ref{le:ft-ftit} with $t$ assigned as $\Tit$ and $t_0$ assigned as $\Tt$; (c) follows from Lemma~\ref{le:I}, $\Tit \leq t$, and $\Tt = \min_{i} \left(\Tit\right)$.
	Substituting (\ref{eq:main6}) into (\ref{eq:main5}) and (\ref{eq:main3})--(\ref{eq:main5}) into (\ref{eq:main-4}), we have:
	\begin{eqnarray}\label{eq:main-9}
	&&
	\E \left[ \left\langle\nabla \fxts, \x^{  t  }-\x^{  t-1  }\right\rangle \right]
	\new
	&\leq&
	\frac{\gamma^2 I L G^2}{2\sqrt{N}} 
	+ 
	\frac{\gamma^2 I L \sigma^2}{2\sqrt{N}}   
	+
	\frac{\gamma^3 I^2 L^2 G^2}{2\left(1 - 2\gamma L\right)} 
	+  
	\frac{\gamma^3  I^2 L^2 G^2}{2}
		\newl
	-
	\gamma^2L \E\left[ \left\lVert \frac{1}{N}\sum_{i=1}^{N} \nabla  \fixTitS \right\rVert^2 \right]  
	- 
	\frac{\gamma}{2}\E \left[ \left\lVert \nabla \fxTtS \right\rVert^2 \right].
	\newl
	\end{eqnarray}
	Further substituting (\ref{eq:main-1}) and (\ref{eq:main-9}) into (\ref{eq:main-0}), we have
	\begin{eqnarray}\label{eq:main-fin}
		&&
	\E \left[ f ( \x^{  t  } )  \right] -  \E \left[ \fxts \right]
		\new 
		&\leq&
	\frac{\gamma^2 \sigma^2 L}{N} 
	+ 
	\frac{\gamma^2 I L \left(G^2+\sigma^2\right)}{2\sqrt{N}}  
	+
	\frac{\gamma^3 I^2 L^2 G^2}{2\left(1 - 2\gamma L\right)} 
	+  
	\frac{\gamma^3  I^2 L^2 G^2}{2} 
	- 
	\frac{\gamma}{2}\E \left[ \left\lVert \nabla \fxTtS \right\rVert^2 \right].
	\end{eqnarray}
	
	We rearrange (\ref{eq:main-fin}) with summation to obtain the convergence result.
	First, we rearrange (\ref{eq:main-fin}):
	\begin{eqnarray}\label{eq:main-fin-1}
	\E \left[ \left\lVert \nabla \fxTtS \right\rVert^2 \right]
	\leq&&
	\frac{2 \gamma \sigma^2 L}{N} 
	+ 
	\frac{\gamma I L \left(G^2+\sigma^2\right)}{\sqrt{N}} 
	+
	\frac{\gamma^2 I^2 L^2 G^2}{1 - 2\gamma L} 
	+  
	\gamma^2  I^2 L^2 G^2
	\newl
	+  
	\frac{2}{\gamma} \left(
	\E \left[ \fxts \right]
	- 
	\E \left[ f ( \x^{  t  } )  \right]
	\right).
	\end{eqnarray}
	Summing (\ref{eq:main-fin-1}) over iterations from 1 to $T$ and dividing both sides by $T$, we have
	\begin{eqnarray}\label{eq:main-fin-2}
	\frac{1}{T}\sum_{t=1}^{T}\E \left[ \left\lVert \nabla \fxTtS \right\rVert^2 \right]
	\leq
	&&
	\frac{2 \gamma \sigma^2 L}{N} 
	+ 
	\frac{\gamma I L \left(G^2 + \sigma^2\right)}{\sqrt{N}} 
	+
	\frac{\gamma^2 I^2 L^2 G^2}{1 - 2\gamma L} 
	+  
	\gamma^2  I^2 L^2 G^2
	\newl
	+  
	\frac{2}{\gamma T} \left(
	\E \left[ f(\x^{0}) \right]
	- 
	\E \left[ f(\x^*)  \right]
	\right),
	\end{eqnarray}
	where $\x^*$ is the optimal solution for the global objective function $f(\x)$.
	
	Finally, we build the gap between $\nabla \fxts$ and $\nabla \fxTtS$.
	Lemma~\ref{le:I} implies that $t - \Tt \leq I$ since $\Tt = \min_{i} \Tit$. Hence, we have
	\begin{eqnarray}\label{eq:main-fin-3}
	&&
	\E\left[\left\lVert \nabla  \fxts \right\rVert^2\right]
	\new
	&=&
	\E\left[\left\lVert  \nabla \fxts -  \nabla \fxTtS +  \nabla \fxTtS \right\rVert^2\right]
	\new
	&\overset{(a)}{\leq}&
	2 \E\left[\left\lVert  \nabla \fxts - \nabla  \fxTtS \right\rVert^2 \right] 
	+
	2 \E \left[ \left\lVert \nabla  \fxTtS \right\rVert^2\right]
	\new
	&\overset{(b)}{\leq}&
	2 \gamma^2 I^2 L^2 G^2 
	+
	2 \E \left[ \left\lVert \nabla  \fxTtS \right\rVert^2\right],
	\end{eqnarray}
	where (a) follows from \convex; (b) follows from Corollary~\ref{co:ft-ftit-1}.
	Sum (\ref{eq:main-fin-3}) over iterations from 1 to $T$, devide both sides by $T$, and substitute (\ref{eq:main-fin-2}) into it. We then have
	\begin{eqnarray}\label{eq:main-the}
	\frac{1}{T}\sum_{t=1}^{T}\E \left[ \left\lVert \nabla \fxts \right\rVert^2 \right] 
	\leq
	&&
	\frac{4 \gamma \sigma^2 L}{N} 
	+ 
	\frac{2 \gamma I L \left(G^2 + \sigma^2\right)}{\sqrt{N}} 
	+
	\frac{2 \gamma^2 I^2 L^2 G^2}{1 - 2\gamma L} 
	+  
	4 \gamma^2  I^2 L^2 G^2
	\newl
	+  
	\frac{4}{\gamma T} \left(
	\E \left[ f(\x^{0}) \right]
	- 
	\E \left[ f(\x^*)  \right]
	\right).
	\end{eqnarray}
\end{proof}

\subsection{Proof of Corollary~\ref{co:main}}\label{sec:supco}
\begin{proof}[Proof of Corollary~\ref{co:main}]
	We first summarize the $O(\cdot)$ form of Theorem~\ref{the:main}:
	\begin{equation}\label{eq:comain-O}
	\frac{1}{T}\sum_{t=1}^{T}\E \left[ \left\lVert \nabla \fxts \right\rVert^2 \right]
	=
	O\left(
	\frac{\gamma I L \left(G^2 + \sigma^2\right)}{\sqrt{N}} 
	+
	\frac{\gamma^2 I^2 L^2 G^2}{1 - 2\gamma L} 
	+  
	\frac{B}{\gamma T}
	\right).
	\end{equation}
	
	Substituting $\gamma$ with $( \beta^{1/2}N^{1/4} )/( 2L E^{1/2} T^{1/2} ) $, we have
	\begin{eqnarray}
	\frac{1}{T} \sum_{t=1}^{T} \E  \left[  \left\lVert  \nabla \fxts \right\rVert ^2  \right] 
	&\overset{(a)}{=}&
	O\left(
	\frac{\gamma I L \left(G^2+\sigma^2\right)}{\sqrt{N}}   
	+  
	I^2 \gamma^2 L^2 G^2  
	+
	\frac{B}{\gamma T}
	\right)
	\new 
	&=&
	O\left(
	\frac{ I \beta^{\frac{1}{2}} \left(G^2+\sigma^2\right)}{N^{\frac{1}{4}}E^{\frac{1}{2}} T^{\frac{1}{2}} }   
	+  
	\frac{ I^2  \beta G^2  N^{\frac{1}{2}} }{  ET }   
	+
	\frac{B L  E^{\frac{1}{2}}}
	{\beta^{\frac{1}{2}} N^{\frac{1}{4}} T^{\frac{1}{2}} }
	\right)
	\new 
	&\overset{(b)}{=}&
	O\left(
	\frac{ E^{\frac{1}{2}}  \left(G^2+\sigma^2 + B L\right)}
	{ \beta^{\frac{1}{2}} N^{\frac{1}{4}} T^{\frac{1}{2}} }   
	+  
	\frac{ E  G^2  N^{\frac{1}{2}} }{ \beta  T }   
	\right),
	\end{eqnarray}
	where (a) follows because $\gamma \leq 1/(4L)$, and thus $1 - 2\gamma L > 1/2$; (b) follows because from Lemma~\ref{le:I}, $I=\lceil N/K\rceil E = O(E/\beta)$.
	
	When $T \geq  E N^{3/2}/\beta $,
	we have
	\begin{eqnarray}\label{eq:comain3}
	&&
	\frac{1}{T}\sum_{t=1}^{T}\E \left[ \left\lVert \nabla \fxts \right\rVert^2 \right]
	=
	O\left(
	\frac{ E^{\frac{1}{2}}  \left(G^2+\sigma^2 + BL\right)}
	{ \beta^{\frac{1}{2}} N^{\frac{1}{4}} T^{\frac{1}{2}} }   
	\right) 
	=
	O\left(
	\frac{ 1}
	{ N^{\frac{1}{4}} T^{\frac{1}{2}} }   
	\right),
	\end{eqnarray}
	where the final equation follows if we care only about $N$ and $T$, and regard other parameters as constants.
\end{proof}

As shown in Section~\ref{sec:introduction}, FedAvg is proven to achieve $O(1/\sqrt{NT})$ convergence when all clients participate in each training iteration. However, we can prove only the $O(1/(N^{1/4}T^{1/2}))$ convergence for FedLaAvg because of the partial client participation as a result of the intermittent client availability. Specifically, this gap is introduced by (\ref{eq:main-4}). 
The randomness of the stochastic gradient $\giTit$ is an obstacle for the convergence analysis.
With full client participation, we can reduce this randomness by the following equations:
\begin{eqnarray}\label{eq:gap-1}
&&
\E \left[ \left\langle\nabla \fxts,  
\frac{1}{N}\sum_{i=1}^{N}\giTit
\right\rangle \right]
=
\E \left[ \left\langle\nabla \fxts,  
\frac{1}{N}\sum_{i=1}^{N}\git
\right\rangle \right]
\new
&=&
\E \left[ \E\left[\left\langle\nabla \fxts,  
\frac{1}{N}\sum_{i=1}^{N}\git
\right\rangle \mid \xi^{[t-1]}\right] \right]
=
\E \left[ \left\langle\nabla \fxts,  
\frac{1}{N}\sum_{i=1}^{N}\nabla \fixts
\right\rangle \right].
\end{eqnarray}
However, with partial client participation, (\ref{eq:gap-1}) no longer holds. We analyze the gap between 
\[
	\E \left[ \left\langle\nabla \fxts,  
    \frac{1}{N}\sum_{i=1}^{N}\giTit
    \right\rangle \right]
    \text{ and }
    \E \left[  \left\langle \nabla \fxTtS , \frac{1}{N}\sum_{i=1}^{N}\nabla\fixTitS \right\rangle \right]
\]
in (\ref{eq:main-4}). 
Then, we further study the upper bound for the absolute value of the first term of the gap, i.e., 
\\$
\E \left[  \left\langle \nabla \fxts - \nabla  \fxTtS , \frac{1}{N}\sum_{i=1}^{N}\left(\giTit -  \nabla \fixTitS\right)\right\rangle \right] 
$
in (\ref{eq:main3}). This term is the inner product of two vectors.
The norm of the second vector is bounded by $O(1/N)$, but the norm of the fisrt term is not related to $N$. 
Hence, the upper bound that we can obtain for the inner product is $O(1/\sqrt{N})$, while the $O(1/\sqrt{NT})$ convergence needs an upper bound in the order of $O(1/N)$.
Whether the $O(1/(N^{1/4}T^{1/2}))$ convergence is a tight bound requires further studies.

\section{Detailed Convergence Proof for the Communication Round-Based Setting}
\label{sec:multiround proof}
To make the proof more concise, we introduce an 
mathematically equivalent Algorithm~\ref{alg:eqv} 
of Algorithm~\ref{alg:multiround}. 
Note that $\xt$ (when $t$ is not multiple of $C$)
is intermediate variable for mathematical analysis.
In addition, $\g_i^{\tau}$ ($\tau\leq 0$) is extraly 
defined to avoid undefined symbols when $\Rirt=0$ 
in (\ref{eq:eqv-update}). It can be proved by induction 
that all variables defined in 
Algorithm~\ref{alg:multiround} are consistent 
with those in Algorithm~\ref{alg:eqv}.

\begin{algorithm}
	\caption{An equivalent Algorithm of Algorithm~\ref{alg:multiround}}
	\label{alg:eqv}
	\begin{algorithmic}[1]
		\STATE {\bfseries Input:} Initial model $\x^{0}$
		
		\STATE $\g_i^{\tau}\leftarrow \ze,\,\forall i\in\{1,2,\cdots,N\},\,\tau\in\{0, -1, \cdots, 1-C\}$
		
		\STATE $R_i^{0} \leftarrow \ze,\,\forall i\in\{1,2,\cdots,N\}$
		
		\FOR{$t=1$ {\bfseries to} $RC$}
		\STATE $r^t \leftarrow \lfloor(t-1)/C\rfloor + 1$
		\IF{$t-1$ is a multiple of $C$}
		
		\STATE$\Ch^{r^t}\leftarrow$ the set of available clients in round $r^t$
		
		\STATE $\Bh^{r^t}\leftarrow$ $K$ clients from $\Ch^{r^t}$ with the lowest $\Rirts$ values
		
		\STATE Update $\Rirt$ values:
		$\Rirt \leftarrow r^t,\,\forall i\in \Bh^{r^t};\,\Rirt \leftarrow \Rirts,\,\forall i\notin \Bh^{r^t}$.
		
		\STATE $\xits \leftarrow \xts,\,\forall i \in \Bh^{r^t}$
		\ENDIF
		
		\STATE $\git \leftarrow \nabla F\left(\xits;\xi_i^{t }
		\right),\,\forall i\in\Bh^{r^t}$
		
		\STATE Update the global model parameters:
		\begin{equation}\label{eq:eqv-update}
		\x^{t}\leftarrow \x^{t - 1 } - 
		\gamma \sum_{i=1}^{N}\giTittau.
		\end{equation}
		
		\STATE Update the local model parameters:
		\begin{equation}\label{eq:eqv-local update}
		\xit \leftarrow \xits - 
		\gamma \git.
		\end{equation}
		\ENDFOR
	\end{algorithmic}
\end{algorithm}
	
With equavalence between 
Algorithm~\ref{alg:multiround} and \ref{alg:eqv}
established, we indroduce the corresponding equavalent lemmas of Lemmas~\ref{le:I}--\ref{le:ft-ftit}.

\begin{lemma}\label{le:I-2}
	Under Assumption~\ref{as:5}, following Algorithm~\ref{alg:multiround}, with $I=\lceil N / K\rceil E - 1$, $\forall r,\forall i$, we have
	\begin{equation}
	r-\Rir \leq I.
	\end{equation}
\end{lemma}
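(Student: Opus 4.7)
The plan is to mirror the proof of Lemma~\ref{le:I} essentially verbatim, replacing iterations by rounds and invoking Assumption~\ref{as:5} in place of Assumption~\ref{as:4}. This works because the selection rule in Algorithm~\ref{alg:multiround} is structurally identical to that in Algorithm~\ref{alg:LAS}: each round, the server picks the $K$ available clients with the smallest $R_i^{r-1}$ values, so the combinatorial core of the original argument carries over with no modification beyond renaming $T_i^t$ as $R_i^r$ and ``iterations'' as ``rounds''.

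First I would fix an arbitrary client $i$ and an arbitrary round $r$, and look at the window of rounds strictly after $r$ up through round $r + I + 1$, where $I = \lceil N/K \rceil E - 1$. By Assumption~\ref{as:5}, in any $E$ consecutive rounds client $i$ is available at least once, and hence within these $I + 1 = \lceil N/K \rceil E$ consecutive rounds client $i$ is available at least $\lceil N/K \rceil$ times. Let those rounds be $\rho_1 < \rho_2 < \cdots < \rho_{\lceil N/K \rceil}$.

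Next I would argue by contradiction: suppose $i$ is not selected in any of $\rho_1,\ldots,\rho_{\lceil N/K \rceil}$, so $R_i^{\rho_{\lceil N/K \rceil}} = R_i^r$. The key monotonicity is that once a client $j$ is selected in some round $\rho$ we have $R_j^{\rho} = \rho > R_i^r$ thereafter, while every client who has not been selected since round $r$ still carries $R_j \le R_i^r$. Because the server greedily picks the smallest $R$-values among available clients, the $K \lceil N/K \rceil$ selection slots across $\rho_1,\ldots,\rho_{\lceil N/K \rceil}$ must be filled by pairwise distinct clients, none of them equal to $i$ (since by hypothesis $i$ is never chosen, even though it is available and has the minimum $R$-value). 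Counting $i$ itself, the system must contain at least $\lceil N/K \rceil K + 1 > N$ distinct clients, which is impossible. Hence $i$ is selected at least once in the window, so the next round $r_{\text{next}}$ after $r$ in which $i$ participates satisfies $r_{\text{next}} \le r + I + 1$.

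Finally, applying this bound with $r$ instantiated to the last round in which $i$ actually participated (so $R_i^r = r$), I would conclude $r - R_i^r \le I$ for every $r$ and every $i$. I do not anticipate any real obstacle here: the only thing worth double-checking is the pigeonhole count (exactly $K \lceil N/K \rceil$ selected ``slots'' across the window, filled by distinct clients not equal to $i$), and the observation that the local multi-iteration structure inside each round is irrelevant to the argument because Assumption~\ref{as:5} and the selection rule both live entirely at the round level.
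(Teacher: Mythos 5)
Your proposal is correct and matches the paper's treatment: the paper proves Lemma~\ref{le:I-2} by stating that the argument of Lemma~\ref{le:I} carries over verbatim with $t$ replaced by $r$ and $T_i^t$ by $R_i^r$, which is exactly the round-level pigeonhole/contradiction argument you reproduce (including the monotonicity observation that a client selected in the window acquires an $R$-value exceeding $R_i^r$ and hence the $\lceil N/K\rceil K$ selections are distinct). Your closing remark that the within-round local iterations are irrelevant because Assumption~\ref{as:5} and the selection rule operate purely at the round level is precisely why the paper's one-line reduction is valid.
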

\begin{proof}[Proof of Lemma~\ref{le:I-2}]
	Replacing $t$ with $r$ and $\Tit$ with $\Rir$, the proof is exactly the same with that of Lemma~\ref{le:I}.
\end{proof}

\begin{lemma}
	\label{le:2-2}
	Corresponding lemma of Lemma~\ref{le:2}:
	\[
	\E \left[ \left\lVert \git \right\rVert ^2 \right] \leq G^2,\forall i,\forall t;
	\]
	\[
	\E \left[ \left\lVert \gittau - \nabla \fixttaus \right\rVert ^2 \right] \leq \sigma^2,\forall i,\forall t.
	\]
\end{lemma}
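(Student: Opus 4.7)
The plan is to mirror the proof of Lemma~\ref{le:2} essentially verbatim, since the only structural change introduced by the round-based algorithm is that $\git$ is now the gradient evaluated at the \emph{local} iterate $\xits$ rather than at the \emph{global} iterate $\xts$. Both claims will follow from the Law of Total Expectation after conditioning on a sigma-algebra that makes $\xits$ measurable while leaving $\xi_i^t$ fresh.

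Concretely, I would first extend the definition of $\git \triangleq \nabla F(\xits;\xi_i^t)$ to every client $i$ and every iteration $t$ (including those in which $i$ does not participate), with $\xi_i^t \sim \D_i$ drawn independently, exactly as was done at the start of \myappendix{sec:sup main}. Then I would introduce the filtration $\xi^{[t-1]} \triangleq \{\xi_i^\tau \mid i\in\{1,\dots,N\},\,\tau\in\{1,\dots,t-1\}\}$. The key observation is that $\xits$ is completely determined by $\xi^{[t-1]}$: by induction on $t$, the global iterate $\xts$ depends only on gradients drawn up to iteration $t-1$; moreover, in Algorithm~\ref{alg:eqv} the local iterate $\xits$ is obtained from $\xts$ at the start of the current round (via line~10) plus a sequence of local updates built from $\git$ at iterations strictly earlier than $t$ within the same round, all of which are $\xi^{[t-1]}$-measurable.

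With this measurability in hand, the first inequality follows from
\[
\E\left[\lVert \git\rVert^2\right]
= \E\left[\E\left[\lVert \nabla F(\xits;\xi_i^t)\rVert^2 \mid \xi^{[t-1]}\right]\right]
\leq \E[G^2] = G^2,
\]
where the inner expectation is bounded by Assumption~\ref{as:3} applied at the fixed (but $\xi^{[t-1]}$-measurable) point $\xits$, with $\xi_i^t$ being an independent fresh sample from $\D_i$. For the second inequality, the same conditioning gives
\[
\E\left[\lVert \git - \nabla \fixttaus\rVert^2\right]
= \E\left[\E\left[\lVert \nabla F(\xits;\xi_i^t) - \nabla f_i(\xits)\rVert^2 \mid \xi^{[t-1]}\right]\right]
\leq \E[\sigma^2] = \sigma^2
\]
by Assumption~\ref{as:2}. (I read the symbol $\fixttaus$ in the statement as $f_i(\xits)$, consistent with the notation macro and with the analogue in Lemma~\ref{le:2}.)

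I do not anticipate a serious obstacle here; the lemma is essentially a bookkeeping result. The only subtlety worth writing out carefully is the induction establishing that $\xits$ is $\xi^{[t-1]}$-measurable under Algorithm~\ref{alg:eqv}, since a careless reading of the local-update rule (\ref{eq:eqv-local update}) might suggest $\xits$ depends on $\xi_i^{t-1}$ in a way that could clash with the conditioning; writing out that $\xit_i$ after iteration $t-1$ is a function of $\{\xi_i^\tau\}_{\tau\le t-1}$ settles this. Once that measurability is stated, both inequalities reduce to a direct application of the corresponding assumptions exactly as in the original proof of Lemma~\ref{le:2}.
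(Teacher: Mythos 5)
Your proposal is correct and follows essentially the same route as the paper, which simply remarks that the proof of Lemma~\ref{le:2} goes through verbatim once $\xts$ is replaced by $\xits$ (conditioning on $\xi^{[t-1]}$, Law of Total Expectation, then Assumptions~\ref{as:3} and~\ref{as:2}). Your explicit note that $\xits$ is $\xi^{[t-1]}$-measurable in Algorithm~\ref{alg:eqv} is exactly the point the paper leaves implicit, so there is nothing to add.
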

\begin{proof}[Proof of Lemma~\ref{le:2-2}]
	Replacing $\xts$ with $\xits$, the proof is exactly the same with that of Lemma~\ref{le:2}.
\end{proof}

\begin{lemma}
	\label{le:eg-ef-2}
	Corresponding lemma of Lemma~\ref{le:eg-ef}:
	$\forall i, \forall t$, we have
	\begin{eqnarray}
	&&
	\E \left[ \left\lVert\sum_{i=1}^{N} \left( \giTittau-\nabla \fixTittaus \right) \right\rVert^2 \right]
	\new 
	&=&
	\sum_{i=1}^{N}\E \left[ \left\lVert\giTittau-\nabla \fixTittaus\right\rVert^2 \right] \nonumber.
	\end{eqnarray}
\end{lemma}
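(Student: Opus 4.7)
The plan is to mirror the proof of Lemma~\ref{le:eg-ef} almost verbatim, with the round-based indexing substituted for the single-iteration indexing. First I would expand the squared norm on the left-hand side as a double sum over client pairs $(p,q)$ of expected inner products. The diagonal $p=q$ contributions immediately yield the right-hand side of the lemma, so the task reduces to showing that the off-diagonal cross terms vanish.

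For the off-diagonal terms, set $T_i^{t,\tau}\triangleq \Rirt C - r^tC + t$ and, without loss of generality, assume $T_p^{t,\tau}\leq T_q^{t,\tau}$. I would apply the \LTE, conditioning on $\xi^{[\min\{T_p^{t,\tau},T_q^{t,\tau}\}]}$. In the strict-inequality case, the $q$-factor still contains a fresh draw that has not yet been sampled, so a further Tower step combined with Assumption~\ref{as:2} forces its conditional mean to zero and kills the cross term. In the equality case, I would instead condition on $\xi^{[T_p^{t,\tau}-1]}$ so that both local parameters $\x_p^{T_p^{t,\tau}-1}$ and $\x_q^{T_p^{t,\tau}-1}$ become deterministic; the only remaining randomness is in $\xi_p^{T_p^{t,\tau}}$ and $\xi_q^{T_p^{t,\tau}}$, which are independent across clients, so the conditional covariance vanishes.

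The main obstacle is the measurability bookkeeping that is new to the multi-local-iteration regime. Unlike the one-iteration-per-round setting, the local parameter $\x_i^{T_i^{t,\tau}-1}$ is built from client $i$'s own local stochastic draws earlier within round $\Rirt$, together with the global model at the beginning of that round, so I need to verify that all of these are swept into the conditioning $\sigma$-algebra $\xi^{[T_p^{t,\tau}-1]}$ (or $\xi^{[T_p^{t,\tau}]}$ in the strict-inequality case). This holds because $\Rirt\leq r^t$ implies every iteration index feeding into $\x_i^{T_i^{t,\tau}-1}$ is at most $T_p^{t,\tau}-1$, and by construction client $i$'s local draws in its latest participating round precede the current iteration. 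Once this bookkeeping is in place, the argument closes identically to the proof of Lemma~\ref{le:eg-ef}.
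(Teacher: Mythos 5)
Your proposal is correct and follows essentially the same route as the paper, which simply observes that the proof of Lemma~\ref{le:eg-ef} carries over verbatim under the substitution of the round-based indices; your extra care about the measurability of the local iterates with respect to the conditioning $\sigma$-algebra is a sound (and welcome) elaboration of what the paper leaves implicit. One small correction: in the strict-inequality case the vanishing of the conditional mean follows from the unbiasedness of the stochastic gradient, $\E[\g_i^t\mid\xi^{[t-1]}]=\nabla f_i(\x_i^{t-1})$, not from the bounded-variance Assumption~\ref{as:2}.
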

\begin{proof}[Proof of Lemma~\ref{le:eg-ef-2}]
	Replacing $\giTit$ with $\giTittau$ and $\nabla \fixTitS$ with $\nabla \fixTittaus$, the proof is exactly the same with that of Lemma~\ref{le:eg-ef}.
\end{proof}
	
Note that Lemma~\ref{le:ft-ftit} and Corollary~\ref{co:ft-ftit-1} still hold. Their proof follows as well if we replace the relation $\x^{  \tau  }-\x^{  \tau -1  } = \sum_{j=1}^{N}\gjTjtau$ with $\x^{  \tau  }-\x^{  \tau -1  } = \sum_{j=1}^{N}\intermediateForUpdate$.

\begin{proof}[Main proof of Theorem~\ref{co:multiround}]
	The proof is similar to that of Theorem~\ref{the:main} and Corollary~\ref{co:main}. We illustrate it in detail as follow.
	
	Fix $t\geq 1$, by Assumption~\ref{as:1}, we have
	\begin{eqnarray}\label{eq:mul-0}
	\E \left[ f ( \x^{  t   } )  \right] 
	\leq
	\E \left[ \fxts \right] 
	+\frac{L}{2}\E \left[ \left\lVert \xt - \xts \right\rVert ^2 \right] 
	+\E \left[ \left\langle\nabla \fxts, 
	\xt - \xts \right\rangle \right].
	\end{eqnarray}
	
	Focus on the second term on the right. Following the procedure of (\ref{eq:main-1}), we omit the intermediate results and show the final bound:
	\begin{eqnarray}\label{eq:mul-1}
	\E \left[ \left\lVert \xt - \xts \right\rVert ^2 \right]
	\leq
	\frac{2\gamma^2 \sigma^2}{N} 
	+ 2 \gamma^2 \E\left[\left\lVert \frac{1}{N}\sum_{i=1}^{N}  \nabla \fixTittaus \right\rVert ^2 \right].
	\end{eqnarray}
	
	For simplicity, we define $\Tht$ as $\min_{i} \left(\Rirt C - r^t C + t \right)$. Focus on the third term in (\ref{eq:mul-0}), we can separate it into 3 parts
	\begin{eqnarray}\label{eq:mul-4}
	&&
	\E \left[ \left\langle\nabla \fxts, 
	\xt - \xts \right\rangle \right]
	\new
	&=&
	-\gamma \E \left[  \left\langle \nabla \fxts -  \nabla \fxThtS , \frac{1}{N}\sum_{i=1}^{N}\left(\giTittau -  \nabla \fixTittaus \right)\right\rangle \right] 
	\newl
	-\gamma \E \left[  \left\langle \nabla \fxts - \nabla  \fxThtS , \frac{1}{N}\sum_{i=1}^{N}  \nabla \fixTittaus \right\rangle \right] 
	\newl
	- \gamma \E \left[  \left\langle \nabla \fxThtS , \frac{1}{N}\sum_{i=1}^{N}\giTittau \right\rangle \right].
	\end{eqnarray}
	
	We further focus on the first term in (\ref{eq:mul-4}). Following the procedure of (\ref{eq:main3}), we have the following bound:
	\begin{eqnarray}\label{eq:mul3}
	&&
	-\gamma \E \left[  \left\langle 
	\nabla \fxts -  \nabla \fxThtS 
	, 
	\frac{1}{N}\sum_{i=1}^{N}\left(\giTittau -  \nabla \fixTittaus \right)
	\right\rangle \right] 
	\new
	&\leq&
	\frac{\gamma^2 I C L \left(G^2 + \sigma^2\right)}{2\sqrt{N}}.
	\end{eqnarray}
	
	Then we focus on the second term in (\ref{eq:mul-4}). Following the procedure of (\ref{eq:main4}), we have
	\begin{eqnarray}\label{eq:mul4}
	&&
	\gamma \E \left[  \left\langle 
	\nabla \fxts - \nabla  \fxThtS 
	, 
	\frac{1}{N}\sum_{i=1}^{N}  \nabla \fixTittaus 
	\right\rangle \right]  
	\new
	&\leq&
	\frac{\gamma^3 I^2 C^2 L^2 G^2}{2\left(1 - 2\gamma L\right)} 
	+
	\frac{\gamma \left(1 - 2\gamma L\right)}{2} \E\left[ \left\lVert 
	\frac{1}{N} \sum_{i=1}^{N}  \nabla \fixTittaus  
	\right\rVert^2 \right].
	\end{eqnarray}
	
	We finally focus on the third term in (\ref{eq:mul-4}). Following the procedure of (\ref{eq:main5}) and (\ref{eq:main6}), we have
	\begin{eqnarray}\label{eq:mul5}
	&&
	\E \left[  \left\langle \nabla \fxThtS , \frac{1}{N}\sum_{i=1}^{N}\giTittau \right\rangle \right]
	\new
	&\overset{(b)}{=}&
	\frac{1}{2}\E \left[ \left\lVert \nabla \fxThtS \right\rVert^2 \right] 
	+ \frac{1}{2}\E\left[ \left\lVert \frac{1}{N}\sum_{i=1}^{N} \nabla \fixTittaus \right\rVert^2 \right]
	-  \frac{1}{2} I^2 L^2 C^2 \gamma^2 G^2.
	\newl
	\end{eqnarray}
	
	Substituting (\ref{eq:mul3})--(\ref{eq:mul5}) into (\ref{eq:mul-4}), we have:
	\begin{eqnarray}\label{eq:mul-9}
	&&
	\E \left[ \left\langle\nabla \fxts, 
	\xt - \xts \right\rangle \right]
	\new
	&\leq&
	\frac{\gamma^2 I C L \left(G^2 + \sigma^2\right)}{2\sqrt{N}}
	+
	\frac{\gamma^3 I^2 C^2 L^2 G^2}{2\left(1 - 2\gamma L\right)} 
	+  
	\frac{\gamma^3  I^2 C^2 L^2 G^2}{2}
	\newl
	-
	\gamma^2L \E\left[ \left\lVert \frac{1}{N}\sum_{i=1}^{N} \nabla  \fixTittaus \right\rVert^2 \right]  
	- 
	\frac{\gamma}{2}\E \left[ \left\lVert \nabla \fxThtS \right\rVert^2 \right].
	\end{eqnarray}
	
	Further substituting (\ref{eq:mul-1}) and (\ref{eq:mul-9}) into (\ref{eq:mul-0}), we have
	\begin{eqnarray}\label{eq:mul-fin}
	&&
	\E \left[ f ( \x^{  t  } )  \right] 
	-
	\E \left[ \fxts \right] 
	\new 
	&\leq&
	\frac{\gamma^2 \sigma^2 L}{N} 
	+ 
	\frac{\gamma^2 IC L \left(G^2+\sigma^2\right)}{2\sqrt{N}} 
	+
	\frac{\gamma^3 I^2 C^2 L^2 G^2}{2\left(1 - 2\gamma L\right)} 
	+  
	\frac{\gamma^3  I^2 C^2 L^2 G^2}{2} 
	- 
	\frac{\gamma}{2}\E \left[ \left\lVert \nabla \fxThtS \right\rVert^2 \right].\newl
	\end{eqnarray}
	
	Rearrange the above equation and we have 
	\begin{eqnarray}\label{eq:mul-fin-1}
	\E \left[ \left\lVert \nabla \fxThtS \right\rVert^2 \right] 
	\leq
	&&
	\frac{2\gamma \sigma^2 L}{N} 
	+ 
	\frac{\gamma IC L \left(G^2 + \sigma^2\right)}{\sqrt{N}} 
	+
	\frac{\gamma^2 I^2 C^2 L^2 G^2}{\left(1 - 2\gamma L\right)} 
	+  
	\gamma^2  I^2 C^2 L^2 G^2 
	\newl
	+
	\frac{2}{\gamma} \left(\E \left[ \fxts \right] 
	- 
	\E \left[ \fxt \right] \right).
	\end{eqnarray}
	
	Summing (\ref{eq:mul-fin-1}) over iterations from 1 to $RC$ and dividing both sides by $RC$, we have
	\begin{eqnarray}\label{eq:mul-fin-2}
	\frac{1}{RC}\sum_{t=1}^{RC}\E \left[ \left\lVert \nabla \fxThtS \right\rVert^2 \right] 
	\leq
	&&
	\frac{2\gamma \sigma^2 L}{N} 
	+ 
	\frac{\gamma IC L \left(G^2+\sigma^2\right)}{\sqrt{N}}   
	+
	\frac{\gamma^2 I^2 C^2 L^2 G^2}{\left(1 - 2\gamma L\right)} 
	\newl
	+  
	\gamma^2  I^2 C^2 L^2 G^2 
	+
	\frac{2}{\gamma RC} \left(\E \left[ f(\x^{0 }) \right] 
	- 
	\E \left[ f ( \x^* )  \right] \right),
	\end{eqnarray}
	where $\x^*$ is the optimal value for the objective function $f(\x)$.
	
	Finally, we build the gap between $\nabla \fxrtCs$ and $\nabla \fxThtS$.
	Lemma~\ref{le:I-2} implies that $t - \Tht \leq IC$, thus $r^tC - \Tht\leq (I+1)C $. Hence, we have
	\begin{eqnarray}\label{eq:mul-fin-3}
	&&
	\E\left[\left\lVert \nabla  \fxrtCs \right\rVert^2\right]
	\new
	&\leq&
	2 \E\left[\left\lVert  \nabla \fxrtCs - \frac{1}{C}\sum_{\tau=(r_t-1)C+1}^{r_t C } \nabla  \fxThtauS \right\rVert^2 \right] 
	\newl
	+
	2 \E \left[ \left\lVert \frac{1}{C} \sum_{\tau=(r_t-1)C+1}^{r_t C } \nabla  \fxThtauS \right\rVert^2\right]
	\new
	&\leq&
	2 \gamma^2 \left(I+1\right)^2 C^2 L^2 G^2 
	+
	2 \frac{1}{C} \sum_{\tau=(r_t-1)C+1}^{r_t C} \E \left[ \left\lVert   \nabla  \fxThtauS \right\rVert^2\right],
	\end{eqnarray}
	which follows from \convex  and Corollary~\ref{co:ft-ftit-1}.
	
	Summing \ref{eq:mul-fin-3} over $t\in \left\{C, 2C, \cdots, RC\right\}$, dividing both sides by $R$ and substituting \ref{eq:mul-fin-2} into it, we have
	\begin{eqnarray}\label{eq:mul-the}
	&&
	\frac{1}{R}\sum_{r=1}^{R}\E \left[ \left\lVert \nabla \fxrCs \right\rVert^2 \right]
	\new 
	&\leq&
	\frac{4\gamma \sigma^2 L}{N} 
	+ 
	\frac{2\gamma IC L \left(G^2+\sigma^2\right)}{\sqrt{N}}   
	+  
	\left( \frac{2I^2}{\left(1 - 2\gamma L\right)} + 4I^2 + 4I + 2\right) \gamma^2 C^2 L^2 G^2 
	\newl
	+
	\frac{4}{\gamma RC} \left(\E \left[ f(\x^{ 0}) \right] 
	- 
	\E \left[ f ( \x^* )  \right] \right).
	\end{eqnarray}
	
	Then, we write the $O(\cdot)$ expression of the above equation:
	\begin{eqnarray}\label{eq:mul-co1}
	\frac{1}{R}\sum_{t=1}^{R}\E \left[ \left\lVert \nabla \fxrCs \right\rVert^2 \right]
	=
	O\left(
	\frac{\gamma IC L \left(G^2+\sigma^2\right)}{\sqrt{N}}   
	+  
	\frac{I^2 \gamma^2 C^2 L^2 G^2}{\left(1 - 2\gamma L\right)}  
	+
	\frac{B}{\gamma RC}
	\right).
	\end{eqnarray}
	
	Substituting $\gamma$ with $( \beta^{1/2} N^{1/4} )/(2L C E^{1/2} R^{1/2} ) $, we have
	\begin{eqnarray}\label{eq:mul-co2}
	\frac{1}{R}\sum_{t=1}^{R}\E \left[ \left\lVert \nabla \fxrCs \right\rVert^2 \right]
	=
	O\left(
	\frac{ E^{\frac{1}{2}}  \left(G^2+\sigma^2 + B L\right)}
	{ \beta^{\frac{1}{2}} N^{\frac{1}{4}} R^{\frac{1}{2}} }   
	+  
	\frac{ E  G^2  N^{\frac{1}{2}} }{ \beta  R }   
	\right).
	\end{eqnarray}
	
	If we further choose $R >  E N^{3/2} /\beta $,
	we have
	\begin{eqnarray}\label{eq:mul-co3}
	&&
	\frac{1}{R}\sum_{t=1}^{R}\E \left[ \left\lVert \nabla \fxrCs \right\rVert^2 \right]
	=
	O\left(
	\frac{ E^{\frac{1}{2}}  \left(G^2+\sigma^2\right) + BL}
	{ \beta^{\frac{1}{2}} N^{\frac{1}{4}} R^{\frac{1}{2}} }   
	\right) 
	=
	O\left(
	\frac{ 1}
	{ N^{\frac{1}{4}} R^{\frac{1}{2}} }   
	\right).
	\end{eqnarray}
	The final equation follows if we care only about $N$ and $R$, and regard other parameters as constants.
\end{proof}


\section{Complexity Analysis}
\label{sec:complexity}

We analyze the time and space complexity of Algorithms~\ref{alg:LAS} and \ref{alg:multiround} in this appendix.
We use $P$ to denote the time complexity of one backpropagation and $Q$ to denote the number of parameters in the deep learning model. 

In each iteration of Algorithm~\ref{alg:LAS}, each client performs one backpropagation to obtain the local gradient and computes the gradient difference. This requires $O(P+Q)$ time complexity per client per iteration and $O(Q)$ space complexity to locally store the gradient calculated in the previous participating iteration. The cloud server selects $K$ clients from $\CC^t$ in each iteration $t$. Our implementation is sorting an array of $\Tits$ first and picking the $K$ clients from $\CC^t$ with the lowest $\Tits$ according to the sorted array. This requires $O(N\log{N})$ time complexity to sort the array and $O(N)$ space complexity to store the array.
Then, the cloud server aggregates the gradient difference to obtain the average latest gradient $\gt$, and update the global model. This requires $O(KQ)$ time complexity and $O(Q)$ space complexity.
To summarize, the time complexity of each iteration in Algorithm~\ref{alg:LAS} is $O(P+Q)$ on each client and $O(N\log{N}+KQ)$ on the cloud server. The space complexity is $O(Q)$ on each client and $O(N + Q)$ on the cloud server.
By similar analysis, the time complexity of each round in Algorithm~\ref{alg:multiround} is $O(CP+Q)$ on each client and $O(N\log{N}+KQ)$ on the cloud server. The space complexity is $O(Q)$ on each client and $O(N + Q)$ on the cloud server.

Compared with FedAvg, FedLaAvg only needs to additionally store the latest gradients of all clients, incurring $O(Q)$ disk space on each resource-limited client and $O(N+Q)$ memory space  on the resource-rich cloud server, which are acceptable and affordable.

\section{Supplementary Experiment Results}
\label{sec:sup exp}
In this section, we show the test accuracies with training rounds for each experiment.
Figure~\ref{fig:mnist_ta} compares the test accuracies of FedLaAvg and other baselines
in the MNIST image classification task under various availability settings;
Figure~\ref{fig:mnist_self_test-acc(real)} shows the test accuracies 
of FedLaAvg with different total number of clients $N$ 
and proportion of selected clients $\beta$;
Figure~\ref{fig:sentiment_ta} compares the test accuracies of FedLaAvg and other baselines
in the  Sentiment140 dataset under various availability settings;
Figure~\ref{fig:sentiment_self_test-acc(real)} compares 
shows the test accuracies 
of FedLaAvg with different $N$ and $\beta$.
All figures show consistent results with the training losses

\begin{figure*}[tb]
	\centering	
	\subfigure[Legends]{\label{mnist_legends_ta}\includegraphics[width=0.25\textwidth]{figures/mnist_legend}}
	\subfigure[$E=100$, $D=3$]{\label{fig:mnist_E100_D3_ta}\includegraphics[width=0.3\textwidth]{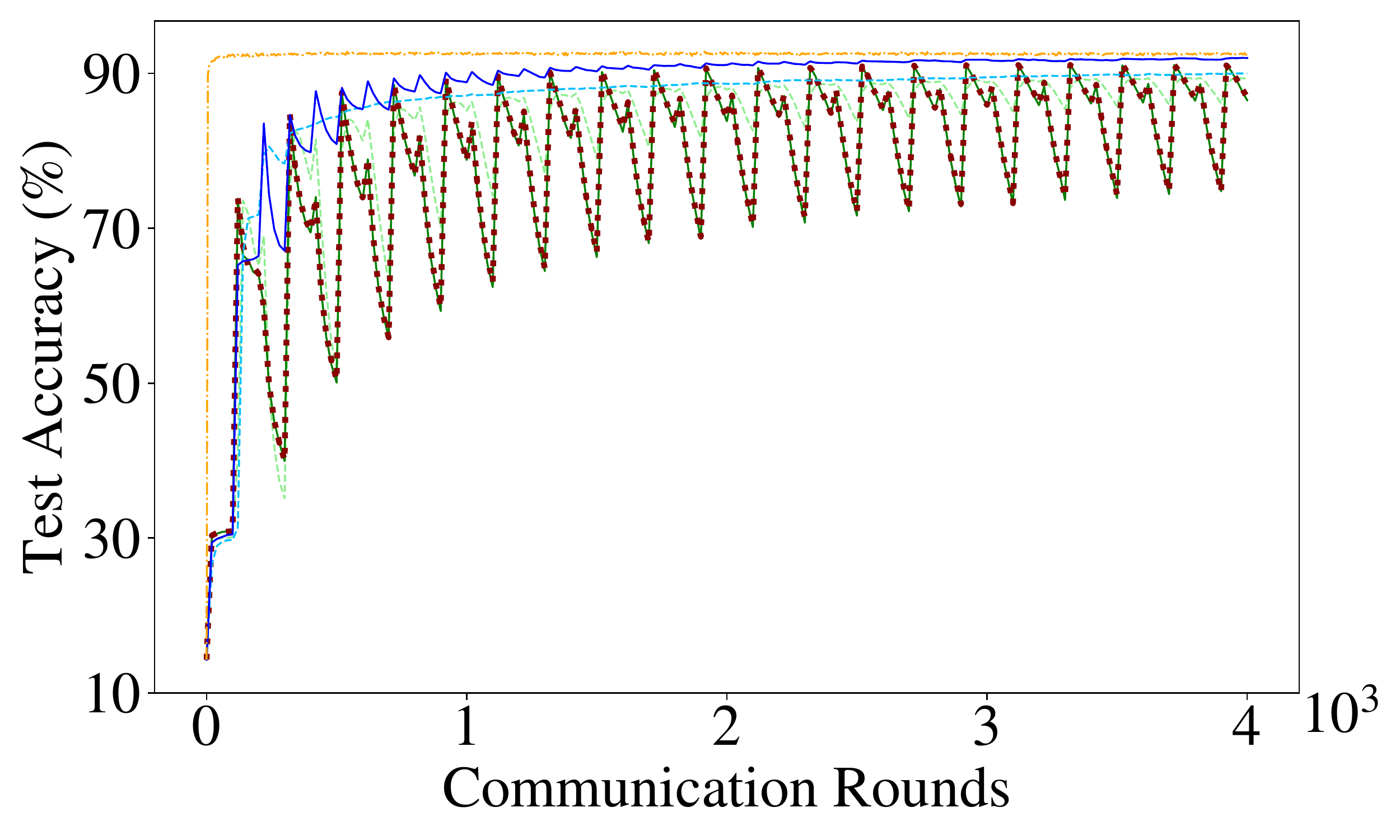}}
	\subfigure[$E=100$, $D=5$]{\label{fig:mnist_E100_D5_ta}\includegraphics[width=0.3\textwidth]{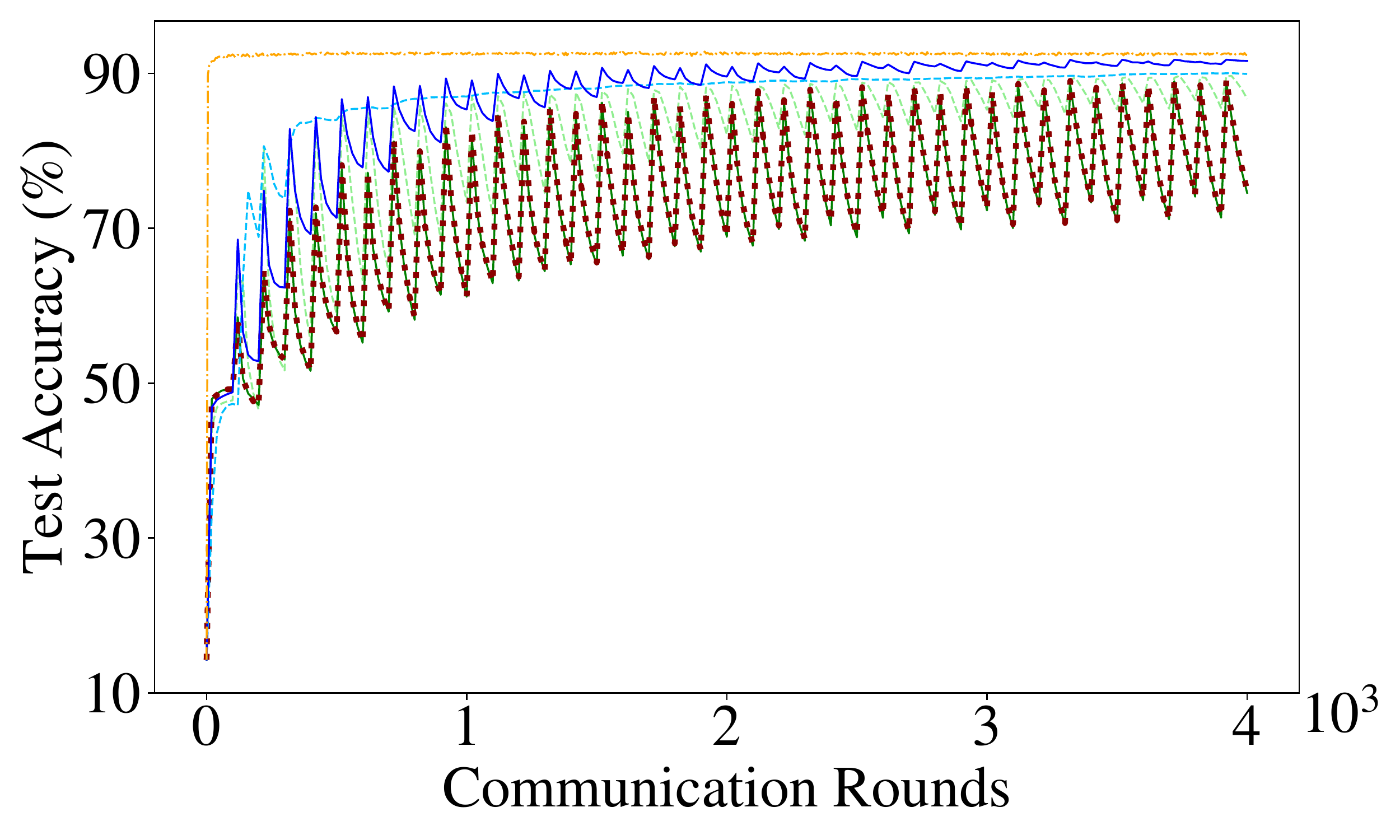}}
	\subfigure[$E=50$, $D=1$]{\label{fig:mnist_E50_D1_ta}\includegraphics[width=0.3\textwidth]{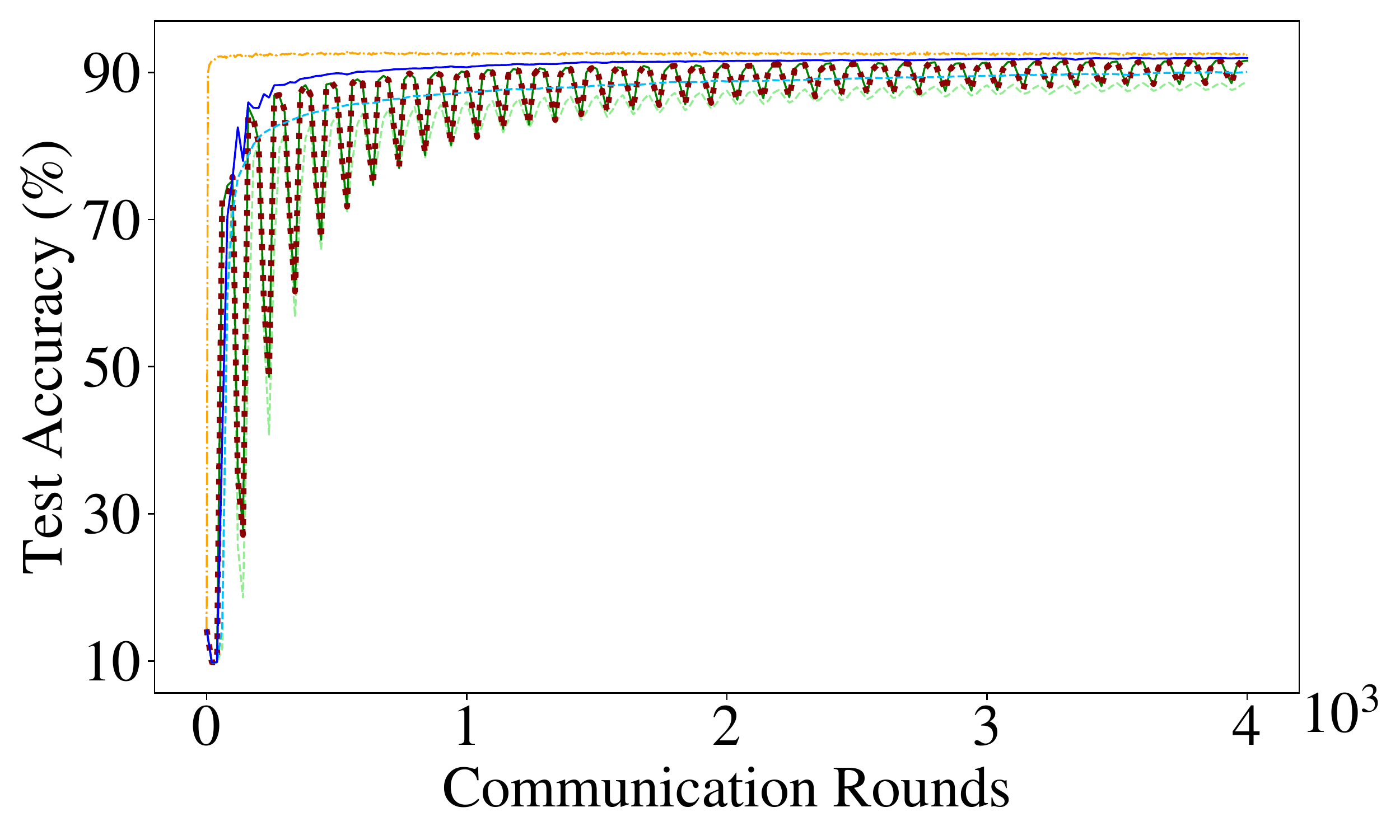}}
	\subfigure[$E=100$, $D=1$]{\label{fig:mnist_E100_D1_ta}\includegraphics[width=0.3\textwidth]{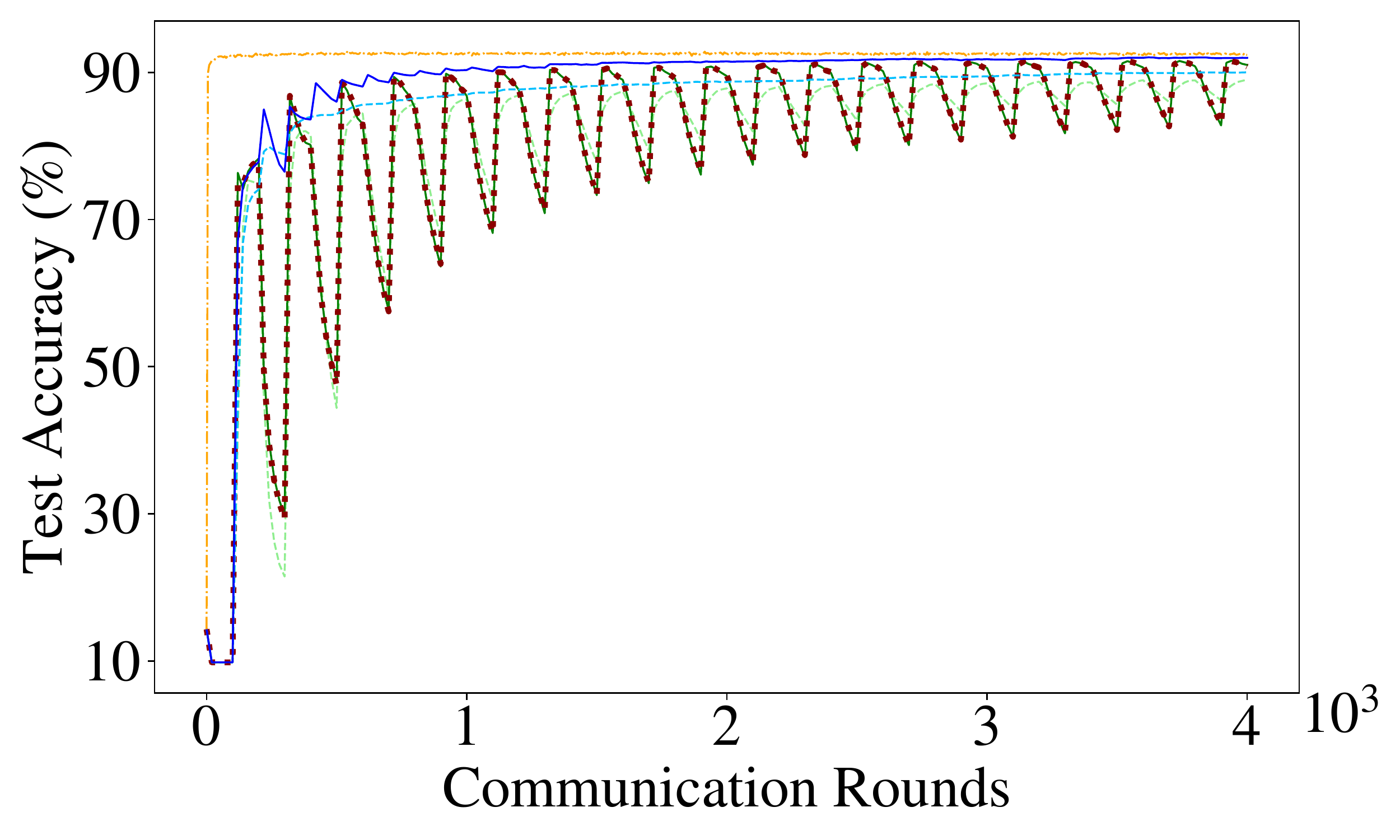}}
	\subfigure[$E=200$, $D=1$]{\label{fig:mnist_E200_D1_ta}\includegraphics[width=0.3\textwidth]{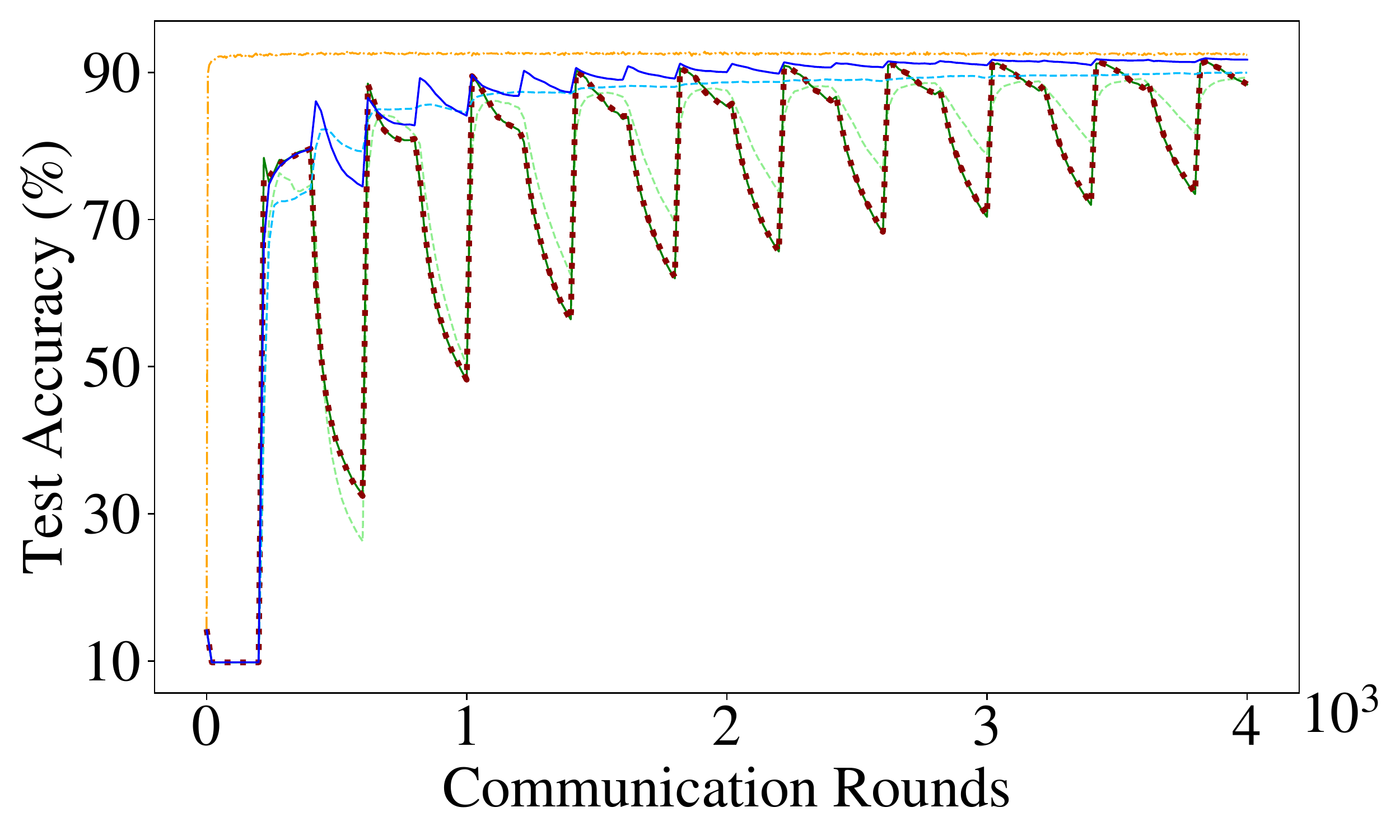}}
	\caption{Test accuracies of
	FedSGD, FedAvg, FedProx, FedLaAvg, and sequential SGD
	in the MNIST image classification task 
	with different client availability settings.}
	\label{fig:mnist_ta}
\end{figure*}

\begin{figure*}[tb]
	\centering
	\subfigure[Test accuracy with different $N$]{\label{fig:mnist_N_ta}\includegraphics[width=0.4\textwidth]{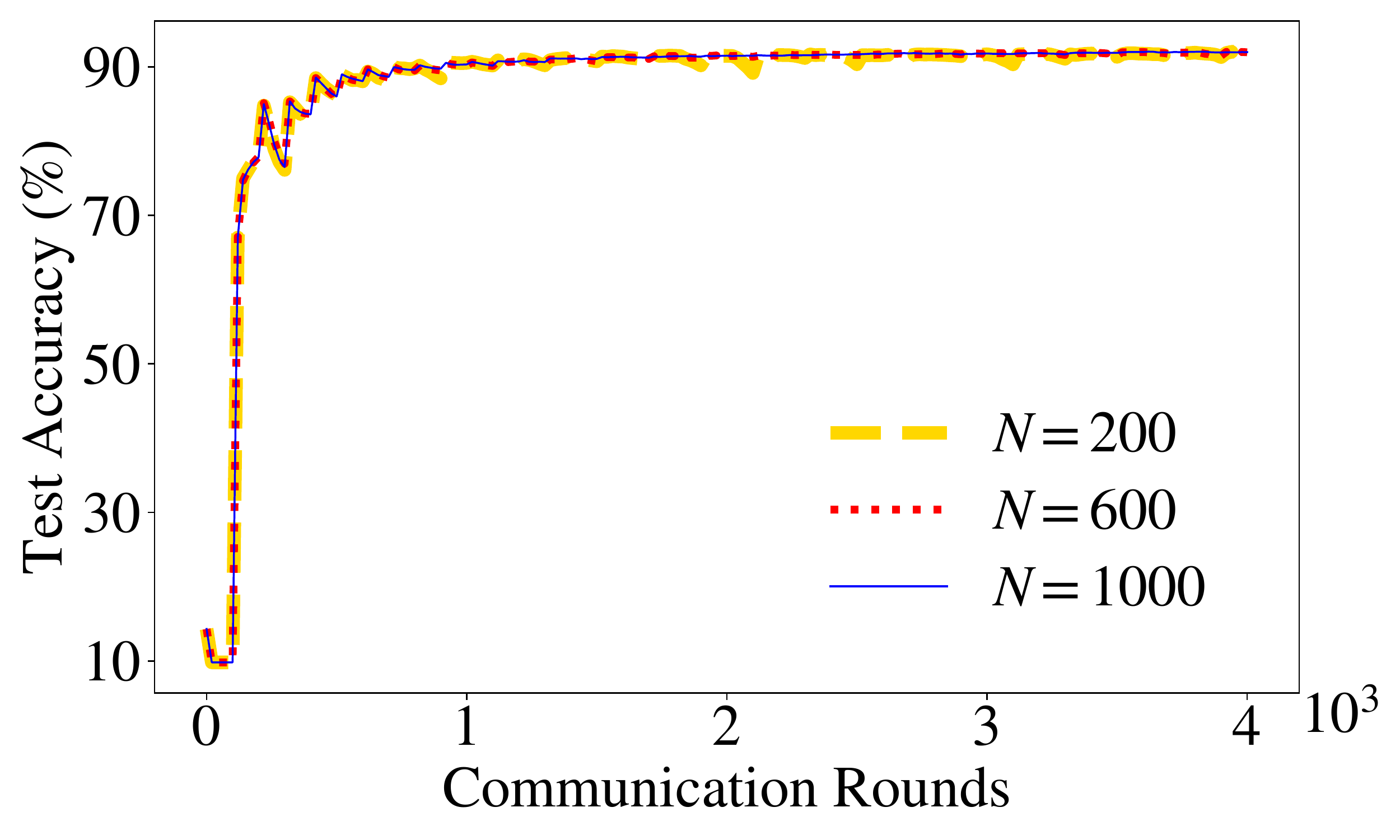}}
	\subfigure[Test accuracy with different $\beta$]{\label{fig:mnist_beta_ta}\includegraphics[width=0.4\textwidth]{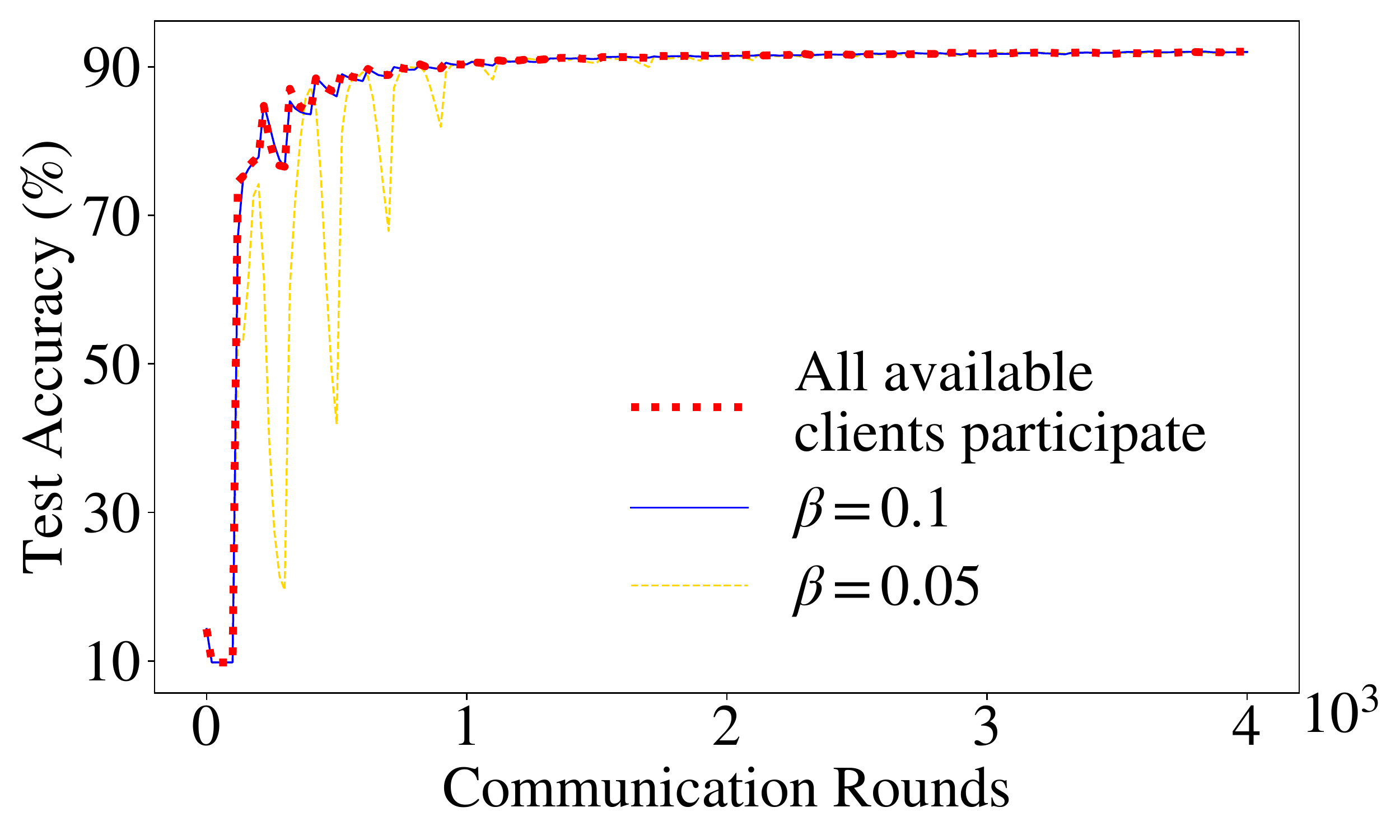}}
	\caption{Test accuracies of FedLaAvg on  MNIST dataset by varying 
	the total number of clients $N$ and the proportion of selected clients $\beta$.}
	\label{fig:mnist_self_test-acc(real)}
\end{figure*}

\begin{figure*}
	\subfigure[Legends]{\label{sentiment_legends_ta}\includegraphics[width=0.25\textwidth]{figures/sentiment_legend}}
	\subfigure[$E=120$, $\alpha=0.25$]{\label{fig:sentiment_E120_a0.25_ta}\includegraphics[width=0.3\textwidth]{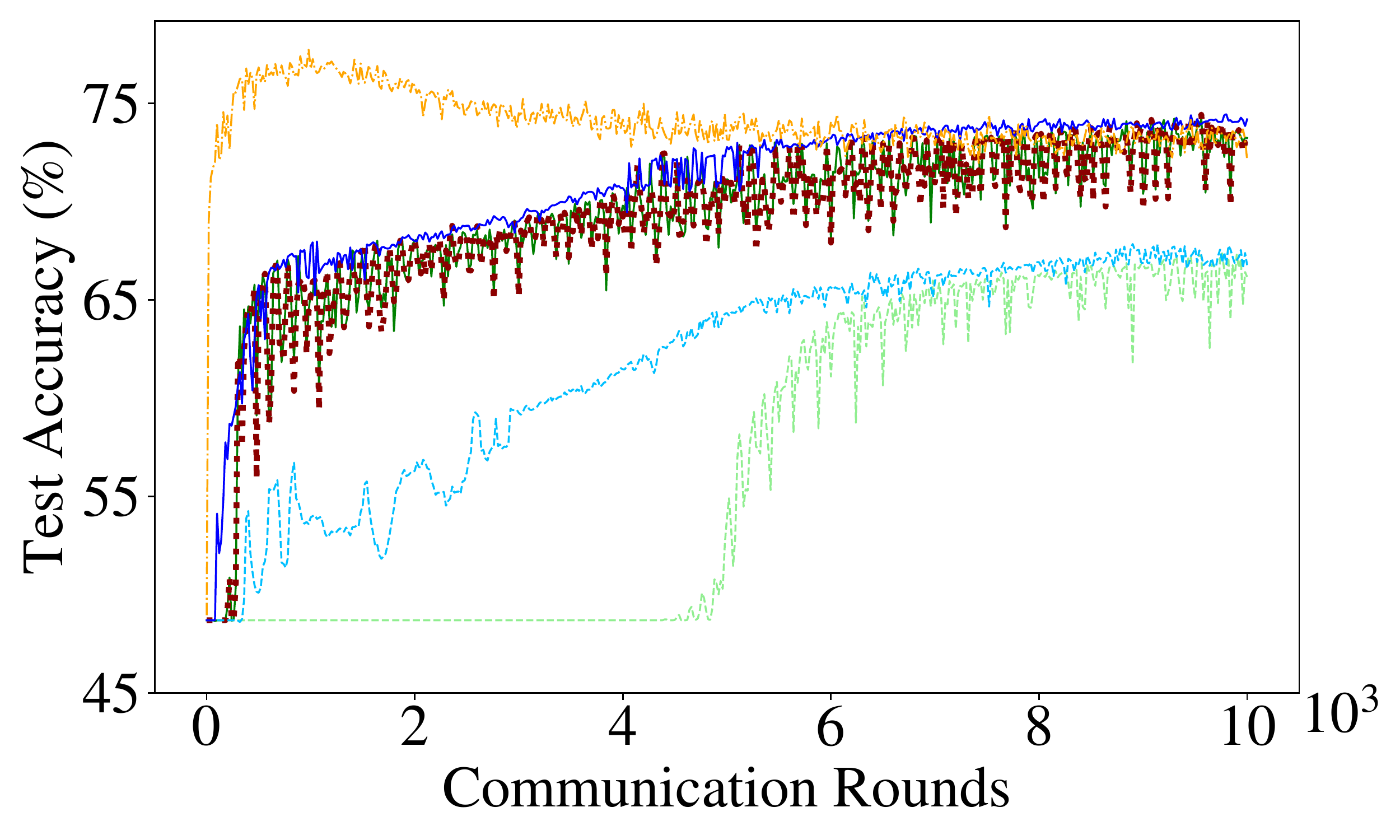}}
	\subfigure[$E=120$, $\alpha=0.5$]{\label{fig:sentiment_E120_a0.5_ta}\includegraphics[width=0.3\textwidth]{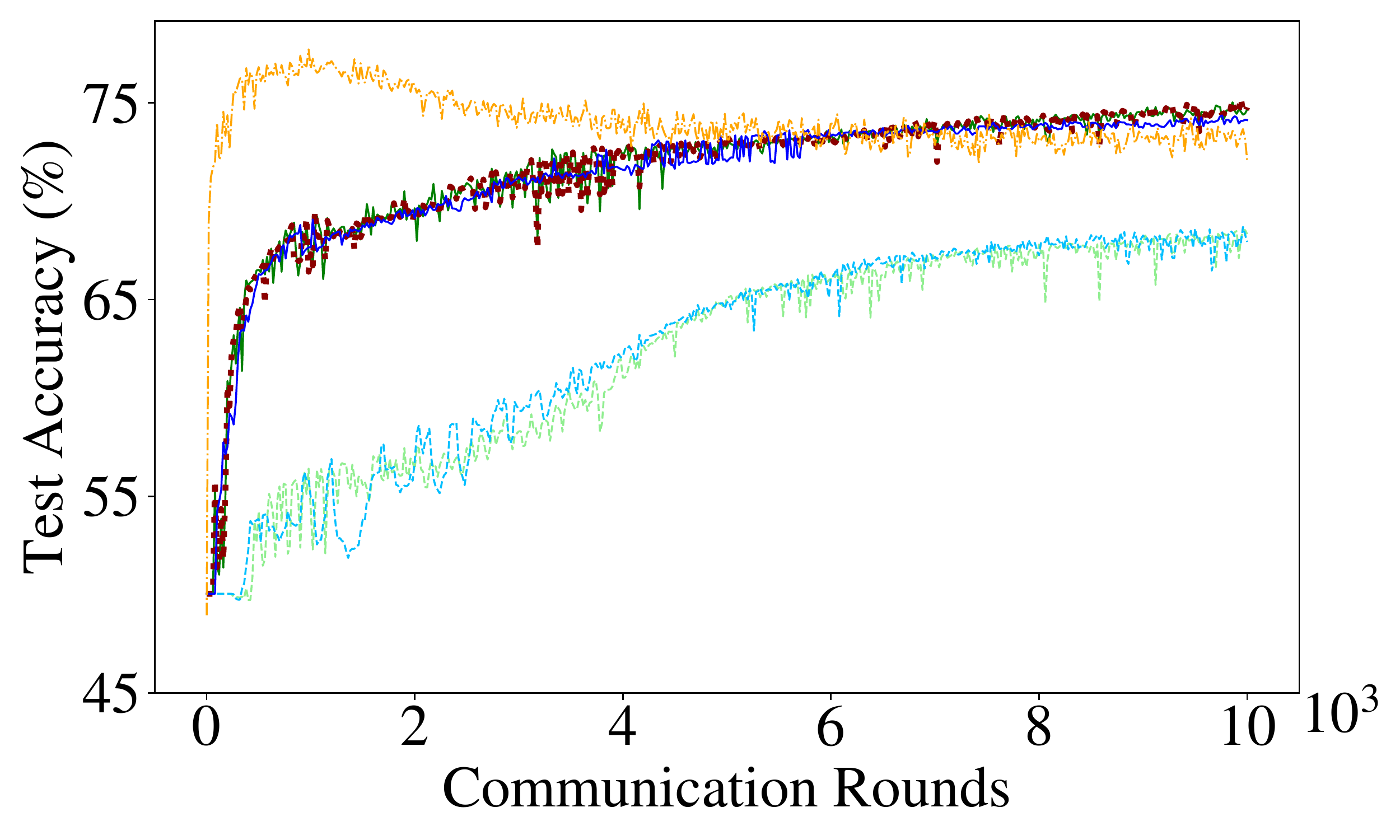}}
	\subfigure[$E=24$, $\alpha=0$]{\label{fig:sentiment_E24_a0_ta}\includegraphics[width=0.3\textwidth]{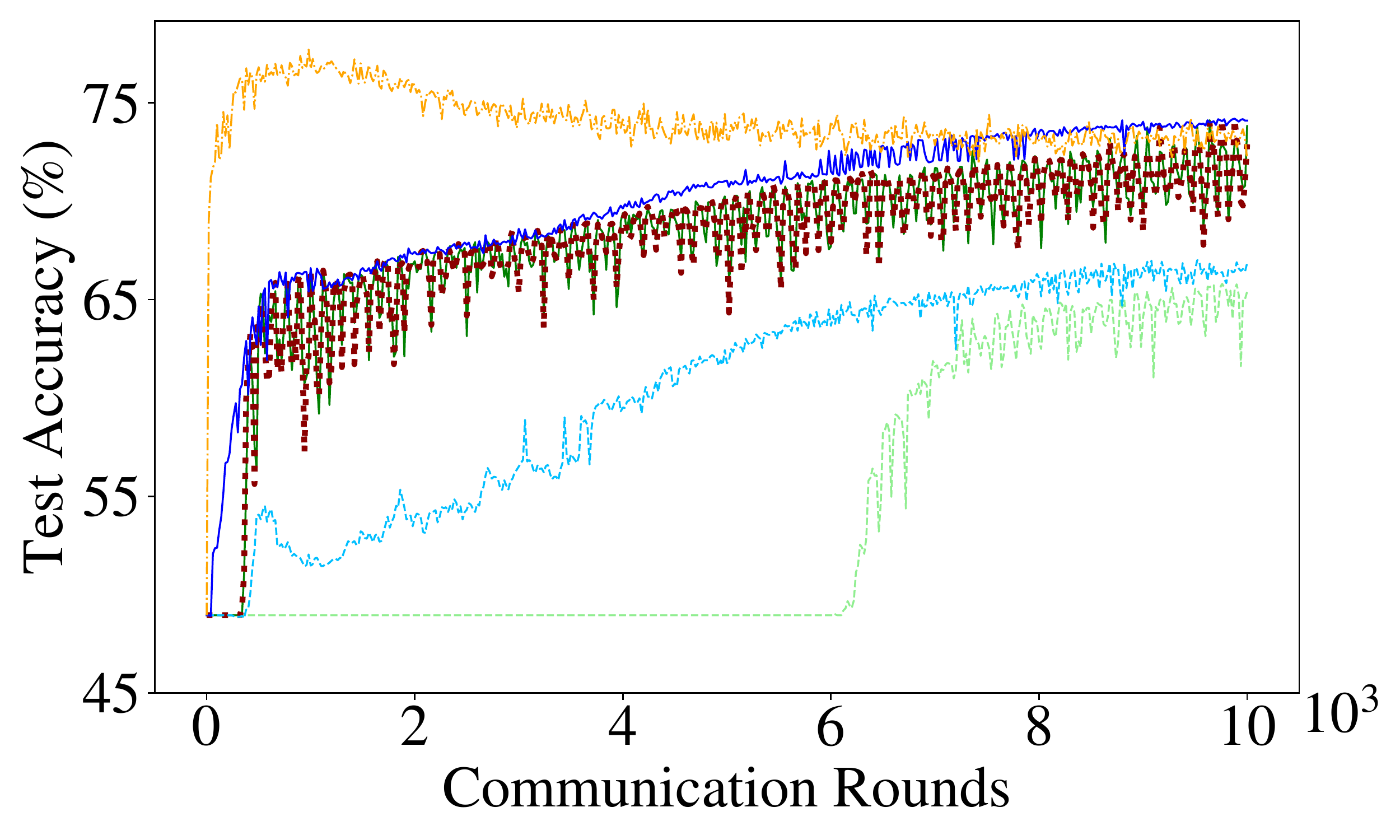}}
	\subfigure[$E=120$, $\alpha=0$]{\label{fig:sentiment_E120_a0_ta}\includegraphics[width=0.3\textwidth]{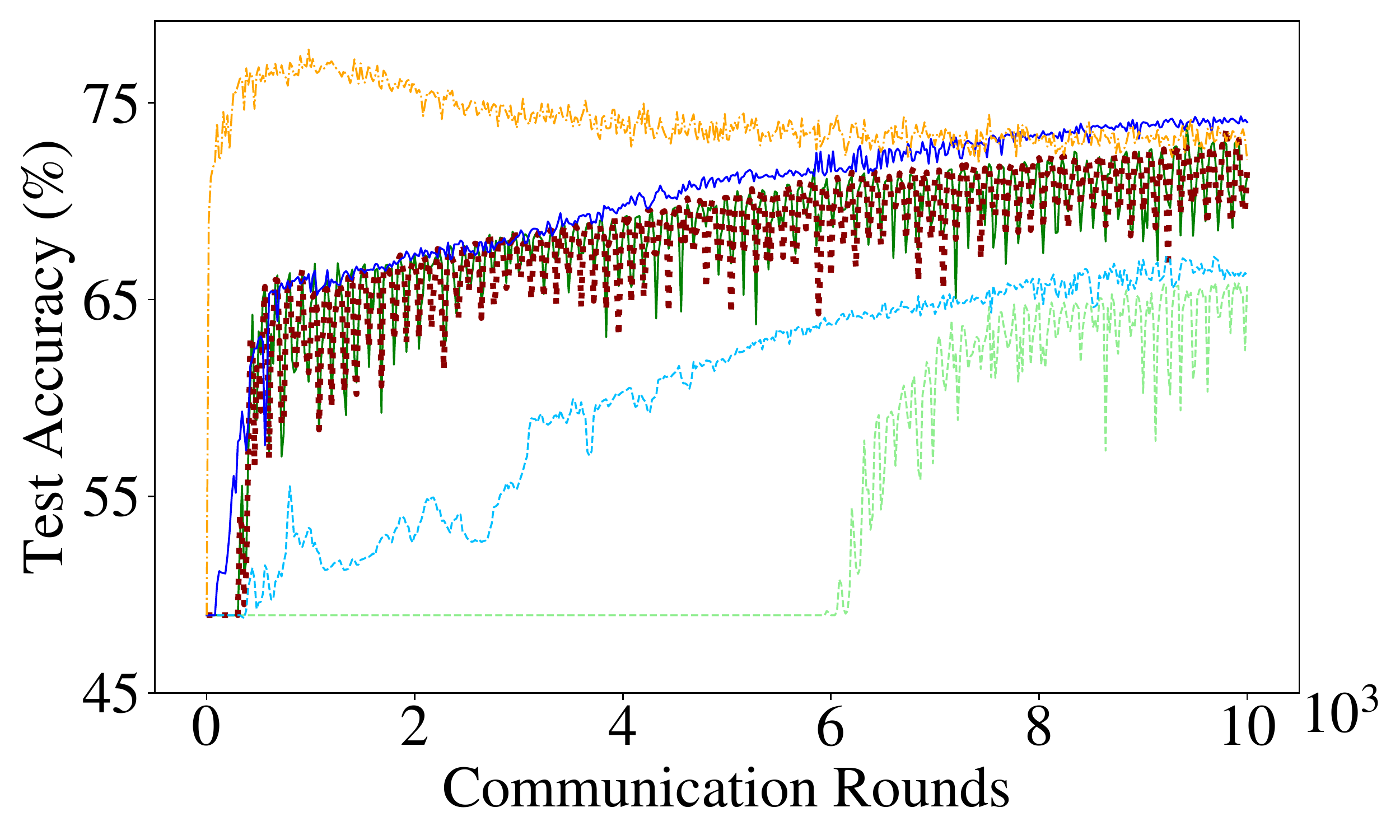}}
	\subfigure[$E=240$, $\alpha=0$]{\label{fig:sentiment_E240_a0_ta}\includegraphics[width=0.3\textwidth]{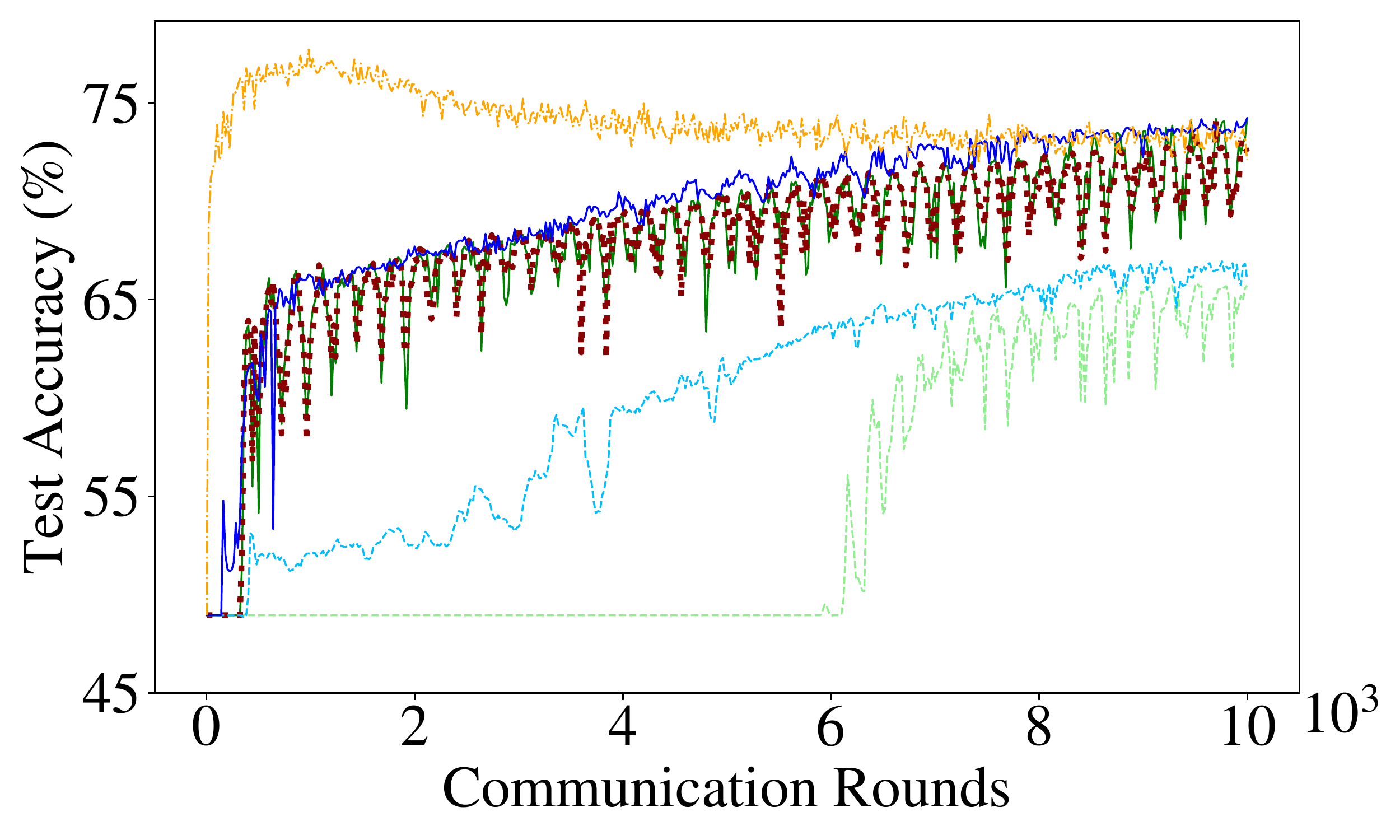}}
	\caption{Test accuracies of
	FedSGD, FedAvg, FedProx, FedLaAvg, and sequential SGD
	in the Sentiment140 sentiment analysis task 
	with different client availability settings.}
	\label{fig:sentiment_ta}
\end{figure*}

\begin{figure*}[tb]
	\centering
	\subfigure[Test accuracy with different $N$]{\label{fig:sentiment_N_ta}\includegraphics[width=0.4\textwidth]{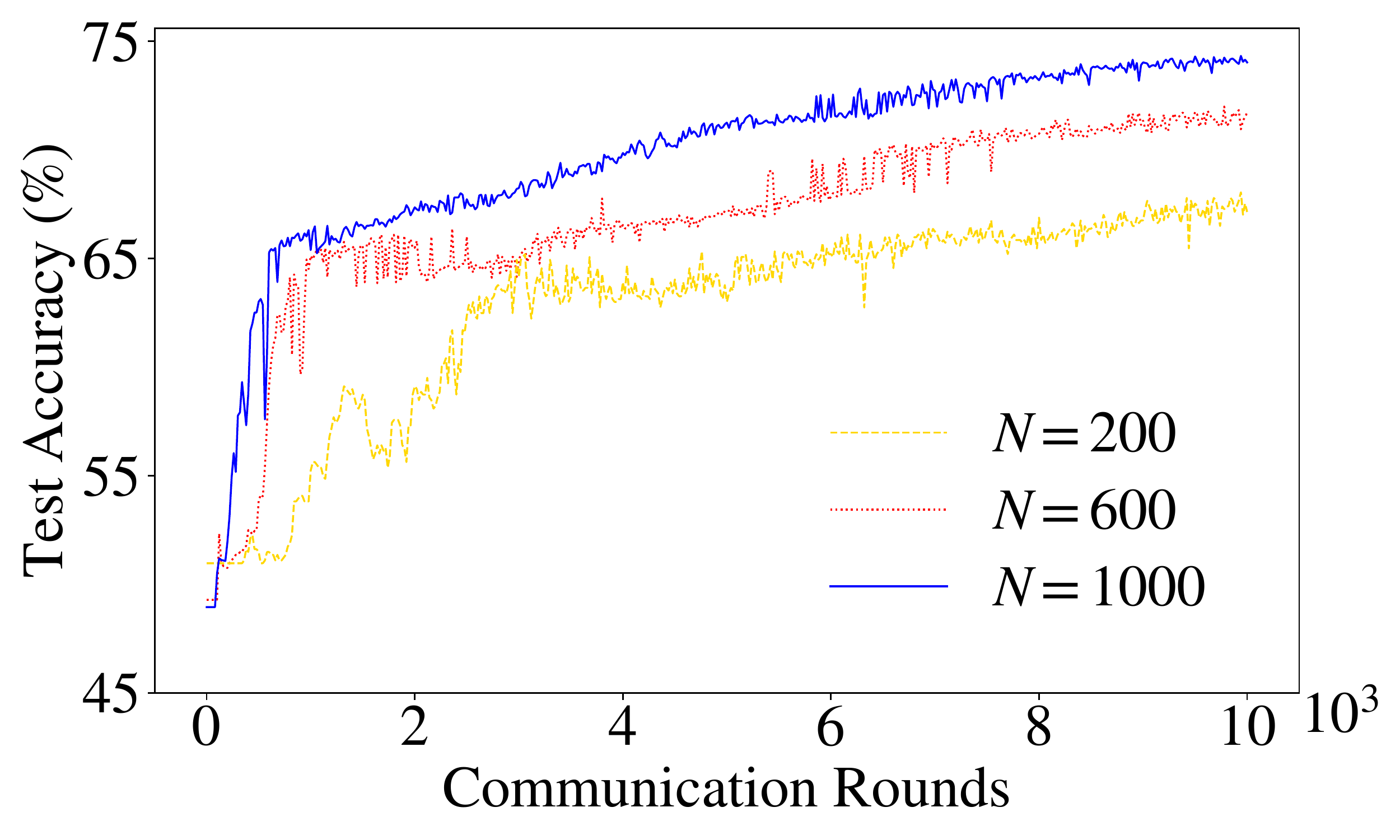}}
	\subfigure[Test accuracy with different $\beta$]{\label{fig:sentiment_beta_ta}\includegraphics[width=0.4\textwidth]{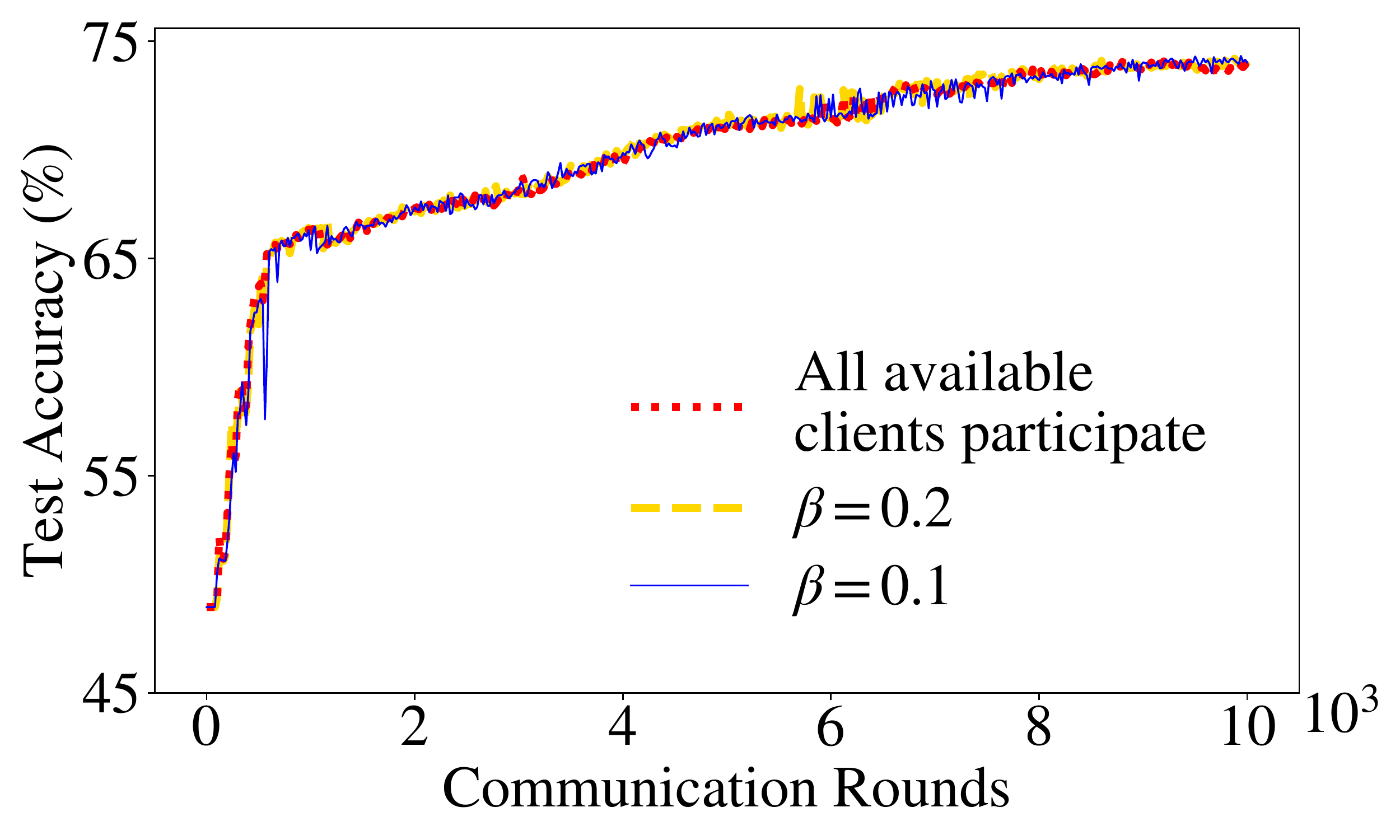}}
	\caption{Test accuracies of FedLaAvg on Sentiment140 dataset
	by varying the total number of clients $N$ and the proportion of
	selected clients $\beta$.}
	\label{fig:sentiment_self_test-acc(real)}
\end{figure*}

\section{Supplementary Notes for the Experiment Environment}
The MNIST dataset is available from \url{http://yann.lecun.com/exdb/mnist/}.
The Sentiment140 dataset is available from \url{http://help.sentiment140.com/for-students}.
The pretrained GloVe embeddings can be downloaded from 
\url{http://nlp.stanford.edu/data/glove.twitter.27B.zip}.
In addition, experiments are conducted on machines with operating system Ubuntu~18.04.3, CUDA version 10.1, and one NVIDIA GeForce RTX 2080Ti GPU. 
The average runtime on our machine is approximately 10 hours per experiment.

\end{document}